\documentclass[preprint,10pt,3p]{elsarticle}
\usepackage{geometry}
\usepackage{graphics} 
\usepackage{epsfig} 
\usepackage{times} 
\usepackage{amsmath} 
\usepackage{amssymb}  
\usepackage{epstopdf}
\usepackage[table]{xcolor} 
\usepackage{caption}
\usepackage{verbatim}
\usepackage{cases}
\allowdisplaybreaks[4]
\usepackage{enumerate}
\usepackage{booktabs}
\usepackage{balance}
\usepackage{mathbbol}
\usepackage{multirow}
\usepackage{adjustbox}

\usepackage{algorithm}
\usepackage{algorithmicx}
\usepackage{algpseudocode}
\usepackage{mathrsfs}
\usepackage{threeparttable}
\usepackage[draft]{todonotes}
\usepackage{makecell}

\usepackage{enumitem}

\usepackage{booktabs} 

\usepackage{multirow}

\usepackage[colorlinks=false,      
linkcolor=black,      
citecolor=black,      
filecolor=black,      
urlcolor=blue]{hyperref}

\newtheorem{definition}{Definition}
\newtheorem{proof}{Proof}
\newtheorem{theorem}{Theorem}

\newtheorem{remark}{Remark}
\newtheorem{lemma}{Lemma}
\newtheorem{corollary}{Corollary}

\def\x{{\bf x}}

\def\0{{\bf 0}}
\def\1{{\bf 1}}

\usepackage{wrapfig}

 \biboptions{authoryear}

\begin{document}
\begin{frontmatter}

\title{DriveVLM-RL: Neuroscience-Inspired Reinforcement Learning with Vision-Language Models for Safe and Deployable Autonomous Driving}

\author[a]{Zilin Huang}
\author[a]{Zihao Sheng}
\author[a]{Zhengyang Wan}
\author[b]{Yansong Qu}
\author[a]{Junwei You}
\author[c]{Sicong Jiang}
\author[a]{Sikai Chen\corref{cor1}}
\ead{sikai.chen@wisc.edu}

\cortext[cor1]{Corresponding author: Sikai Chen. }

\address[a]{Department of Civil and Environmental Engineering, University of Wisconsin-Madison, Madison, WI, 53706, USA}
\address[b]{Lyles School of Civil and Construction Engineering, Purdue University, West Lafayette, IN 47907, USA} 
\address[c]{Department of Civil Engineering, McGill University, Montreal, QC, H3A 0C3, Canada}

\begin{abstract}
Ensuring safe decision-making in autonomous 
vehicles remains a fundamental challenge despite 
rapid advances in end-to-end learning approaches. 
Traditional reinforcement learning~(RL) methods 
rely on manually engineered rewards or sparse 
collision signals, which fail to capture the 
rich contextual understanding required for safe 
driving and make unsafe exploration unavoidable 
in real-world settings. Recent vision-language 
models~(VLMs) offer promising semantic 
understanding capabilities; however, their high 
inference latency and susceptibility to 
hallucination hinder direct application to 
real-time vehicle control. To address these 
limitations, this paper proposes 
\textbf{DriveVLM-RL}, a neuroscience-inspired 
framework that integrates VLMs into RL through 
a dual-pathway architecture for safe and 
deployable autonomous driving. Inspired by the 
human brain's habitual and deliberative visual 
processing, DriveVLM-RL decomposes semantic 
reward learning into a \textbf{Static Pathway} 
for continuous spatial safety assessment via 
CLIP-based contrasting language goals, and a 
\textbf{Dynamic Pathway} for attention-gated 
multi-frame semantic risk reasoning via a 
lightweight detection model and large 
VLM~(LVLM). A hierarchical reward synthesis 
mechanism fuses these signals with vehicle state 
information, while an asynchronous training 
pipeline decouples expensive LVLM inference from 
environment interaction. Critically, all VLM 
components operate exclusively during offline 
training and are completely removed at 
deployment, eliminating inference latency at 
test time. Extensive experiments in the CARLA 
simulator demonstrate that DriveVLM-RL
significantly outperforms state-of-the-art
baselines in collision avoidance and task success,
attaining the highest success rate while reducing
collision severity from 10.09 to 1.75~km/h relative
to the strongest VLM-based baseline. Notably, even
under extreme ``no-reward-after-collision'' settings
where explicit collision penalties are removed,
DriveVLM-RL maintains low collision rates
through semantic risk reasoning alone,
incurring the lowest rate of collisions with
vulnerable road users among all compared methods.
Under distribution
shift across unseen towns and traffic densities, its
safety advantage persists most clearly, yielding the
lowest collision severity in every out-of-distribution
town. These results demonstrate 
that DriveVLM-RL provides a practical paradigm 
for integrating foundation models into autonomous 
driving without compromising real-time 
feasibility. The demo video, code, and model
checkpoints are available at:
\href{https://zilin-huang.github.io/DriveVLM-RL-website/}
{\textcolor{magenta}{https://zilin-huang.github.io/DriveVLM-RL-website/}}.
\end{abstract}
		
\begin{keyword}
Autonomous Driving, Vision-Language Models, Reinforcement Learning, Reward Design, Real-World Deployment
\end{keyword}

\end{frontmatter}

\section{Introduction}

The deployment of autonomous vehicles (AVs) in real-world traffic environments has accelerated rapidly in recent years, transitioning from controlled testing scenarios to large-scale urban operations. In late 2025, Tesla released version 14 of its Full Self-Driving (FSD) system, representing a significant advancement in end-to-end neural network-based driving pipelines \citep{tesla2025fsd}. Currently, Waymo has expanded its robotaxi services in multiple U.S. cities \citep{kolodny2025waymo}, while in China, Baidu's Apollo Go scaled its operations to 22 cities and launched international deployments in Dubai and Abu Dhabi \citep{carnewschina2025apollo}.  However, despite these advances, ensuring safe and reliable decision-making in open-world environments remains a fundamental challenge that directly impacts public trust and regulatory acceptance \citep{feng2023dense,jiao2025evadrive,luo2025mtrdrive}. In complex traffic scenarios that involve diverse road users, ambiguous intent, and long-tail events, autonomous driving systems must consider semantic risks beyond purely geometric perception \citep{han2025dme}. 

Two dominant paradigms have emerged for end-to-end learning-based AV decision-making: imitation learning (IL) and reinforcement learning (RL), as shown in  Fig.~\ref{fig1} (a) \citep{huang2024human}. IL methods learn driving policies by mimicking expert demonstrations, offering simplicity and benefiting from abundant naturalistic driving data. Yet, IL suffers from well-known limitations \citep{ross2011reduction,de2019causal}: (1) distribution shift, where the learned policy encounters states not represented in the training data and fails to recover; (2) bounded performance, meaning the policy cannot surpass the capability of the demonstrator; and (3) causal confusion, in which spurious correlations in demonstrations lead to brittle or unsafe behaviors. These limitations are especially problematic for safety-critical scenarios that rarely appear in naturalistic data but are essential for robust autonomous driving. In contrast, RL offers a principled alternative by enabling agents to learn through trial-and-error interaction with the environment, potentially discovering novel strategies that surpass human performance \citep{sutton1998reinforcement}. RL has achieved superhuman capabilities in domains ranging from board games to robotic manipulation \citep{mnih2015human}. For autonomous driving, RL holds the promise of learning adaptive policies that can handle rare but critical scenarios through closed-loop exploration \citep{huang2025pe,he2024trustworthy,aradi2020survey}. However, RL's effectiveness critically depends on the design of reward functions that accurately encode desired driving behaviors.

\begin{figure*}[t]
\centering
  \includegraphics[width=0.99999\textwidth]{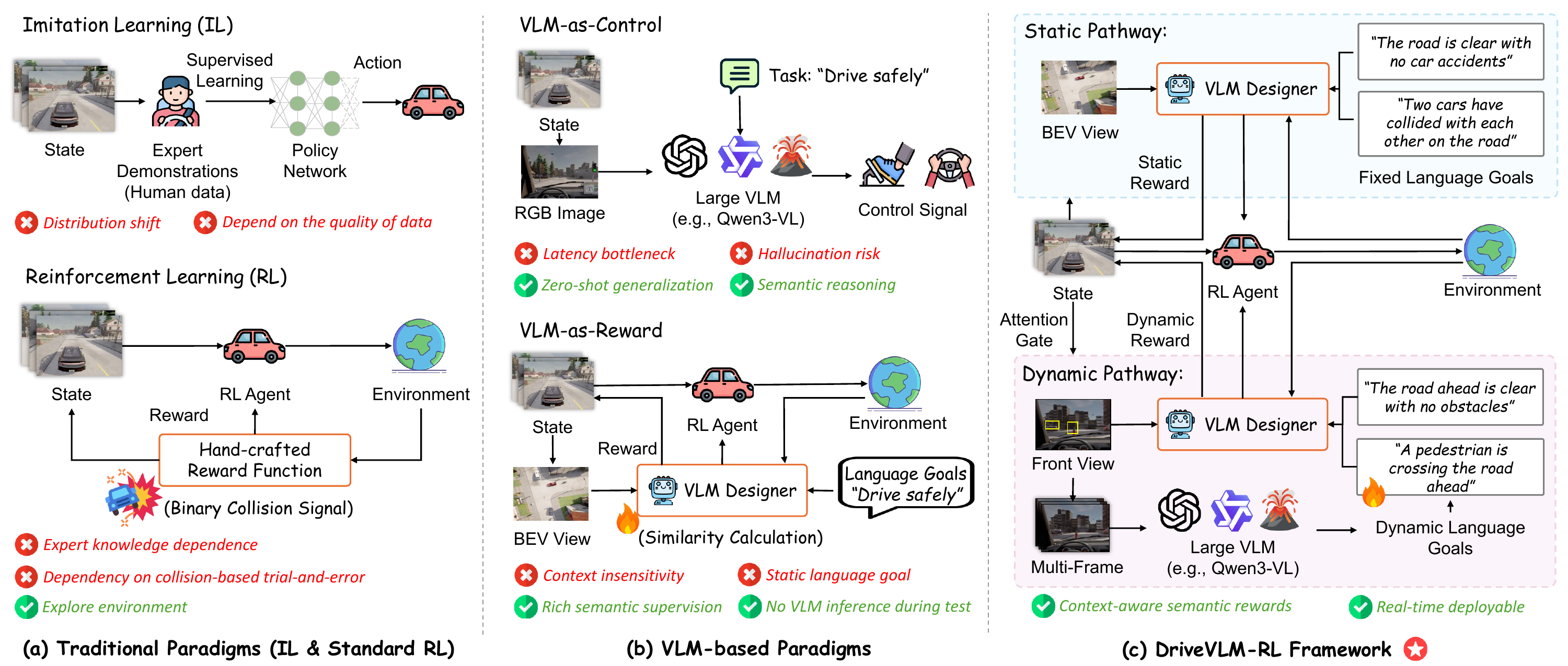}
  \caption{
Comparative learning paradigms for autonomous driving. (a) Traditional policy learning approaches, including IL and RL, which rely on expert demonstrations or hand-crafted rewards. (b) Foundation model-based approaches, including VLM-as-Control and VLM-as-Reward paradigms. (c) The proposed DriveVLM-RL framework, which integrates a dual-pathway architecture to enable dynamic, context-aware semantic rewards while remaining real-time deployable.
}
  \label{fig1}
\end{figure*}

Reward function design plays a pivotal role in RL, as it directly shapes the agent's behavior and determines the quality of the learned driving policy \citep{sutton1998reinforcement,lu2025discovery}. In autonomous driving, reward functions are generally hand-crafted based on expert intuition, typically combining sub-objectives such as speed maintenance, lane tracking, and collision avoidance. Prior studies \citep{knox2023reward,abouelazm2024review} highlight several inherent limitations, including dependence on expert knowledge, conflicting objectives, and poor generalization. More critically, traditional reward functions rely on binary collision signals to learn safety, meaning the agent must physically experience crashes to recognize dangerous situations. This creates a fundamental barrier to real-world deployment: \textit{allowing an AV to learn from actual collisions is unacceptable} \citep{huang2025pe,wu2024recent,garcia2015comprehensive}. Moreover, safe driving involves nuanced judgments that are difficult to encode in hand-crafted reward functions \citep{han2024autoreward,zhou2024human}. For instance, a pedestrian standing on the sidewalk versus stepping onto the road requires fundamentally different responses, yet both scenarios may appear similar in terms of distance-based metrics. Human drivers rely on rich semantic understanding, including intent, context, and social norms, which cannot be represented by simple geometric or physics-based reward terms.

The emergence of foundation models, such as large language models (LLMs) and vision-language models (VLMs), provide a promising alternative for addressing this limitation \citep{cui2024survey,jiang2025survey,hazra2024revolve, sheng2026curricuvlm, pang2026large}. By jointly reasoning over visual observations and natural language, VLMs can infer high-level semantic concepts such as risk, intent, and social context, which are difficult to encode through conventional reward engineering. Several recent studies have explored ``VLM-based control" paradigms (Fig.~\ref{fig1} (b), top), where VLMs directly map visual inputs to driving actions or generate real-time control commands \citep{tian2025drivevlm, qian2025agentthink,zhou2025autovla,you2026v2x}. However, these  paradigms suffer from two fundamental issues: (1) the computational latency of 500–2000 ms per inference far exceeds the 20–100 ms control cycles required for safe vehicle operation \citep{cui2024survey,zhou2024vision}. (2) VLMs are susceptible to hallucination, producing outputs that may be inconsistent with the visual input, which could lead to catastrophic failures when used directly for vehicle control \citep{xie2025vlms,meng2025your}. Recently, researchers have proposed an alternative paradigm: rather than using VLMs for direct control,  VLMs are integrated with RL to provide semantic understanding that shapes reward signals and guides policy learning  (Fig.~\ref{fig1} (b), bottom). This allows policies to leverage VLM semantics while avoiding the latency and reliability constraints of real-time control. 

This ``VLM-as-Reward" paradigm has shown promising results in robotic domain \citep{lee2026roboreward}, such as VLM-SR \citep{baumli2023vision}, RoboCLIP \citep{sontakke2023roboclip}, DriveMind \citep{wasif2025drivemind}, and VLM-RM \citep{rocamonde2023vision}, which leverage contrastive language-image pre-training (CLIP)'s semantic embeddings to measure goal achievement from visual observations. However, unlike robotic domain where goals can be precisely specified, driving objectives expressed in natural language (e.g., ``drive safely") are inherently ambiguous and difficult to translate into dense, informative reward signals. To address this ambiguity, LORD \citep{ye2025lord} proposed using negative language goals to describe dangerous states. Our previous work \citep{huang2025vlm} further proposed contrasting language goals (CLGs) that leverage both positive and negative descriptions. Despite these advances, several challenges remain unresolved: (1) Most existing methods use CLIP to compute similarity between observations and fixed language goals. Such static formulations lack contextual awareness and cannot capture the dynamic, evolving, and temporally dependent nature of traffic risk. (2) A natural solution is to adopt large VLM (LVLM), such as GPT \citep{achiam2023gpt} or Qwen-VL \citep{yang2025qwen3}, to perform multi-frame semantic reasoning. However, invoking LVLM for every frame during RL training is computationally prohibitive, creating severe scalability bottlenecks when millions of environment interactions are required. 

To address these challenges, we draw inspiration from the human brain's visual processing architecture, which has evolved to efficiently balance routine perception and context-dependent reasoning. As shown in Fig. \ref{fig2}, during routine driving, the brain operates in an efficient habitual mode via sensorimotor loops mediated by the parietal cortex \citep{goodale1992separate}. When safety-critical situations occur, such as the sudden appearance of a pedestrian, the brain's selective attention network rapidly engages to redirect cognitive focus \citep{corbetta2002control}, triggering higher-order semantic reasoning in the prefrontal cortex \citep{miller2001integrative}, such as ``A pedestrian is stepping into the roadway; I must slow down and prepare to stop''. The critical insight is that the brain does not perform deep, energy-intensive analysis on every visual frame; instead, it employs an attention gate \citep{desimone1995neural} to determine when to invoke slower but more powerful contextual reasoning processes. Inspired by the brain's dual-pathway cognitive architecture, we propose \textbf{DriveVLM-RL}, a neuroscience-inspired cognitive framework that integrates VLMs into RL for safe and deployable autonomous driving.  As illustrated in  Fig.~\ref{fig1} (c), DriveVLM-RL fundamentally treats VLMs as semantic teachers rather than real-time decision-makers, alleviating the reliance of traditional RL on collision-based learning while avoiding the latency and hallucination issues of ``VLM-as-Control''. 

The main contributions of this work are summarized as follows:

\begin{itemize}
    \item We propose DriveVLM-RL, to the best of our knowledge the first framework to explicitly integrate the human brain's dual-pathway cognitive architecture into the VLM-as-Reward paradigm for autonomous driving. The \textbf{Static Pathway} (using CLIP and fixed language goals) simulates the brain's dorsal stream for continuous spatial awareness, while the \textbf{Dynamic Pathway} (using attention-gated LVLM reasoning) simulates the brain's selective attention-prefrontal cortex circuit for higher-order semantic risk reasoning. This hierarchical design enables the RL agent to learn complex safety maneuvers through semantic understanding, alleviating the fundamental limitations of traditional collision-based reward functions.
    \item We design a novel \textbf{attention gating mechanism} that simulates the brain's selective attention to address the computational complexity and high inference latency inherent in VLM-as-Reward paradigms. A lightweight perception model first analyzes foreground scenes, effectively filtering routine driving frames and triggering computationally expensive LVLM inference only when safety-critical objects are detected. Critically, the VLM is invoked exclusively during the RL training phase; once training is complete, no VLM calls are required during deployment, eliminating the infeasible latency that plagues VLM-as-Control methods.
    \item We introduce a hierarchical reward synthesis mechanism that fuses static visual-language similarity, dynamic multi-frame semantic reasoning, and vehicle state information into dense and proactive reward signals. Furthermore, we design an asynchronous training pipeline that decouples expensive VLM inference from environment interaction, enabling scalable learning despite the high computational cost of large models. Critically, all VLM components operate exclusively during offline training and are completely removed during deployment, allowing the final driving policy to execute with low latency.
    \item Extensive experiments conducted in CARLA simulator \citep{dosovitskiy2017carla} demonstrate that DriveVLM-RL significantly improves driving safety and task success, and that its safety advantage persists under distribution shift to unseen towns and traffic densities. Remarkably, even under extreme ``no-reward-after-collision'' settings where explicit collision penalties are removed, agents trained with DriveVLM-RL still learn to avoid collisions through semantic risk reasoning alone. These results indicate that DriveVLM-RL provides a practical and generalizable paradigm for leveraging foundation models to train autonomous driving policies that are both safe and deployable in real-world systems.
\end{itemize}

The remainder of this paper is organized as follows. Section \ref{Preliminaries} presents the preliminaries on RL formulation and the VLM-as-Reward paradigm. Section \ref{Framework: DriveVLM-RL} details the proposed DriveVLM-RL framework. Section \ref{Experiments and Results} presents the experimental setup and results. Section \ref{Conclusion} concludes the paper and outlines future research directions.

\begin{figure*}[!t]
\centering
  \includegraphics[width=0.99999\textwidth]{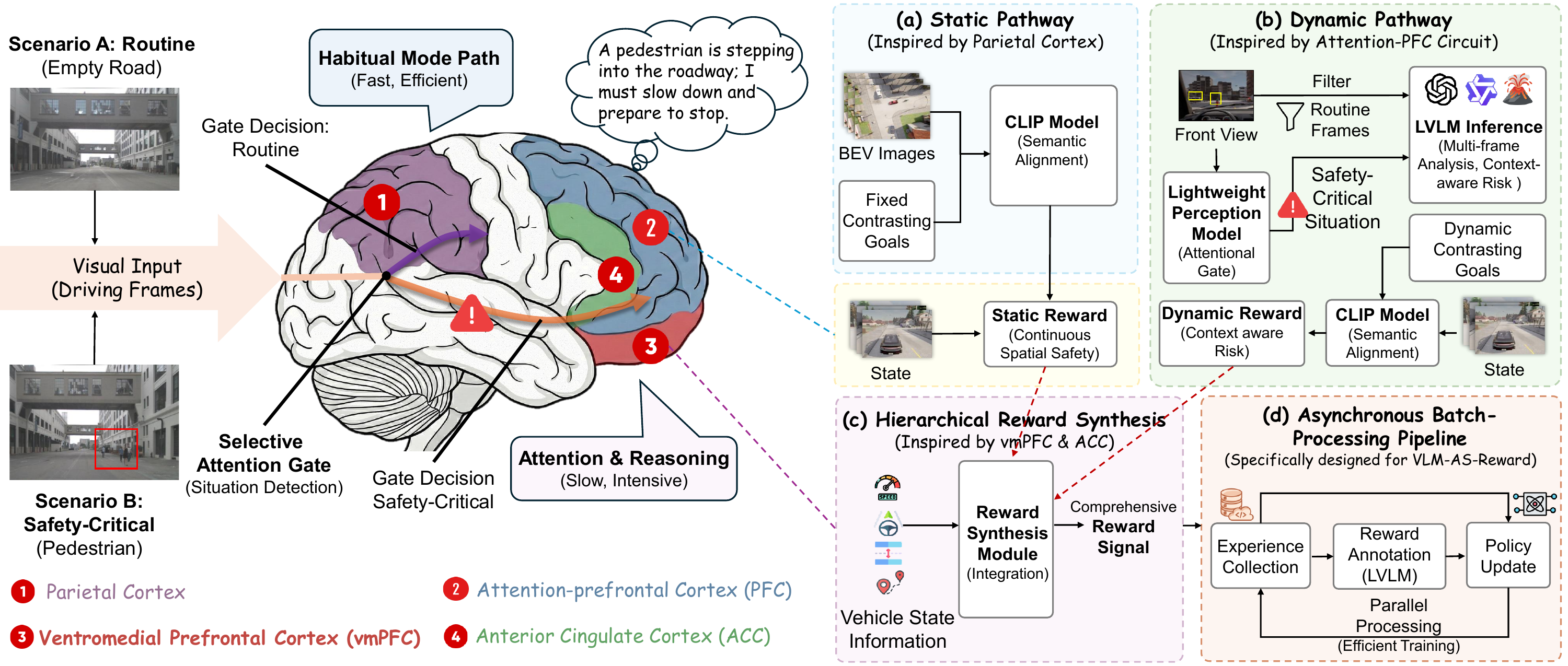}
  \caption{Neuroscience-inspired motivation of DriveVLM-RL. The framework is inspired by the brain's habitual and deliberative visual processing: routine scenes are handled by a fast pathway, while safety-critical situations trigger attention and higher-level semantic reasoning, motivating a dual-pathway reward learning design.}
  \label{fig2} 
\end{figure*}

\section{Preliminaries}
\label{Preliminaries}
\subsection{Markov Decision Process Formulation}
We model the autonomous driving decision-making task as a Partially Observable Markov Decision Process (POMDP) \citep{kaelbling1998planning}, defined by the tuple $(\mathcal{S}, \mathcal{A}, \mathcal{T}, \mathcal{O}, R, \phi, \gamma, d_0)$. Here, $\mathcal{S}$ denotes the state space, $\mathcal{A}$ is the action space, $\mathcal{T}(s' \mid s, a)$ is the state transition function, $R(s,a)$ is the reward function, $\mathcal{O}$ is the observation space, $\phi(o \mid s)$ is the observation emission function, $\gamma \in [0,1)$ is the discount factor, and $d_0(s)$ is the initial state distribution. At each timestep $t$, the agent receives an observation $o_t \in \mathcal{O}$ from the environment and selects an action $a_t \in \mathcal{A}$ according to its policy $\pi(a_t \mid o_t)$. In our framework, the observation $o_t$ comprises bird's-eye-view (BEV) representations and front-view camera images, while the action $
a_t = (a_t^{\text{steer}}, a_t^{\text{throttle}})$ consists of a continuous steering command and a combined throttle/brake command. The environment transitions to a new state $s_{t+1} \sim \mathcal{T}(\cdot \mid s_t, a_t)$, and the agent receives a scalar reward $r_t = R(o_t, a_t)$, where we use observation $o_t$ in place of the latent state $s_t$ due to partial observability. The learning objective is to find an optimal policy $\pi^*$ that maximizes the expected discounted return:
$
\pi^* = \arg\max_{\pi} G(\pi) = \arg\max_{\pi} \mathbb{E}_{\pi}\left[ \sum_{t=0}^{T} \gamma^t r_t \right]$.

\subsection{VLM-as-Reward Paradigm}
VLMs are models capable of jointly processing language inputs $l \in \mathcal{L}^{\leq n}$ and visual inputs $i \in \mathcal{I}^{\leq m}$, where $\mathcal{L}$ denotes a finite vocabulary and $\mathcal{I}$ the space of RGB images. A prominent class of VLMs is based on CLIP~\citep{radford2021learning}. CLIP consists of a language encoder $f_L : \mathcal{L}^{\leq n} \to \mathcal{E}$ and an image encoder $f_I : \mathcal{I} \to \mathcal{E}$, both mapping inputs into a shared embedding space $\mathcal{E} \subseteq \mathbb{R}^d$. These encoders are jointly trained via contrastive learning on large-scale image-caption pairs, minimizing the cosine distance for semantically aligned pairs while maximizing it for mismatched pairs. The alignment capability of CLIP enables the VLM-as-Reward paradigm, where semantic similarity between visual observations and language goals serves as a reward signal for RL training. Given an image encoder $f_I$, a language encoder $f_L$, a visual observation $o_t$, and a language goal $l$, the VLM-based reward is usually defined as $
r_t^{\text{VLM}} = \mathrm{sim}(f_I(o_t), f_L(l))$, where $\mathrm{sim}(\cdot, \cdot)$ denotes cosine similarity. This formulation provides the technical foundation for the VLM-as-Reward paradigm.

\subsection{Problem Statement}
A key challenge is designing an effective reward function $R(o_t, a_t)$ that guides the agent toward safe and efficient behaviors. Traditional reward engineering requires manual specification and extensive tuning of multiple sub-objectives, which is labor-intensive, error-prone, and difficult to generalize across diverse driving scenarios. The VLM-as-Reward paradigm offers a promising alternative by leveraging VLMs to provide semantically grounded reward signals. Ideally, we seek a reward function of the form:
\begin{equation}
R_{\text{VLM}}(o_t) = \Phi(l, o_t, c;\, \theta_{\text{VLM}})
\label{eq1}
\end{equation}
where $l$ is a linguistic goal specification, $o_t$ is the current observation, $c \in \mathcal{C}$ is optional contextual information (e.g., multi-frame history or scene description), and $\theta_{\text{VLM}}$ denotes the frozen VLM parameters. When $c = \emptyset$, the formulation reduces to the standard CLIP-based reward $r_t^{\text{VLM}} = \mathrm{sim}(f_I(o_t), f_L(l))$.

Most existing methods employ CLIP with fixed language goals (negative or positive) to compute reward signals based on text--image similarity. However, traffic scenes involve rich temporal and contextual variations, and this setting cannot capture factors such as pedestrian trajectory, motion intent, or evolving environmental conditions. A natural extension is to use LVLM to generate dynamic language goals conditioned on the current scene. Given an image $o_t$ and a text prompt $p$, an LVLM generates a response $y = \text{LVLM}(o_t, p)$. While LVLM can analyze driving scenarios and provide nuanced semantic assessments, they incur substantially higher inference latency compared to CLIP. This latency far exceeds the requirements for real-time RL training, where millions of environment steps must be evaluated, making per-frame LVLM inference computationally infeasible. This motivates the attention-gated dual-pathway 
design of DriveVLM-RL, detailed in Section~\ref{Framework: DriveVLM-RL}.

\section{Framework: DriveVLM-RL}
\label{Framework: DriveVLM-RL}

\begin{figure*}[t]
\centering
  \includegraphics[width=1\textwidth]{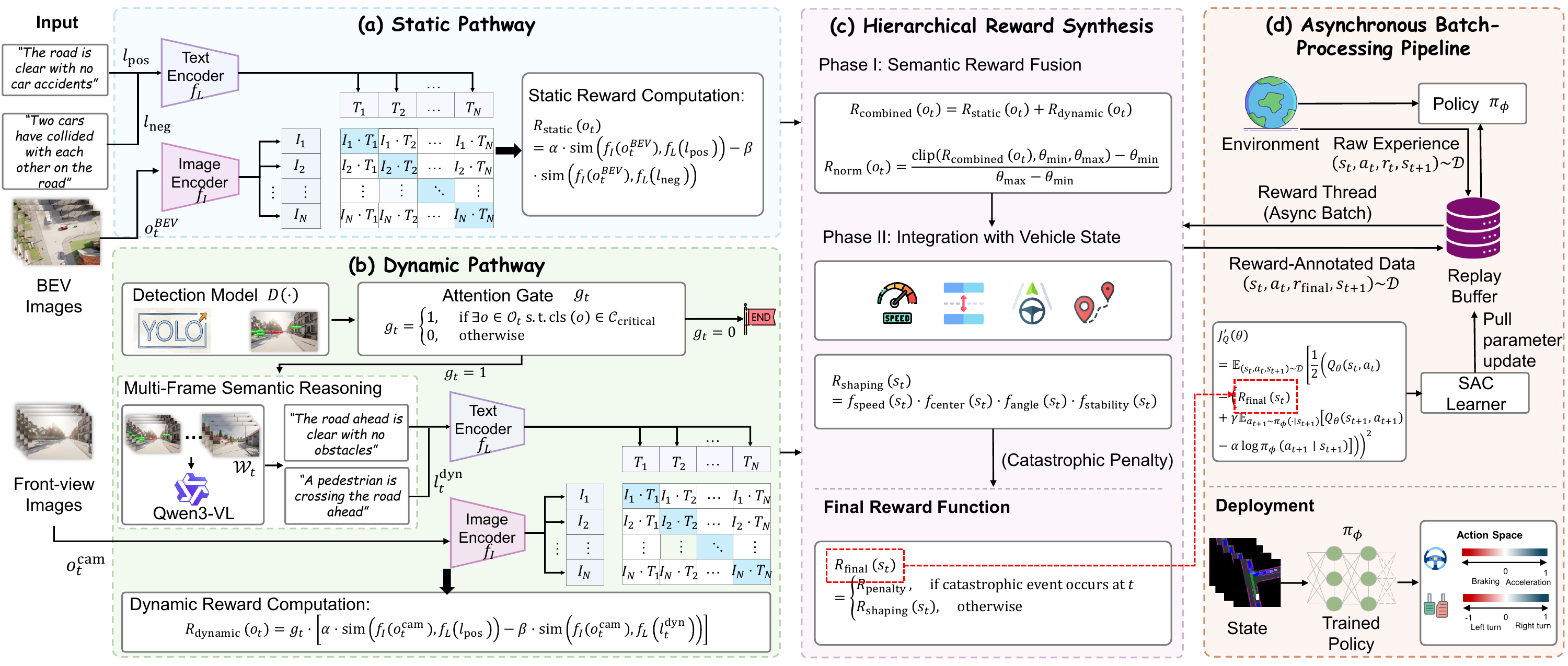}
  \caption{Overview of DriveVLM-RL. (a) Static Pathway: CLIP-based semantic alignment with contrasting language goals to provide continuous spatial safety assessment. (b) Dynamic Pathway: an attention-gated mechanism triggers multi-frame LVLM reasoning only in safety-critical situations. (c) Hierarchical reward synthesis: static and dynamic semantic signals are fused and integrated with vehicle-state factors to produce the final shaping reward. (d) Asynchronous training pipeline: reward computation is decoupled from environment interaction and policy learning.}
  \label{fig3} 
\end{figure*}

\subsection{Overview}
The DriveVLM-RL framework is a neuro-inspired cognitive architecture designed to address the reward design challenge in RL-based autonomous driving while overcoming the semantic and contextual limitations of existing VLM-as-Reward methods. As illustrated in Fig.~\ref{fig2}, the framework comprises four main components: \textbf{(1) Static Pathway.} Simulating the brain's dorsal visual stream (parietal cortex), this pathway utilizes a pre-trained CLIP model to compute semantic alignment between BEV images and fixed CLGs, providing continuous spatial safety assessment. \textbf{(2) Dynamic Pathway.} Simulating the brain's attention-prefrontal cortex (PFC) circuit, this pathway employs a lightweight perception model as an attentional gate to filter routine frames, triggering computationally expensive LVLM inference only when safety-critical situations are detected. The LVLM analyzes multi-frame sequences to generate dynamic, context-aware risk descriptions. \textbf{(3) Hierarchical Reward Synthesis.} Simulating the ventromedial prefrontal cortex (vmPFC) and anterior cingulate cortex (ACC), this module integrates static and dynamic rewards with vehicle state information to produce comprehensive reward signals. \textbf{(4) Asynchronous Batch-Processing Pipeline.} To enable efficient training despite LVLM latency, this pipeline decouples reward computation from environment interaction, allowing parallel processing of experience collection, reward annotation, and policy updates. We describe each component in detail in the following subsections, as shown in Fig.~\ref{fig3}.

\subsection{Static Pathway}
The Static Pathway simulates the parietal cortex (dorsal visual stream), which mediates habitual sensorimotor processing for spatial tasks such as lane keeping and distance maintenance. This pathway generates continuous reward signals based on foundational spatial safety assessment.

\subsubsection{Static Reward Computation}
The input for this pathway is the agent's BEV image $o_t^{\text{BEV}}$, as this top-down representation provides unambiguous spatial relationships without the occlusion inherent in first-person views, analogous to the parietal cortex's integrated spatial map. However, a single abstract goal (e.g., ``drive safely'') is semantically ambiguous and provides weak reward signals~\citep{ye2025lord}. We therefore introduce a CLG formulation~\citep{huang2025vlm} that compares desired and undesired outcomes to produce more discriminative rewards.

\begin{definition}[Static Contrasting Language Goal]
Given the driving task, the Static CLG is a fixed pair $(l_{\text{pos}}, l_{\text{neg}}) \in \mathcal{L}^{\leq n} \times \mathcal{L}^{\leq n}$, where $l_{\text{pos}}$ describes the desired baseline state and $l_{\text{neg}}$ describes the fundamental undesired state. For this pathway, we define:
\begin{itemize}
    \item Positive Goal ($l_{\text{pos}}$): ``The road is clear with no car accidents.''
    \item Negative Goal ($l_{\text{neg}}$): ``Two cars have collided with each other on the road.''
\end{itemize}
\label{definition1}
\end{definition}

Based on this definition, we can formalize the static reward function. Specifically, we employ the CLIP model \citep{radford2021learning} as the foundation for semantic reward computation.

\begin{definition}[Static Reward]
Given the pre-trained CLIP model with image encoder $f_I: \mathcal{I} \rightarrow \mathcal{E}$ and language encoder $f_L: \mathcal{L}^{\leq n} \rightarrow \mathcal{E}$ mapping into a shared latent space $\mathcal{E} \subseteq \mathbb{R}^d$, the static CLG pair $(l_{\text{pos}}, l_{\text{neg}})$, the static reward for observation $o_t^{\text{BEV}}$ is:
\begin{equation}
R_{\text{static}}(o_t) 
= \alpha \cdot \mathrm{sim}(f_I(o_t^{\text{BEV}}), f_L(l_{\text{pos}}))
- \beta \cdot \mathrm{sim}(f_I(o_t^{\text{BEV}}), f_L(l_{\text{neg}}))
\label{eq2}
\end{equation}
where $\mathrm{sim}(\cdot, \cdot)$ denotes cosine similarity between embeddings:
\begin{equation}
\mathrm{sim}(v_1, v_2) = \frac{v_1^\top v_2}{\|v_1\| \, \|v_2\|}
\label{eq3}
\end{equation}
and $\alpha, \beta > 0$ are weighting factors with $\alpha + \beta = 1$. For simplicity, we set $\alpha = \beta = 0.5$ in this work. This formulation yields a continuous reward $R_{\text{static}} \in [-1, 1]$, encouraging states semantically similar to $l_{\text{pos}}$ while penalizing those similar to $l_{\text{neg}}$.
\label{definition2}
\end{definition}

\subsubsection{Theoretical Properties}

We then establish theoretical guarantees for this static reward formulation.

\begin{lemma}[Boundedness]
\label{lemma:Boundedness}
For any observation $o_t$ and CLG pair $(l_{\text{pos}}, l_{\text{neg}})$, the static reward is bounded: $
R_{\text{static}}(o_t) \in [-1, 1]$.
\end{lemma}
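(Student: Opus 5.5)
The plan is to reduce the claim to the elementary range of cosine similarity and then use the convexity constraint $\alpha+\beta=1$ from Definition~\ref{definition2}. First I would fix an observation $o_t$ and abbreviate $v=f_I(o_t^{\text{BEV}})$, $u_+=f_L(l_{\text{pos}})$ and $u_-=f_L(l_{\text{neg}})$. I would assume, as is implicit in (\ref{eq3}), that these embeddings are nonzero. This holds for CLIP in practice, because CLIP normalizes its embeddings. Under that assumption the cosine similarity is well defined.

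Next I would apply the Cauchy--Schwarz inequality $|v^\top u|\le \|v\|\,\|u\|$ to each of the two similarity terms. This gives $s_+:=\mathrm{sim}(v,u_+)\in[-1,1]$ and $s_-:=\mathrm{sim}(v,u_-)\in[-1,1]$.

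I would then bound $R_{\text{static}}(o_t)=\alpha s_+-\beta s_-$ term by term. Since $\alpha,\beta>0$, we have $\alpha s_+\le\alpha$ and $-\beta s_-\le\beta$, so $R_{\text{static}}(o_t)\le\alpha+\beta=1$. In the same way $\alpha s_+\ge-\alpha$ and $-\beta s_-\ge-\beta$, so $R_{\text{static}}(o_t)\ge-(\alpha+\beta)=-1$. Equivalently, $|R_{\text{static}}|\le\alpha|s_+|+\beta|s_-|\le\alpha+\beta=1$ by the triangle inequality.

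With the paper's choice $\alpha=\beta=\tfrac12$, the reward reduces to $\tfrac12(s_+-s_-)$, which is immediately in $[-1,1]$.

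There is no real obstacle here. The only delicate point is the implicit nonzero-norm assumption needed for (\ref{eq3}) to be defined, and I would state it explicitly. If desired, I would also remark that the bound is attained only when $v$ is parallel to $u_+$ and antiparallel to $u_-$, which requires $u_+$ and $u_-$ to be antiparallel.
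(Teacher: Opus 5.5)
Your proposal is correct and follows essentially the same route as the paper's proof: Cauchy--Schwarz places each cosine similarity in $[-1,1]$, and the termwise bound $\alpha s^+ - \beta s^- \in [-(\alpha+\beta), \alpha+\beta] = [-1,1]$ then follows from $\alpha,\beta>0$ and $\alpha+\beta=1$. Your remark on attainment is more careful than the paper, which calls the endpoints ``achieved'' without noting that this would require $f_L(l_{\text{pos}})$ and $f_L(l_{\text{neg}})$ to be antiparallel.
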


\begin{lemma}[Discriminability]
\label{lemma:Discriminability}
The CLG formulation provides strictly greater reward discrimination than single-goal formulations. Specifically, for observations $o_1, o_2$ where $
\mathrm{sim}(f_I(o_1), f_L(l_{\text{pos}})) = \mathrm{sim}(f_I(o_2), f_L(l_{\text{pos}}))$
but $\mathrm{sim}(f_I(o_1), f_L(l_{\text{neg}})) \neq \mathrm{sim}(f_I(o_2), f_L(l_{\text{neg}}))$, we have $R_{\text{static}}(o_1) \neq R_{\text{static}}(o_2)$, even when single-goal similarity fails to distinguish the two states.
\end{lemma}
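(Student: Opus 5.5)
The argument is a direct computation from Definition~\ref{definition2}. To shorten notation, write $p_i = \mathrm{sim}(f_I(o_i^{\text{BEV}}), f_L(l_{\text{pos}}))$ and $n_i = \mathrm{sim}(f_I(o_i^{\text{BEV}}), f_L(l_{\text{neg}}))$ for $i \in \{1,2\}$. By Eq.~\eqref{eq2},
\begin{equation*}
R_{\text{static}}(o_1) - R_{\text{static}}(o_2) = \alpha\,(p_1 - p_2) - \beta\,(n_1 - n_2).
\end{equation*}
The hypothesis gives $p_1 = p_2$, so the first term vanishes. The difference therefore reduces to $-\beta\,(n_1 - n_2)$.

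Next I use the two remaining hypotheses. Definition~\ref{definition2} fixes $\beta > 0$, with $\beta = 0.5$ in the paper, and the hypothesis gives $n_1 \neq n_2$. A product of a nonzero scalar and a nonzero real number is nonzero, so $R_{\text{static}}(o_1) \neq R_{\text{static}}(o_2)$. In fact $|R_{\text{static}}(o_1) - R_{\text{static}}(o_2)| = \beta\,|n_1 - n_2|$, which gives a quantitative form of the separation.

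For the comparison with single-goal formulations, I take the single-goal reward to be $r = \mathrm{sim}(f_I(o), f_L(l_{\text{pos}}))$, the standard reward with $c=\emptyset$ from the Problem Statement, possibly rescaled by a constant. Under that reward, $r(o_1) = p_1 = p_2 = r(o_2)$, so it cannot separate the two states, while the CLG reward does. This settles the claim for every pair covered by the hypothesis.

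The only real obstacle is interpreting ``strictly greater discrimination'' correctly. I would state it as an inclusion of distinguishing power. Any pair separated by the positive-only reward has $p_1 \neq p_2$, but that pair need not be separated by $R_{\text{static}}$ in general, because the two terms could cancel. So I would not claim a full inclusion. Instead I would frame strictness as exhibiting the class of pairs described in the lemma: pairs that are indistinguishable under the positive-only reward but distinguishable under the CLG reward. The lemma asserts exactly this class, so the computation above suffices. If a symmetric claim against a negative-only goal were wanted, the same computation with the roles of $\alpha$ and $\beta$ swapped would handle it.
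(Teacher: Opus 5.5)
Your proof is correct and matches the paper's own argument: both write the difference as $\alpha(p_1-p_2)-\beta(n_1-n_2)$, use the hypothesis $p_1=p_2$ to kill the first term, conclude from $\beta>0$ and $n_1\neq n_2$, and note that the positive-only reward gives $r(o_1)=r(o_2)$. Your caveat is also accurate and a bit more careful than the paper's wording: since the two terms can cancel, ``strictly greater discrimination'' is established only for the class of pairs described in the lemma, not as a full inclusion of distinguishing power.
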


Building on these properties, we establish that the CLG formulation induces a well-defined preference ordering over states.

\begin{theorem}[Reward-Induced State Ordering]
\label{theorem:Reward-Induced State Ordering}
Let $\mathcal{S}$ be the state space and define the binary relation $\succeq$ on $\mathcal{S}$ such that $s_1 \succeq s_2$ if and only if $R_{\text{static}}(s_1) \geq R_{\text{static}}(s_2)$. Then $\succeq$ is a total preorder (reflexive, transitive, and total), inducing a consistent preference ranking over states aligned with the semantic safety specification $(l_{\text{pos}}, l_{\text{neg}})$.
\end{theorem}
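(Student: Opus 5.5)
The plan is to observe that $\succeq$ is the pullback of the usual order $\geq$ on $\mathbb{R}$ along the map $R_{\text{static}}:\mathcal{S}\to\mathbb{R}$. Any relation obtained this way from a total order on the codomain is a total preorder. The only preliminary point is that $R_{\text{static}}(s)$ is a well-defined real number for every $s\in\mathcal{S}$. Following the paper's convention of evaluating rewards on the rendered observation, we read $R_{\text{static}}(s)$ as $R_{\text{static}}(o^{\text{BEV}})$ for the BEV image associated with $s$. This is a deterministic rendering, or else we fix the observation as the representative of the state. Lemma~\ref{lemma:Boundedness} then gives $R_{\text{static}}(s)\in[-1,1]\subset\mathbb{R}$. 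Hence the comparison $R_{\text{static}}(s_1)\geq R_{\text{static}}(s_2)$ is meaningful for every pair, and no infinite or undefined values arise. The cosine similarity in~\eqref{eq3} is well-defined as long as the embeddings are nonzero, which we assume, as is standard for normalized CLIP embeddings.

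The three axioms are then checked in order.
\begin{itemize}
\item \emph{Reflexivity:} $R_{\text{static}}(s)\geq R_{\text{static}}(s)$ holds trivially.
\item \emph{Transitivity:} if $R_{\text{static}}(s_1)\geq R_{\text{static}}(s_2)$ and $R_{\text{static}}(s_2)\geq R_{\text{static}}(s_3)$, then transitivity of $\geq$ on $\mathbb{R}$ gives $R_{\text{static}}(s_1)\geq R_{\text{static}}(s_3)$.
\item \emph{Totality:} for any $s_1,s_2$, trichotomy of the reals gives $R_{\text{static}}(s_1)\geq R_{\text{static}}(s_2)$ or $R_{\text{static}}(s_2)\geq R_{\text{static}}(s_1)$.
\end{itemize}
We will also remark that $\succeq$ is generally not antisymmetric. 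Distinct states with equal reward are indifferent, which is why the conclusion is a preorder rather than a total order. Its indifference classes are the level sets of $R_{\text{static}}$.

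For the clause about alignment with the semantic specification, we expand~\eqref{eq2} with $\alpha=\beta=\tfrac12$. This gives $s_1\succeq s_2$ iff
$$\bigl[\mathrm{sim}_{\text{pos}}(s_1)-\mathrm{sim}_{\text{pos}}(s_2)\bigr]\geq\bigl[\mathrm{sim}_{\text{neg}}(s_1)-\mathrm{sim}_{\text{neg}}(s_2)\bigr].$$
Here $\mathrm{sim}_{\text{pos}}(s)$ and $\mathrm{sim}_{\text{neg}}(s)$ denote the cosine similarities in~\eqref{eq2} between the state's BEV embedding and $f_L(l_{\text{pos}})$, $f_L(l_{\text{neg}})$ respectively. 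From this we derive monotonicity. Holding negative similarity fixed, higher similarity to $l_{\text{pos}}$ ranks a state higher. Holding positive similarity fixed, higher similarity to $l_{\text{neg}}$ ranks it lower. Lemma~\ref{lemma:Discriminability} shows this ordering strictly separates states that a single-goal ordering would tie.

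The main obstacle is not the order-theoretic part but making "aligned with the semantic specification" precise. That is what I would formalize as the monotonicity statement above, rather than as any claim about true safety, which depends on CLIP's fidelity and cannot be proved.
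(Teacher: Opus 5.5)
Your proposal is correct and follows essentially the same route as the paper: it verifies reflexivity, transitivity and totality directly from the corresponding properties of $\geq$ on $\mathbb{R}$ applied to the real-valued $R_{\text{static}}$, and handles the semantic-alignment clause through the monotonicity of $R_{\text{static}}$ in the two cosine similarities. Your extras are sound refinements rather than a different argument: checking well-definedness via Lemma~\ref{lemma:Boundedness}, noting that antisymmetry fails, and phrasing alignment as monotonicity in each similarity separately, which is more precise than the paper's ``precisely when jointly more similar'' wording.
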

The proofs of Lemmas \ref{lemma:Boundedness}--\ref{lemma:Discriminability} follow directly from the cosine similarity bounds established in our previous work~\citep{huang2025vlm}. The proof of Theorem \ref{theorem:Reward-Induced State Ordering} is provided in ~\ref{appendices1}.

\subsection{Dynamic Pathway}
The Static Pathway provides spatial safety assessment but cannot handle complex events requiring semantic understanding. The Dynamic Pathway addresses this limitation by simulating the brain's attention-PFC circuit, which operates on a ``when-needed'' basis: a fast attentional mechanism identifies salient stimuli and gates the activation of slower, high-level reasoning. As illustrated in Fig.~\ref{fig4}, this attention-gated VLM reasoning mechanism selectively triggers expensive semantic analysis only when safety-critical situations are detected, achieving computational efficiency while preserving information fidelity for critical scenarios.

\subsubsection{Attentional Gate}
The human brain does not expend cognitive resources processing all visual input through the PFC; subcortical structures filter stimuli and forward only salient information. We implement this mechanism using a lightweight object detection model.

\begin{definition}[Attentional Gate]
Let $o_t^{\text{cam}}$ be the front-view camera image at time $t$. A detection model $D(\cdot)$ produces detected objects $\mathcal{O}_t = D(o_t^{\text{cam}})$. Given a predefined set of safety-critical classes, the binary gate $g_t$ is defined as:
\begin{equation}
g_t = 
\begin{cases}
1, & \text{if } \exists o \in \mathcal{O}_t \text{ s.t. } \text{cls}(o) \in \mathcal{C}_{\text{critical}} \\
0, & \text{otherwise}
\end{cases}
\label{eq4}
\end{equation}
where $\text{cls}(o)$ returns the class label of object $o$. 
\label{definition3}
\end{definition}

We employ YOLOv8 \citep{yolov8} as the detection model $D(\cdot)$. We set $\mathcal{C}_{\text{critical}}$ to include 11 safety-critical object categories: $\{\text{person, bicycle, motorcycle, dog, horse, sheep, cow, elephant, bear, zebra, giraffe}\}$. This set covers a wide variety of road users  whose unpredictable behavior most benefits from semantic reasoning beyond spatial proximity. While the core safety-critical categories are person, bicycle, and motorcycle, we include 
additional animal categories to improve robustness against rare but high-risk long-tail scenarios. Vehicle-to-vehicle conflicts are primarily captured by the Static Pathway's BEV-based spatial assessment, which provides sufficient signal for structured traffic scenarios.

\subsubsection{Multi-Frame Semantic Reasoning}
When $g_t = 1$, the framework simulates the PFC for semantic reasoning and the hippocampus for temporal context integration. We construct a causal temporal window of the most recent $K+1$ frames: $\mathcal{W}_t = \{o_{t-K}^{\text{cam}}, \ldots, o_t^{\text{cam}}\}$, where $K$ is a hyperparameter controlling the temporal context window size. In our implementation, we set $K=3$, providing a 3-frame temporal window. This window enables the VLM to understand motion dynamics and intentions rather than static snapshots.

\begin{definition}[Dynamic Language Goal]
The dynamic language goal $l_t^{\text{dyn}}$ is generated by a  LVLM $F_{\text{LVLM}}(\cdot)$ conditioned on the temporal window and detected objects:
\begin{equation}
l_t^{\text{dyn}} = F_{\text{LVLM}}(\mathcal{W}_t, \mathcal{O}_t)
\label{eq5}
\end{equation}
This output serves as a semantic hypothesis about the current risk (e.g., ``A pedestrian is crossing the road ahead''). 
\label{definition4}
\end{definition}

We employ Qwen3-VL \citep{yang2025qwen3} as $F_{\text{LVLM}}(\cdot)$ in our implementation. Note that $F_{\text{LVLM}}(\cdot)$ is a LVLM that produces natural language descriptions, distinct from the CLIP encoders used for reward computation.

\subsubsection{Dynamic Reward Computation}
The dynamic reward converts the VLM's semantic understanding into a numerical signal. Unlike the Static Pathway's fixed goal, this pathway uses the dynamically generated $l_t^{\text{dyn}}$ as the context-specific risk description. The reward is computed using the same CLIP encoders as the Static Pathway, ensuring consistent semantic alignment.

\begin{definition}[Dynamic Reward]
Given the attentional gate $g_t$, dynamic language goal $l_t^{\text{dyn}}$, and static positive goal $l_{\text{pos}}$, the dynamic reward is:
\begin{equation}
R_{\text{dynamic}}(o_t) 
= g_t \cdot \Big[ 
\alpha \cdot \mathrm{sim}(f_I(o_t^{\text{cam}}), f_L(l_{\text{pos}})) 
- \beta \cdot \mathrm{sim}(f_I(o_t^{\text{cam}}), f_L(l_t^{\text{dyn}})) 
\Big]
\label{eq6}
\end{equation}

This formulation ensures $R_{\text{dynamic}} = 0$ for non-critical frames ($g_t = 0$). When triggered, it provides sparse but semantically rich penalty signals that capture complex risks beyond spatial proximity.
\label{definition5}
\end{definition}

\begin{figure*}[t]
\centering
  \includegraphics[width=1\textwidth]{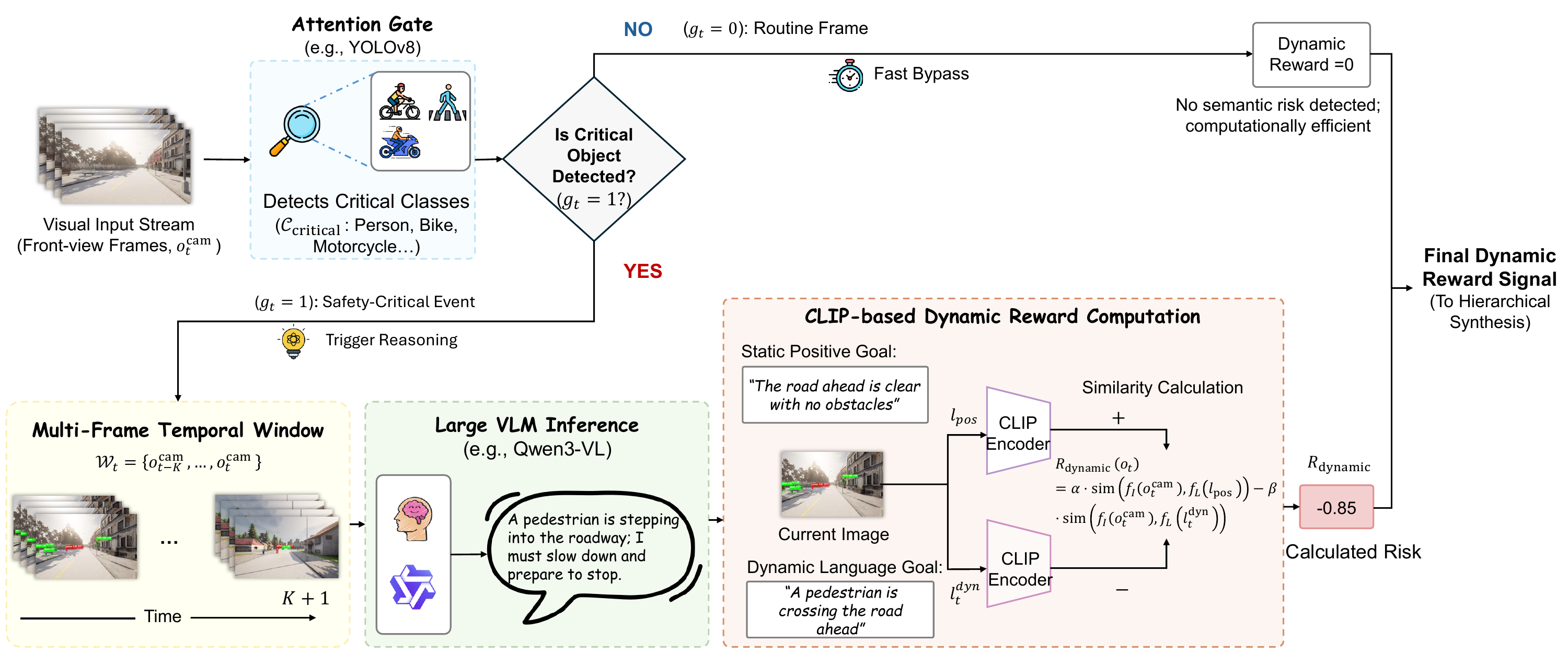}
  \caption{Attention-gated dynamic reward generation in DriveVLM-RL. Routine frames bypass semantic reasoning, while safety-critical frames trigger multi-frame LVLM inference to produce a risk description, which is converted into a dynamic reward via CLIP-based semantic similarity.}
  \label{fig4} 
\end{figure*}

\subsubsection{Theoretical Properties}

We analyze the computational efficiency and information-theoretic properties of the attentional gating mechanism.

\begin{lemma}[Computational Efficiency]
Let $p = P(g_t = 1)$ be the probability of gate activation, and let $T_{\text{LVLM}}$, $T_{\text{det}}$ denote the inference time of the LVLM and detection model respectively. The expected per-frame computation time of the Dynamic Pathway is $T_{\text{det}} + p \cdot T_{\text{LVLM}}$, compared to $T_{\text{LVLM}}$ for ungated approaches. When $p \ll 1$ and $T_{\text{det}} \ll T_{\text{LVLM}}$, this yields relative computational savings of approximately $(1-p)\times 100\%$ compared to ungated LVLM inference.
\end{lemma}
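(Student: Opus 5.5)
The plan is to write the per-frame cost of the gated Dynamic Pathway as a random variable and take its expectation. At every frame $t$ the detector $D(\cdot)$ runs unconditionally on $o_t^{\text{cam}}$, since it is needed to compute the gate $g_t$ in Eq.~\eqref{eq4}. The LVLM $F_{\text{LVLM}}(\cdot)$ of Definition~\ref{definition4} is invoked only when $g_t=1$. The per-frame cost is therefore
\begin{equation*}
C_t = T_{\text{det}} + g_t \cdot T_{\text{LVLM}},
\end{equation*}
where we treat $T_{\text{det}}$ and $T_{\text{LVLM}}$ as fixed (or mean) inference times independent of $g_t$. The CLIP scoring in Eq.~\eqref{eq6} costs the same in both the gated and ungated settings, so it is either absorbed into the constants or cancels in the comparison. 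By linearity of expectation and $\mathbb{E}[g_t]=P(g_t=1)=p$, we get $\mathbb{E}[C_t]=T_{\text{det}}+p\,T_{\text{LVLM}}$. An ungated approach calls the LVLM at every frame, so its cost is $T_{\text{LVLM}}$; it has no detector overhead.

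For the savings claim, I will compute the relative reduction
\begin{equation*}
\frac{T_{\text{LVLM}} - (T_{\text{det}} + p\,T_{\text{LVLM}})}{T_{\text{LVLM}}} = (1-p) - \frac{T_{\text{det}}}{T_{\text{LVLM}}}.
\end{equation*}
Write $\epsilon = T_{\text{det}}/T_{\text{LVLM}}$. The hypothesis $T_{\text{det}}\ll T_{\text{LVLM}}$ means $\epsilon\approx 0$, so the relative savings equal $(1-p)$ up to an additive error $\epsilon$, which gives approximately $(1-p)\times100\%$. The condition $p\ll 1$ is not needed for the approximation itself. It only says that the savings are close to $100\%$, and it guarantees that the savings are positive: the gate pays off whenever $\epsilon<1-p$. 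I will state this exact error bound so that the meaning of ``approximately'' is precise.

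The only delicate point is modeling. The per-frame cost $T_{\text{LVLM}}$ can depend on context, for example on the $K+1$ frames and on the number of detected objects. To keep the formula exact I will interpret $T_{\text{LVLM}}$ as the conditional mean cost given $g_t=1$; then $\mathbb{E}[g_t\,T_{\text{LVLM},t}]=p\,\mathbb{E}[T_{\text{LVLM},t}\mid g_t=1]$. For averages over a training run rather than a single frame, I will assume stationarity, so that $P(g_t=1)=p$ for every $t$. Averaging over $N$ frames then gives the same result, and in the asynchronous pipeline it also gives the total LVLM workload $pN\,T_{\text{LVLM}}$.
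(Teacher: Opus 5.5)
Your proposal is correct and follows the paper's proof: the paper also charges $T_{\text{det}}$ on every frame and $T_{\text{LVLM}}$ only when $g_t=1$, uses linearity of expectation to get $T_{\text{det}} + p\,T_{\text{LVLM}}$, and computes the savings exactly as $1 - p - T_{\text{det}}/T_{\text{LVLM}} \approx 1-p$. Your additional remarks are sound refinements that the paper leaves implicit: that $p \ll 1$ is not needed for the approximation (the paper itself applies it for $p$ up to about $0.66$), and that $T_{\text{LVLM}}$ should be read as a conditional mean under stationarity.
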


\begin{theorem}[Information Preservation under Gating]
Let $\mathcal{S}_{\text{critical}} \subseteq \mathcal{S}$ denote the set of safety-critical states, and let $\mu$ be a distribution over $\mathcal{S}_{\text{critical}}$. Assume the detection model $D(\cdot)$ achieves recall $\rho$ on $\mathcal{S}_{\text{critical}}$, assume $R_{\text{LVLM}}(s) \geq 0$ for all $s \in \mathcal{S}_{\text{critical}}$, and  $\mathbb{E}_\mu[R_{\text{LVLM}} \mid g=1] 
\geq \mathbb{E}_\mu[R_{\text{LVLM}}]$. Let $g(s)\in\{0,1\}$ denote the gating indicator for state $s$. Then:
\begin{equation}
\mathbb{E}_{s \sim \mu}[ g(s)\cdot R_{\text{LVLM}}(s) ] 
\ge \rho \cdot 
\mathbb{E}_{s \sim \mu}[ R_{\text{LVLM}}(s) ]
\label{eq7}
\end{equation}
\end{theorem}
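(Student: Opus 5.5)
The plan is to decompose the left-hand side of \eqref{eq7} by conditioning on the gate, and then use the two hypotheses directly. Since $g(s)\in\{0,1\}$, the law of total expectation under $\mu$ gives
\begin{equation*}
\mathbb{E}_{s\sim\mu}[g(s)\,R_{\text{LVLM}}(s)] = P_\mu(g=1)\cdot \mathbb{E}_\mu[R_{\text{LVLM}}\mid g=1] + P_\mu(g=0)\cdot 0 = P_\mu(g=1)\cdot \mathbb{E}_\mu[R_{\text{LVLM}}\mid g=1].
\end{equation*}
The remaining work is to bound each factor from below and multiply.

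First, I will interpret the recall assumption on $\mathcal{S}_{\text{critical}}$ as $P_\mu(g=1)\ge\rho$. The idea is that $\mu$ is supported on $\mathcal{S}_{\text{critical}}$, and recall is the fraction of critical states on which the gate fires. By Definition~\ref{definition3}, the gate fires exactly when a detection of a class in $\mathcal{C}_{\text{critical}}$ occurs. I expect this to be the main obstacle. Recall is naturally a property of the detector on a reference distribution, so I will state explicitly that $\rho$ is measured with respect to $\mu$, or at least lower-bounds $P_\mu(g=1)$. If $P_\mu(g=1)=0$, conditioning is undefined, but then $\rho=0$ and the inequality is trivial, since the left side equals $0\ge 0$. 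I will handle this case separately.

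Second, the hypothesis $\mathbb{E}_\mu[R_{\text{LVLM}}\mid g=1]\ge \mathbb{E}_\mu[R_{\text{LVLM}}]$ gives the conditional factor. Nonnegativity $R_{\text{LVLM}}\ge 0$ ensures both expectations are nonnegative. This is what allows multiplying the two lower bounds:
\begin{equation*}
P_\mu(g=1)\,\mathbb{E}_\mu[R_{\text{LVLM}}\mid g=1]\ \ge\ \rho\,\mathbb{E}_\mu[R_{\text{LVLM}}\mid g=1]\ \ge\ \rho\,\mathbb{E}_\mu[R_{\text{LVLM}}].
\end{equation*}
The first step uses $\mathbb{E}_\mu[R_{\text{LVLM}}\mid g=1]\ge0$ and $P_\mu(g=1)\ge\rho$. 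The second uses $\rho\ge0$ and the conditional-mean hypothesis.

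Finally, I will remark on the role of the hypotheses. Without the conditional-mean assumption, a gate that fires only on low-risk critical states could lose most of the reward mass. Without nonnegativity, the first multiplication step could reverse. Together they make the bound tight when the gate fires independently of $R_{\text{LVLM}}$, in which case equality holds with $P_\mu(g=1)=\rho$. I will also note integrability: $R_{\text{LVLM}}$ is assumed integrable under $\mu$, which holds for example if it is built from bounded CLIP similarities as in Lemma~\ref{lemma:Boundedness}.
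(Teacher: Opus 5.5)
Your proof is correct and follows essentially the same route as the paper: condition on $g$ via the law of total expectation, identify the gate-firing probability under $\mu$ with the recall $\rho$, and then apply the conditional-mean hypothesis. The only difference is that the paper sets $P_\mu(g=1)=\rho$ exactly, so the nonnegativity of $R_{\text{LVLM}}$ is never actually used there; your weaker reading $P_\mu(g=1)\ge\rho$ is slightly more general, and that is precisely the step where nonnegativity becomes necessary.
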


Combining Lemma 3 and Theorem 2, the efficiency of the Dynamic Pathway is governed by the gate activation probability $p$, and its fidelity by the detector recall $\rho$. Empirically, $p$ is strongly scene-dependent: as reported in Table~\ref{tab:yolo_comparison}, it ranges from about $11\%$ in sparse scenes to $66\%$ in dense, safety-critical traffic. Because the lightweight detector is orders of magnitude cheaper than the LVLM ($T_{\text{det}} \ll T_{\text{LVLM}}$), gating skips LVLM inference on every routine frame and thus substantially reduces the expected reward-annotation cost relative to ungated per-frame inference across this entire range, while the detector's high recall on the safety-critical classes preserves semantic information for safety-critical states. A case study can be found in Section~\ref{sec:reward-viz}.

\begin{remark} The gating mechanism may fail to trigger VLM reasoning when safety-critical objects fall outside the predefined class set $\mathcal{C}_{\text{critical}}$, or when detection recall degrades due to occlusion, adverse weather, or domain shift. In such cases, during training-time reward synthesis, we fall back to the Static Pathway, which provides baseline spatial safety assessment. This graceful degradation ensures the framework remains functional, albeit with reduced semantic understanding, rather than failing catastrophically.\end{remark}

The proofs are provided in~\ref{appendices3}.

\subsection{Hierarchical Reward Synthesis}

The Static and Dynamic Pathways provide parallel assessments of spatial and semantic risk. As illustrated in Fig. \ref{fig2}, in the brain, such information is integrated by the vmPFC and ACC~\citep{rangel2008framework}, which synthesize diverse value signals into unified judgments guiding behavior. Our Hierarchical Reward Synthesis module performs this integration through a two-phase process.

\subsubsection{Phase I: Semantic Reward Fusion}
The first phase combines pathway outputs into a unified semantic score:
\begin{equation}
R_{\text{combined}}(o_t) = R_{\text{static}}(o_t) + R_{\text{dynamic}}(o_t)
\label{eq8}
\end{equation}

This additive formulation naturally handles attentional gating: when $g_t = 0$, $R_{\text{dynamic}} = 0$ and the score defaults to spatial assessment alone.

The combined score is then normalized to $[0,1]$:
\begin{equation}
R_{\text{norm}}(o_t) = 
\frac{\mathrm{clip}(R_{\text{combined}}(o_t), \theta_{\min}, \theta_{\max}) - \theta_{\min}}
{\theta_{\max} - \theta_{\min}}
\label{eq9}
\end{equation}
where $\mathrm{clip}(x,a,b)=\min(\max(x,a),b)$, and $\theta_{\min}, \theta_{\max}$ are empirically determined hyperparameters (e.g., $\theta_{\min}=-0.1$, $\theta_{\max}=0.2$). This normalized score $R_{\text{norm}}$ represents a unified semantic safety assessment.

\subsubsection{Phase II: Integration with Vehicle State}
The second phase uses the normalized safety score to modulate low-level control objectives, enabling hierarchical behavior guidance.

\begin{definition}[Shaping Reward]
The shaping reward integrates semantic safety with vehicle state factors:
\begin{equation}
R_{\text{shaping}}(o_t)
= f_{\text{speed}}(o_t)\cdot f_{\text{center}}(o_t)\cdot f_{\text{angle}}(o_t)\cdot f_{\text{stability}}(o_t)
\label{eq10}
\end{equation}
where:
\begin{itemize}
    \item $f_{\text{speed}}(o_t) = \max\!\left(0,\ 1 - \frac{|v_{\text{actual}} - v_{\text{desired}}|}{v_{\max}}\right)$ measures speed tracking, with $v_{\text{desired}} = R_{\text{norm}}(o_t)\cdot v_{\max}$. Here, $v_{\text{actual}} \in [0, v_{\max}]$ is the current vehicle speed, ensuring $f_{\text{speed}}(o_t) \in [0,1]$ by construction. This design encodes a safety-speed trade-off: higher semantic safety scores permit higher desired speeds, while perceived risk naturally induces conservative speed targets, down to $v_{\text{desired}} = 0$ in critical scenarios. Note that $R_{\text{norm}}(o_t)$ is computed prior to $R_{\text{shaping}}(o_t)$, using only the semantic reward components $R_{\text{static}}$ and $R_{\text{dynamic}}$ from Phase I;
    \item $f_{\text{center}}(o_t)$ evaluates lateral deviation from lane center;
    \item $f_{\text{angle}}(o_t)$ measures heading alignment with road direction;
    \item $f_{\text{stability}}(o_t)$ penalizes lateral oscillation.
\end{itemize}
Each factor is bounded in $[0,1]$.
\label{definition6}
\end{definition}

\begin{remark}
We adopt multiplicative composition rather than weighted summation for combining reward factors. This design choice ensures joint constraint satisfaction: if any factor approaches zero (e.g., severe lane deviation), the entire reward diminishes regardless of other factors. In contrast, additive formulations require careful calibration of relative weights and may allow agents to exploit high rewards in some dimensions to compensate for unsafe behaviors in others, leading to reward hacking.\end{remark} 

\subsubsection{Final Reward Function}
The final reward combines the dense shaping signal with a sparse penalty for catastrophic events:
\begin{equation}
R_{\text{final}}(o_t) =
\begin{cases}
R_{\text{penalty}}, & \text{if catastrophic event occurs at } t \\
R_{\text{shaping}}(o_t), & \text{otherwise}
\end{cases}
\label{eq11}
\end{equation}
where $R_{\text{penalty}}\ll 0$ is a large negative constant applied upon collision with vehicles, pedestrians, or obstacles.

Importantly, our framework does not solely rely on this explicit punishment. As demonstrated in Section~\ref{Experiments and Results}, the agent learns safe policies even when $R_{\text{penalty}}=0$ under ``no-reward-after-collision'' settings, validating that $R_{\text{shaping}}$ provides sufficient proactive guidance to anticipate and avoid risks.

\subsubsection{Theoretical Properties}

We establish that the hierarchical reward synthesis preserves the convergence properties of the underlying RL algorithm. We first show that the final reward is bounded, a prerequisite for the soft actor--critic policy improvement guarantee.

\begin{corollary}[Bounded Final Reward]
\label{cor:bounded_final}
The normalized reward satisfies $R_{\text{norm}}(o_t) \in [0,1]$ by construction of the clipping operation in Eq.~(\ref{eq9}). Since each factor $f_{\text{speed}}, f_{\text{center}}, f_{\text{angle}}, f_{\text{stability}} \in [0,1]$ (the first by the $\max(0,\cdot)$ operator together with $v_{\text{desired}} \leq v_{\max}$; the remaining three by definition in Eq.~(\ref{eq10})), their product satisfies $R_{\text{shaping}}(o_t) \in [0,1]$. The final reward $R_{\text{final}}$ in Eq.~(\ref{eq11}) is therefore bounded: $R_{\text{final}}(o_t) \in [R_{\text{penalty}},\, 1]$.
\end{corollary}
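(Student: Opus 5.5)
The plan is a direct chain of interval bounds, taken in the order the quantities are built: first $R_{\text{norm}}$, then each factor of $R_{\text{shaping}}$, then the product, and finally the two cases of $R_{\text{final}}$.

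\textbf{Step 1: the normalized score.} By Eq.~(\ref{eq9}), $\mathrm{clip}(x,\theta_{\min},\theta_{\max})\in[\theta_{\min},\theta_{\max}]$ for any real $x$. This holds whatever the value of $R_{\text{combined}}$; in fact $R_{\text{combined}}\in[-2,2]$ by Lemma~\ref{lemma:Boundedness} and the analogous cosine bound for $R_{\text{dynamic}}$, but finiteness is all we need. Subtracting $\theta_{\min}$ and dividing by $\theta_{\max}-\theta_{\min}>0$ maps this interval affinely onto $[0,1]$. The only assumption to state explicitly is $\theta_{\max}>\theta_{\min}$, which holds for the chosen values $-0.1<0.2$.

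\textbf{Step 2: the speed factor.} Since $R_{\text{norm}}(o_t)\in[0,1]$, we get $v_{\text{desired}}\in[0,v_{\max}]$. Together with $v_{\text{actual}}\in[0,v_{\max}]$, this gives $|v_{\text{actual}}-v_{\text{desired}}|\le v_{\max}$. Hence $1-|v_{\text{actual}}-v_{\text{desired}}|/v_{\max}\in[0,1]$, and the $\max(0,\cdot)$ keeps $f_{\text{speed}}$ in $[0,1]$; it even makes that operator redundant. Here we assume $v_{\max}>0$.

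\textbf{Step 3: the remaining factors and the product.} The factors $f_{\text{center}}$, $f_{\text{angle}}$ and $f_{\text{stability}}$ lie in $[0,1]$ by Definition~\ref{definition6}. The set $[0,1]$ is closed under multiplication: a product of nonnegative numbers is nonnegative, and a product of numbers at most $1$ is at most $1$. Therefore $R_{\text{shaping}}(o_t)\in[0,1]$.

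\textbf{Step 4: the final reward.} Eq.~(\ref{eq11}) is a two-case definition. In the catastrophic case, $R_{\text{final}}=R_{\text{penalty}}$. Otherwise, $R_{\text{final}}=R_{\text{shaping}}\in[0,1]$. Since $R_{\text{penalty}}\le 0$ (it is $\ll 0$, or equals $0$ in the no-reward-after-collision setting), both cases lie in $[R_{\text{penalty}},1]$.

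\textbf{Main obstacle.} Step 2 is the only step where the bound could fail, since it is not purely definitional. It depends on the modeling assumption that $v_{\text{actual}}$ is capped at $v_{\max}$. I would state this assumption explicitly. I would also note that even without it, the $\max(0,\cdot)$ still guarantees $f_{\text{speed}}\in[0,1]$, because $1-|\cdot|/v_{\max}\le 1$ always holds. This makes the conclusion robust.
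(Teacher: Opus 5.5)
Your proposal is correct and follows the same chain as the paper. The clip-and-rescale step gives $R_{\text{norm}}\in[0,1]$, each factor of $R_{\text{shaping}}$ lies in $[0,1]$ so their product does too, and both cases of Eq.~(\ref{eq11}) then fall in $[R_{\text{penalty}},1]$; this is the same argument the paper retraces in Step~1 of its proof of the Policy Improvement Guarantee. You state the assumptions $\theta_{\max}>\theta_{\min}$, $v_{\max}>0$ and $R_{\text{penalty}}\le 0$, which the paper leaves implicit, and your observation that $f_{\text{speed}}\le 1$ holds even without the speed cap is a valid sharpening.
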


\begin{theorem}[Policy Improvement Guarantee]
Let $\pi_k$ denote the policy at iteration $k$, and $\pi_{k+1}$ the updated policy obtained under the hierarchical reward $R_{\text{final}}$. Under standard assumptions of soft actor--critic learning, including bounded rewards, sufficient exploration, and stable function approximation, the policy update satisfies
\begin{equation}
J(\pi_{k+1}) \ge J(\pi_k) - \epsilon_k
\label{eq12}
\end{equation}
where $J(\pi) = \mathbb{E}_{\pi}\!\left[\sum_{t=0}^{T} \gamma^t R_{\text{final}}(o_t)\right]$, and $\epsilon_k$ denotes a bounded approximation error that diminishes as training progresses.
\end{theorem}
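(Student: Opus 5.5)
The plan is to reduce the claim to the standard soft policy iteration argument of soft actor--critic, with approximation errors tracked explicitly. First, Corollary~\ref{cor:bounded_final} gives $R_{\text{final}}(o_t)\in[R_{\text{penalty}},1]$. Because the horizon is finite (or $\gamma<1$), every return and every soft $Q$-value is bounded by $R_{\max}/(1-\gamma)$ with $R_{\max}=\max(|R_{\text{penalty}}|,1)$, once the entropy term is bounded; this holds for the squashed-Gaussian action space on $(a^{\text{steer}},a^{\text{throttle}})$. Boundedness makes the soft Bellman backup a $\gamma$-contraction in sup-norm. Hence soft policy evaluation under $R_{\text{final}}$ converges to a unique $Q^{\pi_k}$, and the hierarchical structure of $R_{\text{final}}$ plays no further role beyond being a fixed, bounded function of $o_t$.

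Second, I would prove the exact soft policy improvement step. Take $\tilde\pi_{k+1}=\arg\min_{\pi'}D_{\mathrm{KL}}\big(\pi'(\cdot\mid o)\,\|\,\exp(Q^{\pi_k}(o,\cdot))/Z(o)\big)$. The usual argument then gives $Q^{\tilde\pi_{k+1}}\ge Q^{\pi_k}$ pointwise: compare the KL objective at $\tilde\pi_{k+1}$ with its value at $\pi_k$, then unroll the soft Bellman equation repeatedly. Taking expectations over $d_0$ yields $J(\tilde\pi_{k+1})\ge J(\pi_k)$ for the soft objective. To handle the objective $J$ as written, without entropy, I would absorb the bounded entropy bonus $\alpha_{\text{ent}}\mathcal{H}$ into the error term.

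Third, I would account for approximation. The actual $\pi_{k+1}$ uses a learned critic $Q_\theta$ with $\|Q_\theta-Q^{\pi_k}\|_\infty\le\delta_k$ and an actor with residual KL-optimization gap $\eta_k$. A performance-difference style bound then gives $J(\pi_{k+1})\ge J(\tilde\pi_{k+1})-\frac{2\delta_k+\eta_k}{1-\gamma}$, up to constants. I would also add the entropy discrepancy. These terms together define $\epsilon_k$. It is bounded because all quantities are bounded via the reward bound, and it vanishes as training progresses under the assumed ``stable function approximation'' (i.e., $\delta_k,\eta_k\to0$).

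The main obstacle is this last step. Converting the critic error and the actor optimization gap into a bound on $J$ requires a Lipschitz or performance-difference argument, and the constants depend on the state-visitation distribution mismatch. I would first try the performance-difference lemma with the sup-norm, which avoids any concentrability coefficients at the cost of a looser $1/(1-\gamma)$ factor. Because $\epsilon_k$ is only claimed to be bounded and diminishing, that looser factor is acceptable here.
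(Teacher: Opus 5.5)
Your proposal has the same skeleton as the paper's proof: boundedness of $R_{\text{final}}$ via Corollary~\ref{cor:bounded_final}, then SAC soft policy improvement, then an error term $\epsilon_k$ charged to function approximation. The paper does much less than you do. It cites the SAC improvement theorem as directly giving $J(\pi_{k+1})\ge J(\pi_k)-\epsilon_k$ once rewards are bounded and the function class is expressive. It then characterizes $\epsilon_k$ statistically as $\mathcal{O}(\sqrt{\mathrm{Pdim}(\mathcal{F})/N_k})+\epsilon_{\text{approx}}$, which decays only to the floor $\epsilon_{\text{approx}}$. You instead re-derive soft policy improvement and express $\epsilon_k$ through the critic error $\delta_k$ and the actor gap $\eta_k$. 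That shows where the horizon factor comes from and exactly what ``diminishing'' requires ($\delta_k,\eta_k\to0$). The paper's version ties $\epsilon_k$ to sample size, but leaves the step from critic error to $J$ entirely to the citation. You also catch something the paper glosses over: the SAC guarantee is for the entropy-regularized objective, while the theorem's $J$ has no entropy term. Eq.~(\ref{eq:C3}) silently reuses the symbol $J$ for both.

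Two of your steps need repair.

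\emph{The chain through $\tilde\pi_{k+1}$.} The inequality $J(\pi_{k+1})\ge J(\tilde\pi_{k+1})-(2\delta_k+\eta_k)/(1-\gamma)$ does not hold in general. Set $F_o(\pi')=\mathbb{E}_{a\sim\pi'}[Q^{\pi_k}(o,a)-\lambda\log\pi'(a\mid o)]$. The identity $F_o(\tilde\pi_{k+1})-F_o(\pi')=\lambda\,\mathrm{KL}(\pi'\|\tilde\pi_{k+1})$ means near-optimality only gives $\mathrm{KL}(\pi_{k+1}\|\tilde\pi_{k+1})\le(2\delta_k+\lambda\eta_k)/\lambda$ per observation. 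A tilt that small at a single observation can still shift visitation mass of order $\sqrt{(2\delta_k+\lambda\eta_k)/\lambda}$ toward observations where $\tilde\pi_{k+1}$ improved less, so the loss is not linear in the errors. The fix is to compare with $\pi_k$ directly. Near-optimality of $\pi_{k+1}$ for the $Q_\theta$-objective, together with $\|Q_\theta-Q^{\pi_k}\|_\infty\le\delta_k$, gives $F_o(\pi_{k+1})\ge V^{\pi_k}(o)-2\delta_k-\lambda\eta_k$ at every $o$. The soft performance-difference lemma (or your unrolling) then yields $J_{\text{soft}}(\pi_{k+1})\ge J_{\text{soft}}(\pi_k)-(2\delta_k+\lambda\eta_k)/(1-\gamma)$. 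Your exact improvement step becomes the special case $\delta_k=\eta_k=0$.

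\emph{The entropy term.} Passing to the entropy-free $J$ costs $\lambda[\mathcal{H}_\gamma(\pi_{k+1})-\mathcal{H}_\gamma(\pi_k)]$, the difference in discounted policy entropy. This term is bounded only if the policy class clamps log-standard deviations and pre-squash means. A squashed Gaussian's entropy on $[-1,1]^2$ is at most $\log 4$ but is unbounded below. More importantly, the term does not vanish just because $\delta_k,\eta_k\to0$. You need an extra hypothesis, such as convergence of the iterates or a vanishing temperature. Alternatively, state the result for the soft objective, which is what the paper effectively proves.

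\emph{A shared caveat.} Both your argument and the paper's treat $o_t$ as a Markov state and $R_{\text{final}}$ as a function of it. In fact $R_{\text{dynamic}}$ depends on the window $\mathcal{W}_t$, and memoryless policies in a POMDP need not improve under policy iteration. So your remark that the hierarchical structure ``plays no further role'' holds only if that Markov assumption is folded into the ``standard assumptions''.
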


This guarantee follows from the SAC policy improvement theorem~\citep{haarnoja2018soft}, combined with the bounded reward property (Corollary~\ref{cor:bounded_final}) ensuring stable Q-value estimation. The hierarchical structure of $R_{\text{final}}$ does not interfere with convergence because all component rewards are bounded and the shaping reward $R_{\text{shaping}}$ is state-dependent only. While these assumptions are standard in theoretical RL analysis~\citep{haarnoja2018soft}, we empirically verify convergence behavior in Section~\ref{Experiments and Results} through training curves reported across three independent random seeds. The proofs are provided in~\ref{appendices4}.

\subsection{Asynchronous Batch-Processing Pipeline}

The LVLM inference required for $R_{\text{dynamic}}$ is computationally expensive and unsuitable for tight closed-loop interaction, making synchronous per-step reward computation impractical within the environment loop. We address this challenge through an asynchronous batch-processing pipeline that decouples reward calculation from experience collection.

\subsubsection{RL Algorithm}
We employ Soft Actor-Critic (SAC)~\citep{haarnoja2018soft} as the backbone RL algorithm due to its sample efficiency and stability in continuous control. Importantly, the DriveVLM-RL framework is algorithm-agnostic by design: since our hierarchical reward synthesis operates independently of the policy optimization procedure, it can in principle be combined with standard RL algorithms. This compatibility allows practitioners to leverage advances in RL algorithms while benefiting from our semantic reward design.

SAC maximizes the entropy-regularized objective:
\begin{equation}
J(\pi_\phi) = \mathbb{E}_{\pi_\phi} \left[ \sum_{t=0}^T \gamma^t \left( R(o_t, a_t) + \lambda \mathcal{H}(\pi_\phi(\cdot \mid o_t)) \right) \right]
\label{eq13}
\end{equation}

Its $Q$-function parameters $\theta$ are updated by minimizing the standard soft Bellman residual:
\begin{equation}
J_Q(\theta) = \mathbb{E}_{(o_t,a_t,r_t,o_{t+1}) \sim \mathcal{D}} 
\left[ \frac{1}{2} \left( Q_\theta(o_t,a_t) - 
\left( r_t + \gamma \mathbb{E}_{a_{t+1} \sim \pi_\phi(\cdot \mid o_{t+1})}
\left[ Q_\theta(o_{t+1}, a_{t+1}) - \lambda \log \pi_\phi(a_{t+1} \mid o_{t+1}) \right] \right) \right)^2 \right]
\label{eq14}
\end{equation}

Our core modification is to replace the standard immediate reward $r_t$ with our asynchronously computed hierarchical reward $R_{\text{final}}(o_t)$ from Eq.~(\ref{eq11}), yielding the modified Bellman residual:
\begin{equation}
J'_Q(\theta) = \mathbb{E}_{(o_t,a_t,o_{t+1}) \sim \mathcal{D}} 
\left[ \frac{1}{2} \left( Q_\theta(o_t,a_t) - 
\left( R_{\text{final}}(o_t) + \gamma \mathbb{E}_{a_{t+1} \sim \pi_\phi(\cdot \mid o_{t+1})}
\left[ Q_\theta(o_{t+1}, a_{t+1}) - \lambda \log \pi_\phi(a_{t+1} \mid o_{t+1}) \right] \right) \right)^2 \right]
\label{eq15}
\end{equation}

\subsubsection{Pipeline Architecture}

To populate the replay buffer $\mathcal{D}$ with these $R_{\text{final}}(o_t)$ values, the pipeline operates in three parallel processes:

\begin{enumerate}
\item \textbf{Interaction Thread.} The agent interacts with the environment, storing transitions $(o_t, a_t, \text{images}_t)$ in $\mathcal{D}$ with placeholder rewards $(r_t \leftarrow \texttt{NaN},\ \texttt{ready} = 0)$. This thread runs at the maximum possible environmental speed to rapidly collect raw experience.

\item \textbf{Reward Thread.} This thread runs in parallel with a separate worker. It continuously samples mini-batches $\{(o_i, a_i, o_{i+1}, \text{images}_i)\}_{i=1}^B$ from $\mathcal{D}$. For each transition in the batch, it executes the full hierarchical reward computation (Eqs.~\ref{eq2},\ref{eq6},\ref{eq8}--\ref{eq11}) and updates placeholder rewards with computed $R_{\text{final}}$ values.

\item \textbf{Learner Thread.} The SAC learner preferentially samples reward-annotated transitions (i.e., those with $\texttt{ready} = 1$) from $\mathcal{D}$ and performs policy and $Q$-function updates using Eq.~(\ref{eq15}). To mitigate reward staleness, the learner only begins policy updates once at least $N_{\text{warmup}}$ transitions have been reward-annotated, ensuring Q-value estimates are predominantly trained on accurate reward signals.
\end{enumerate}

This design enables experience collection to proceed without waiting for LVLM inference, maintaining high training throughput while preserving reward quality.

\subsubsection{Inference Acceleration}

To further reduce LVLM inference latency within the Reward Thread, we serve Qwen3-VL as an OpenAI-compatible API endpoint using vLLM~\citep{kwon2023efficient}, an optimized LLM serving framework. The model is deployed in \texttt{bfloat16} precision with chunked prefill enabled and a maximum batch size of 24 concurrent sequences, decoupled from the training process on a dedicated GPU. This setup achieves approximately 1~Hz annotation throughput, sufficient to keep the reward annotation backlog bounded relative to the policy learning rate. The CLIP and YOLO components are executed in FP16 precision directly within the Reward Thread without additional serving overhead.

\subsubsection{Deployment}

Once training is complete, the entire VLM-based reward apparatus is discarded. During deployment, DriveVLM-RL executes only the learned policy network $\pi_\phi$. The detector $D$, CLIP encoders $(f_I,f_L)$, and the LVLM $F_{\text{LVLM}}$ are used \emph{only} for offline training-time reward synthesis and are not executed at test time, achieving the goal of leveraging foundation model reasoning without incurring any deployment latency. The complete training procedure is outlined in~\ref{appendices5}.

\section{Experiments and Results}
\label{Experiments and Results}
The experiments are structured to address the following research questions: \textbf{Q1:} How does DriveVLM-RL compare with state-of-the-art reward design methods in terms of safety, efficiency, and task completion? \textbf{Q2:} Can DriveVLM-RL learn safe driving behaviors without explicit collision penalties through semantic understanding alone? \textbf{Q3:} Can the learned policy generalize to unseen environments and traffic conditions?

\subsection{Experimental Setup}
\subsubsection{Simulation Environment}
Consistent with prior VLM-as-Reward works~\citep{wasif2025drivemind, huang2025vlm}, we adopt CARLA~\citep{dosovitskiy2017carla} as our primary simulation platform, which provides photorealistic rendering, accurate vehicle dynamics, and diverse urban environments essential for evaluating end-to-end driving policies.

\textit{1) Training Environment.} 
All models in our comparative study are trained exclusively in CARLA Town 2 to ensure fair comparison and isolate the effects of different reward design approaches. Town 2 is a compact European-style urban layout featuring residential districts, commercial zones, single-lane roads, and signalized intersections, as shown in Fig.~\ref{towns}. To further evaluate generalization, we additionally test all methods on Towns~1, 3, 4, and 5, which represent fully out-of-distribution environments with distinct road layouts and topologies.

\textit{2) Traffic Configuration.} 
Different from VLM-RL~\citep{huang2025vlm}, whose experimental setting contains only vehicle interactions without traffic lights or other types of road users, we construct a more complex and heterogeneous traffic environment, as shown in Fig.~\ref{bev}. The simulation includes 20 vehicles for natural traffic flow, 20 pedestrians (walking speeds 0.8--1.5 m/s) near crosswalks, 20 motorcycles with short following distances and frequent cut-in behaviors, and 20 bicycles traveling at low speeds requiring safe overtaking.

\textit{3) Navigation Routes.} 
We employ dynamic route assignment during both training and evaluation. At each episode reset, we randomly select two distinct spawn points from the 101 predefined locations in Town 2 and compute the shortest path using the A* algorithm. Episodes continue until the cumulative driving distance reaches 3000~m, allowing comprehensive evaluation across diverse navigation scenarios within a single episode.

\textit{4) Episode Termination.} 
Each training and evaluation episode continues until one of the following three termination conditions is satisfied: (i) a collision with static infrastructure, other vehicles, pedestrians, cyclists, or motorcyclists is detected; (ii) the agent becomes stuck, defined as maintaining a speed below 1~km/h for more than 90 consecutive seconds; or (iii) the lateral deviation from the lane center exceeds 3~meters, indicating loss of lane-keeping control or off-road driving.

\begin{figure}[t]
  \centerline{\includegraphics[width=0.993\textwidth]{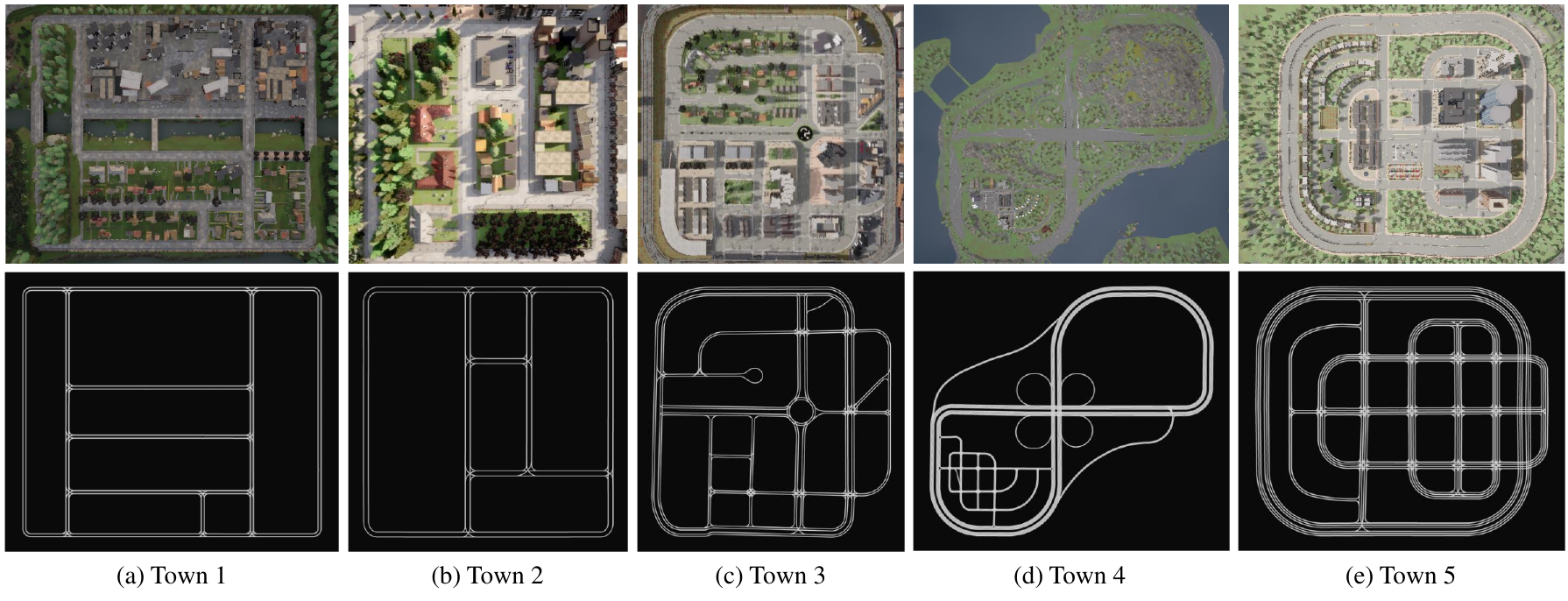}}
  \caption{CARLA towns used for training and evaluation, covering diverse urban layouts and road topologies. The top row shows aerial views of the environments, and the bottom row presents the corresponding lane network structures.}
  \label{towns}
\end{figure}

\begin{figure}[t]
  \centerline{\includegraphics[width=0.993\textwidth]{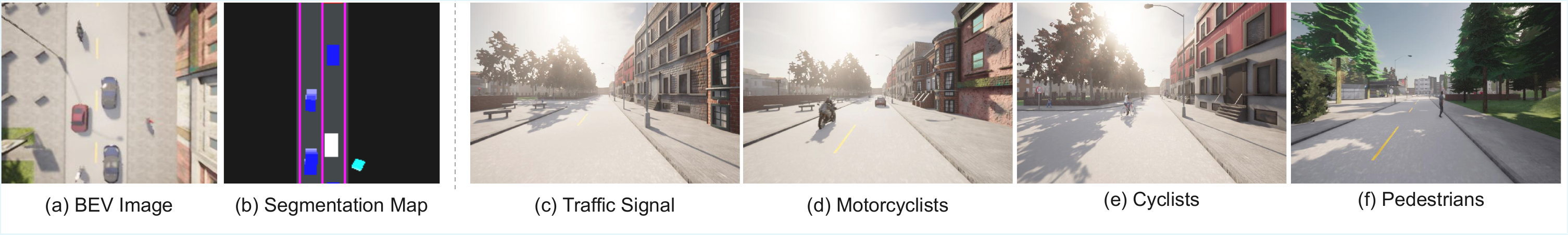}}
  \caption{Multi-modal observations of the ego vehicle in urban traffic, comprising BEV representation, semantic segmentation, and camera views with diverse traffic participants (signals, motorcyclists, cyclists, and pedestrians).}
  \label{bev}
\end{figure}

\subsubsection{Observation and Action Spaces}

\textit{1) Observation Space.} 
The RL agent receives: (i) a BEV semantic segmentation image rendered at a resolution of $224\times224$ pixels, generated by projecting CARLA ground-truth semantic labels onto a local coordinate frame centered on the ego vehicle; (ii) an ego-state vector capturing the vehicle's dynamic state, consisting of the current steering angle (normalized to $[-1,1]$), the throttle/brake command (normalized to $[-1,1]$), and the instantaneous vehicle speed in km/h; and (iii) a navigation context representation composed of the next 15 future waypoints along the planned route, expressed as $(x,y)$ coordinates in the ego-centric reference frame with the $x$-axis aligned with the vehicle's heading. Waypoints are sampled at 2~m intervals, providing approximately 30~m of route preview.

\textit{2) Action Space.} 
We employ a continuous two-dimensional action space $\mathcal{A}=[-1,1]^2$. The first dimension controls steering angle ($-1$: maximum left, $+1$: maximum right), while the second dimension combines throttle and brake control (positive values map to throttle intensity, negative values to brake intensity). This end-to-end perception-to-control formulation enables direct policy deployment without any auxiliary reasoning modules at inference time.

\subsubsection{Language Goal Configuration}
\label{sec:language_goal}
Following the CLG paradigm~\citep{huang2025vlm}, we adopt a simple yet effective contrastive prompt design for the static reward component. Specifically, we use ``the road is clear with no car accidents'' as the positive goal and ``two cars have collided with each other on the road'' as the negative goal. These prompts remain fixed across all experiments, highlighting the zero-shot nature of DriveVLM-RL framework and eliminating the need for task-specific prompt engineering. For the dynamic reward component, Qwen3-VL generates a context-specific risk description $l_t^{\text{dyn}}$ conditioned on the temporal window $\mathcal{W}_t$ and detected objects $\mathcal{O}_t$, following Definition~\ref{definition4}. To constrain the output space and ensure consistent CLIP-compatible semantic embeddings, we provide the LVLM with a reference vocabulary of 10 canonical scene descriptions covering common driving risk scenarios (e.g., ``a pedestrian is crossing the road ahead'', ``a cyclist is directly ahead on the road''). The LVLM may draw from or adapt these descriptions when generating $l_t^{\text{dyn}}$, while retaining the flexibility to produce novel descriptions for out-of-distribution scenarios. The complete reference vocabulary is provided in \ref{appendices6}.

\subsubsection{VLM Configuration}

We employ OpenCLIP's ViT-bigG-14 model \citep{ilharco2021openclip} pre-trained on the LAION-2B dataset with 2.32 billion English image--text pairs. The model uses a patch size of $14\times14$ pixels and accepts $224\times224$ pixel images. All CLIP components remain frozen during training to ensure stable semantic reward generation. We use YOLOv8-small as the lightweight detection model for the attention gate. While YOLO can detect all 80 COCO object classes, we selectively use 11 safety-critical classes $\{\text{person, bicycle, motorcycle, dog, horse, sheep, cow, elephant, bear, zebra, giraffe}\}$ to trigger VLM inference. For semantic reasoning, we employ Qwen3-VL-4B-Instruct~\citep{yang2025qwen3} as the LVLM, configured with a temporal window of $K=3$ frames. During training, the Reward Worker Thread processes stored transitions in mini-batches every $\Delta = 10$ control steps, invoking Qwen3-VL for reward annotation at an effective rate of approximately 1~Hz. YOLO pre-filtering via the attentional gate further reduces unnecessary Qwen3-VL calls by skipping transitions where no safety-critical objects are detected ($g_t = 0$).

\subsection{Evaluation Metrics}
We employ a set of quantitative metrics to evaluate both driving efficiency and safety performance:

\begin{itemize}
\item \textbf{Driving Efficiency Metrics.}
\emph{Average Speed} (\textbf{AS}) measures the mean vehicle speed over an episode. 
\emph{Total Distance} (\textbf{TD}) records the cumulative distance traveled by the ego vehicle.

\item \textbf{Safety Metrics.}
The \emph{Collision Rate} (\textbf{CR}) measures the percentage of episodes in which a collision with other vehicles or obstacles occurs, including rear-end and side collisions. 
To further characterize collision frequency, we report \emph{Time-based Collision Frequency} (\textbf{TCF}), defined as the number of collisions per 1000 time steps, and \emph{Distance-based Collision Frequency} (\textbf{DCF}), defined as the number of collisions per kilometer traveled. 
To quantify collision severity, we record the \emph{Collision Speed} (\textbf{CS}) at the moment of impact. 
We additionally compute the \emph{Inter-Collision Time} (\textbf{ICT}), defined as the average number of time steps between consecutive collisions, which reflects the temporal distribution of safety-critical events.

\item \textbf{Task Success.}
\emph{Route Completion} (\textbf{RC}) is defined as the number of successfully completed navigation routes within a single episode.
During the test phase, we additionally report the \emph{Success Rate} (\textbf{SR}), defined as the fraction of trials in which the agent successfully reaches the destination across 10 predefined evaluation routes.  
\emph{Average Collision} (\textbf{AC}) represents the average number of collisions per episode.
\end{itemize}

Detailed metric definitions follow~\citep{huang2025vlm}.

\subsection{Baseline Methods}
We compare against 11 representative methods spanning three reward design paradigms. To enable a comprehensive comparison across learning paradigms, we evaluate both SAC-based and PPO-based variants of the baselines. As demonstrated in our prior work VLM-RL~\citep{huang2025vlm}, the VLM-as-Reward paradigm transfers readily to PPO-based algorithms, supporting a broader evaluation without algorithmic re-design.
All baselines use identical network architectures, observation/action spaces, and training hyperparameters to ensure fair comparison.

\textit{1) Expert-Designed Reward Methods.}
We implement the following baselines with manually crafted reward functions:
\begin{itemize}
\item \textbf{TIRL-SAC}~\citep{cao2022trustworthy}: Binary reward with $-1$ for collision and $0$ otherwise, representing minimal reward informativeness.

\item \textbf{Chen-SAC}~\citep{chen2021interpretable}: Hand-tuned weighted reward balancing collision penalty, speed incentive, lane centering, and steering smoothness.

\item \textbf{ASAP-PPO}~\citep{wang2023efficient}: Skill-based reward providing positive incentives for route progress, destination arrival, and overtaking, with penalties for collisions and boundary violations.

\item \textbf{ChatScene-PPO}~\citep{zhang2024chatscene}: Smoothness-focused reward penalizing longitudinal acceleration, lateral acceleration, and abrupt steering changes, with a constant baseline signal to stabilize learning.
\end{itemize}

\textit{2) LLM-Designed Reward Methods.}
We compare against recent approaches that leverage LLMs for automated reward generation:
\begin{itemize}
\item \textbf{Revolve / Revolve-Auto}~\citep{hazra2024revolve}: An evolutionary framework using LLMs to generate reward function code guided by human feedback. We adopt their best-performing reward function for comparison.
\end{itemize}

\textit{3) VLM-Designed Reward Methods.}
We compare against five VLM-based reward shaping approaches:
\begin{itemize}
\item \textbf{VLM-SR}~\citep{baumli2023vision}: Binary reward using CLIP similarity thresholding to determine goal achievement.

\item \textbf{RoboCLIP}~\citep{sontakke2023roboclip}: Episodic reward computing average CLIP similarity between trajectory frames and a task descriptor.

\item \textbf{VLM-RM}~\citep{rocamonde2023vision}: Generates continuous reward by projecting the current state embedding onto the direction vector between baseline and target state descriptions.

\item \textbf{LORD}~\citep{ye2025lord}: Penalizes similarity to dangerous states using negative language goals.

\item \textbf{VLM-RL}~\citep{huang2025vlm}: A contrasting language goal formulation encouraging similarity to safe states while penalizing similarity to dangerous states.
\end{itemize}

\subsection{Implementation Details}
Our implementation is built upon the Stable-Baselines3 library \citep{raffin2021stable}, which provides reliable implementations of modern RL algorithms. The standard implementations of SAC and PPO are extended to incorporate our dual-pathway reward computation architecture during the training process. The policy network accommodates heterogeneous input modalities: a 6-layer convolutional neural network (CNN) extracts visual features from the BEV semantic segmentation images, while separate multi-layer perceptrons (MLPs) process the ego-state variables and future navigation waypoints. The resulting feature embeddings are concatenated and passed to a shared policy head to generate the final control actions. All experiments are conducted on a workstation equipped with three NVIDIA RTX A6000 GPUs (each with 48~GB memory, 10,752 CUDA cores), an AMD Ryzen Threadripper Pro 7985WX processor (64 cores, 128 threads), and 512~GB system memory. 

\begin{table*}[!t]
\begin{small}
\caption{Performance comparison with baselines during
training. Mean and standard deviation over 3 seeds.
The best results are marked in \textbf{bold}.
Note that the training-time CS is logged at every
control step (and is zero on non-collision steps) and
averaged over the full logging window, whereas the
testing CS in Table~\ref{tab:carla_test} is averaged
only over actual collision events; the two CS columns
are thus computed over different populations and are
not directly comparable.}
\label{tab:carla_train}
\centering
\renewcommand{\arraystretch}{1.4}
\setlength{\tabcolsep}{4pt}
\begin{adjustbox}{center}
\begin{tabular}{@{}lccccccccc@{}}
\toprule
Model & Reference & AS~$\uparrow$ & RC~$\uparrow$ & 
TD~$\uparrow$ & CS~$\downarrow$ & CR~$\downarrow$ & 
ICT~$\uparrow$ & DCF~$\downarrow$ & TCF~$\downarrow$ \\
\midrule

\rowcolor{gray!15}
\multicolumn{10}{l}{\textit{\textbf{Expert-designed 
Reward Methods (Binary Rewards)}}} \\
TIRL-SAC & TR-C'22 
  & 15.20 {\tiny $\pm$ 5.11} 
  & 0.26 {\tiny $\pm$ 0.25} 
  & 14.18 {\tiny $\pm$ 6.51} 
  & 4.02 {\tiny $\pm$ 5.48} 
  & 0.083 {\tiny $\pm$ 0.07} 
  & 36935 {\tiny $\pm$ 58721} 
  & 130.77 {\tiny $\pm$ 28.73} 
  & 3.028 {\tiny $\pm$ 2.15} \\
\midrule

\rowcolor{gray!15}
\multicolumn{10}{l}{\textit{\textbf{Expert-designed 
Reward Methods (Summation Rewards)}}} \\
Chen-SAC & T-ITS'22 
  & \textbf{25.06} {\tiny $\pm$ 0.20} 
  & 2.28 {\tiny $\pm$ 0.76} 
  & 560.92 {\tiny $\pm$ 265.32} 
  & 3.02 {\tiny $\pm$ 1.25} 
  & 0.293 {\tiny $\pm$ 0.05} 
  & 2071 {\tiny $\pm$ 593} 
  & 2.71 {\tiny $\pm$ 0.75} 
  & 1.833 {\tiny $\pm$ 0.75} \\
ASAP-PPO & RSS'23 
  & 18.66 {\tiny $\pm$ 3.37} 
  & 0.75 {\tiny $\pm$ 0.39} 
  & 67.17 {\tiny $\pm$ 46.87} 
  & 0.04 {\tiny $\pm$ 0.05} 
  & 0.403 {\tiny $\pm$ 0.16} 
  & 21398 {\tiny $\pm$ 14202} 
  & 34.71 {\tiny $\pm$ 25.75} 
  & 0.426 {\tiny $\pm$ 0.25} \\
ChatScene-PPO & CVPR'24 
  & 15.36 {\tiny $\pm$ 0.28} 
  & 2.42 {\tiny $\pm$ 0.29} 
  & 672.3 {\tiny $\pm$ 249.09} 
  & 0.05 {\tiny $\pm$ 0.04} 
  & 0.817 {\tiny $\pm$ 0.09} 
  & 2186 {\tiny $\pm$ 808} 
  & 1.66 {\tiny $\pm$ 0.72} 
  & 0.597 {\tiny $\pm$ 0.22} \\
\midrule

\rowcolor{gray!15}
\multicolumn{10}{l}{\textit{\textbf{LLM-based 
Reward Methods}}} \\
Revolve & ICLR'25 
  & 18.46 {\tiny $\pm$ 0.71} 
  & 2.78 {\tiny $\pm$ 0.63} 
  & 910.12 {\tiny $\pm$ 283.56} 
  & 0.53 {\tiny $\pm$ 0.56} 
  & 0.767 {\tiny $\pm$ 0.13} 
  & 2493 {\tiny $\pm$ 730} 
  & 1.18 {\tiny $\pm$ 0.39} 
  & 0.493 {\tiny $\pm$ 0.10} \\
Revolve-auto & ICLR'25 
  & 17.92 {\tiny $\pm$ 2.06} 
  & 3.65 {\tiny $\pm$ 0.44} 
  & 1283.8 {\tiny $\pm$ 313.86} 
  & 0.50 {\tiny $\pm$ 0.25} 
  & 0.930 {\tiny $\pm$ 0.03} 
  & 4702 {\tiny $\pm$ 2254} 
  & \textbf{0.81} {\tiny $\pm$ 0.19} 
  & 0.281 {\tiny $\pm$ 0.17} \\
\midrule

\rowcolor{gray!15}
\multicolumn{10}{l}{\textit{\textbf{VLM-based Reward 
Methods (Robotic)}}} \\
VLM-SR & NeurIPS'23 
  & 1.49 {\tiny $\pm$ 1.54} 
  & 0.51 {\tiny $\pm$ 0.36} 
  & 53.44 {\tiny $\pm$ 61.79} 
  & 0.02 {\tiny $\pm$ 0.02} 
  & \textbf{0.024} {\tiny $\pm$ 0.04} 
  & \textbf{869244} {\tiny $\pm$ 35550} 
  & 42.73 {\tiny $\pm$ 30.12} 
  & \textbf{0.192} {\tiny $\pm$ 0.14} \\
RoboCLIP & NeurIPS'23 
  & 11.05 {\tiny $\pm$ 5.74} 
  & 0.73 {\tiny $\pm$ 0.51} 
  & 130.17 {\tiny $\pm$ 67.78} 
  & 0.008 {\tiny $\pm$ 0.01} 
  & 0.097 {\tiny $\pm$ 0.10} 
  & 290455 {\tiny $\pm$ 438767} 
  & 47.17 {\tiny $\pm$ 46.46} 
  & 0.391 {\tiny $\pm$ 0.21} \\
VLM-RM & ICLR'24 
  & 10.86 {\tiny $\pm$ 4.57} 
  & 0.66 {\tiny $\pm$ 0.24} 
  & 61.81 {\tiny $\pm$ 48.32} 
  & 0.14 {\tiny $\pm$ 0.19} 
  & 0.067 {\tiny $\pm$ 0.06} 
  & 101193 {\tiny $\pm$ 62897} 
  & 35.26 {\tiny $\pm$ 24.13} 
  & 0.211 {\tiny $\pm$ 0.05} \\
\midrule

\rowcolor{gray!15}
\multicolumn{10}{l}{\textit{\textbf{VLM-based Reward 
Methods (Autonomous Driving)}}} \\
LORD & WACV'25 
  & 15.77 {\tiny $\pm$ 8.13} 
  & 0.72 {\tiny $\pm$ 0.32} 
  & 111.10 {\tiny $\pm$ 130.47} 
  & 0.56 {\tiny $\pm$ 0.91} 
  & 0.063 {\tiny $\pm$ 0.03} 
  & 45880 {\tiny $\pm$ 29377} 
  & 60.92 {\tiny $\pm$ 74.85} 
  & 0.398 {\tiny $\pm$ 0.21} \\
VLM-RL & TR-C'25 
  & 22.53 {\tiny $\pm$ 0.57} 
  & 2.77 {\tiny $\pm$ 0.33} 
  & 806.97 {\tiny $\pm$ 190.56} 
  & 0.008 {\tiny $\pm$ 0.01} 
  & 0.407 {\tiny $\pm$ 0.01} 
  & 5017 {\tiny $\pm$ 1297} 
  & 1.50 {\tiny $\pm$ 0.48} 
  & 0.404 {\tiny $\pm$ 0.05} \\

\rowcolor{green!10}
DriveVLM-RL & Ours 
  & 23.08 {\tiny $\pm$ 0.98} 
  & \textbf{3.97} {\tiny $\pm$ 1.41} 
  & \textbf{1347.0} {\tiny $\pm$ 465.31} 
  & \textbf{0.004} {\tiny $\pm$ 0.01} 
  & 0.190 {\tiny $\pm$ 0.03} 
  & 19043 {\tiny $\pm$ 12628} 
  & 1.61 {\tiny $\pm$ 0.34} 
  & 0.357 {\tiny $\pm$ 0.03} \\
\bottomrule
\end{tabular}
\end{adjustbox}
\end{small}
\end{table*}

\begin{figure}[!t]
  \centerline{\includegraphics[width=0.993\textwidth]
  {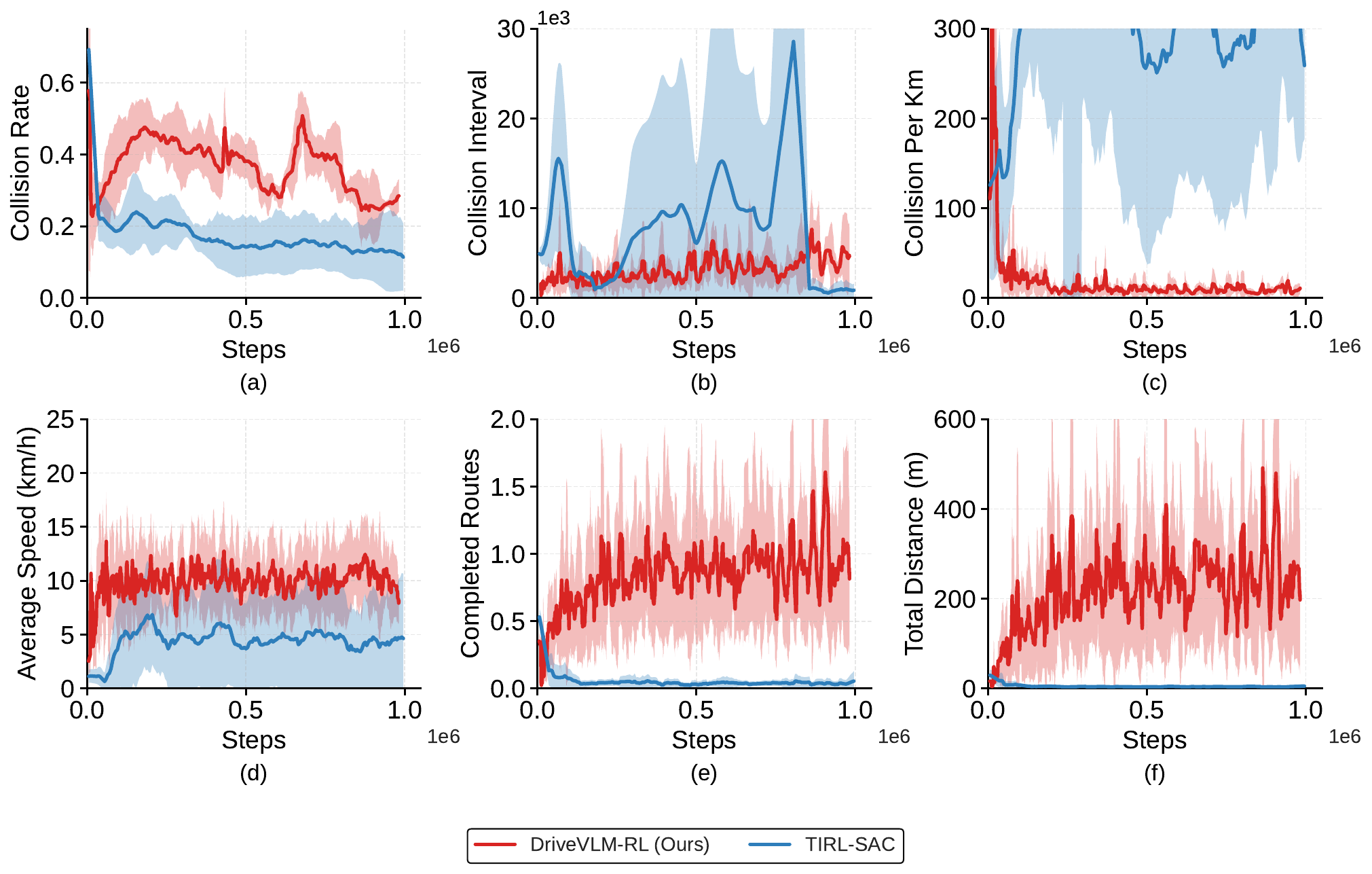}}
  \caption{Training curves for expert-designed 
binary reward baselines. 
(a)~Collision rate; (b)~Collision interval; 
(c)~Collision per km; (d)~Average speed; 
(e)~Completed routes; (f)~Total distance. 
DriveVLM-RL (red) progressively improves 
navigation capability with route completion 
reaching 1.5--2.0 and total distance exceeding 
400~m, while TIRL-SAC (blue) converges to a 
near-stationary policy with negligible route 
completion and extremely high collision-per-km 
despite a similar collision rate, confirming 
the sparse penalty failure mode.}
  \label{[main]-expert-binary}
\end{figure}

\subsection{Main Results}
Table~\ref{tab:carla_train} reports the mean and 
standard deviation of key metrics at the final 
training checkpoint, averaged over three independent 
runs with different random seeds to ensure robustness 
and reliability. Unlike VLM-RL~\citep{huang2025vlm}, 
which evaluates only vehicle interactions, we construct 
a more challenging and realistic environment that 
additionally includes pedestrians, motorcycles, and 
bicycles. To ensure fair comparison under this setting, 
all baseline models are retrained from scratch in our 
environment. The results are organized by reward 
design category to facilitate analysis of different 
paradigm strengths and limitations. We also report 
both training dynamics 
(Figs.~\ref{[main]-expert-binary}--\ref{[main]-vlm-driving}) 
and testing performance 
(Table~\ref{tab:carla_test}) to provide a complete 
picture of learned policy quality.

\subsubsection{Training Performance Analysis}

\textit{1) Comparison with expert-designed binary rewards.}
As shown in Fig.~\ref{[main]-expert-binary} and Table~\ref{tab:carla_train}, TIRL-SAC achieves a relatively low collision rate (CR~$= 0.083$) during training, which may appear favorable at first glance. However, the corresponding efficiency metrics reveal that the agent barely moves: route completion is only 0.26 and total distance is merely 14.18~m per episode. This reflects a well-known failure mode of sparse binary penalties, where the agent learns to minimize collision risk by suppressing forward movement entirely rather than developing genuine driving competence. In contrast, DriveVLM-RL progressively increases both average speed and route completion throughout training while maintaining a substantially lower collision rate than navigation-capable baselines (Fig.~\ref{[main]-expert-binary}(d)--(f)), demonstrating that the learned policy achieves purposeful navigation rather than passive risk avoidance.

\begin{figure}[!t]
  \centerline{\includegraphics[width=0.993\textwidth]
  {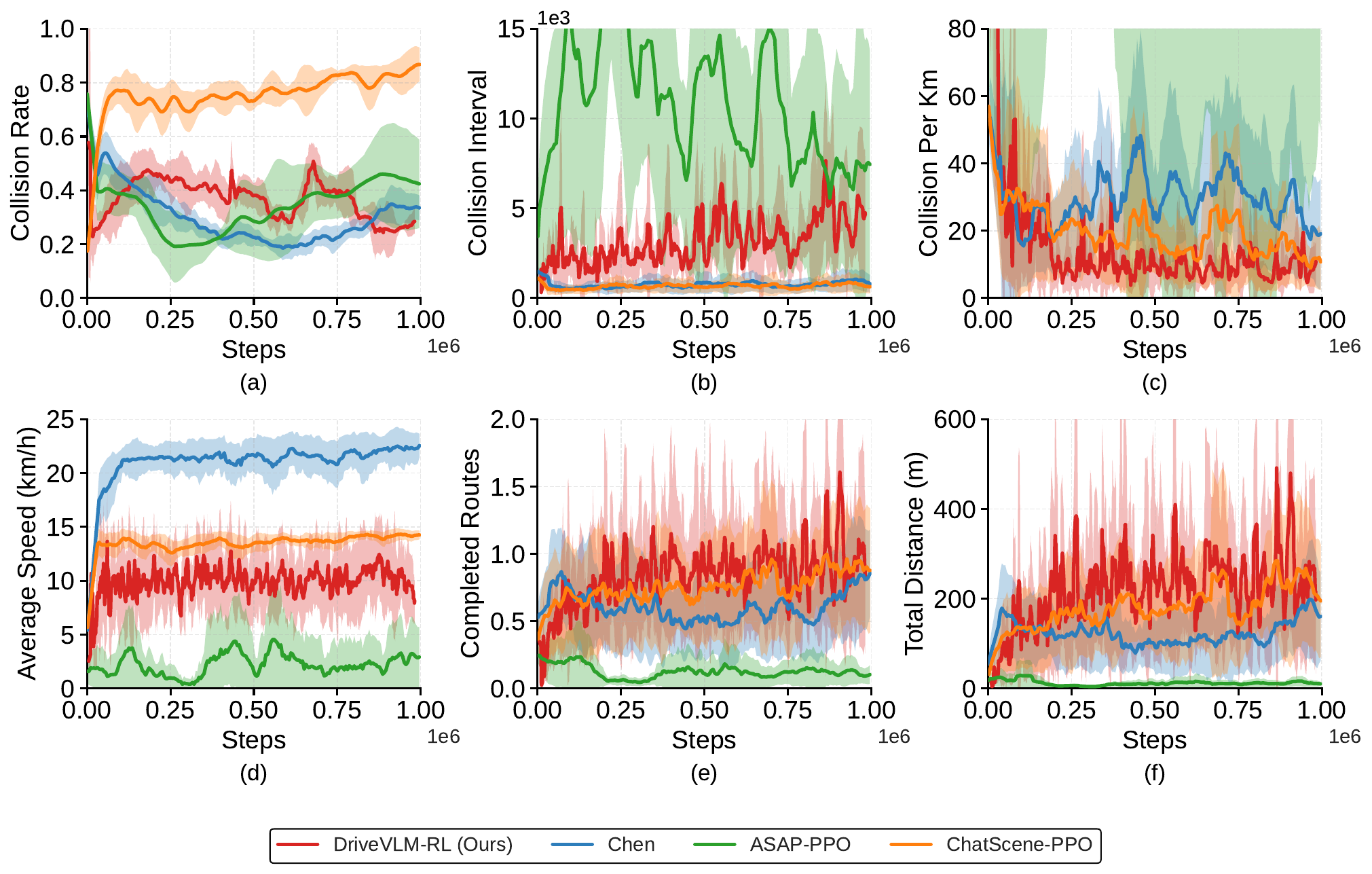}}
  \caption{Training curves for expert-designed 
summation reward baselines. 
(a)~Collision rate; (b)~Collision interval; 
(c)~Collision per km; (d)~Average speed; 
(e)~Completed routes; (f)~Total distance. 
DriveVLM-RL (red) progressively reduces its 
collision rate to 0.15--0.20 while maintaining 
competitive route completion, whereas 
ChatScene-PPO (orange) and Chen (blue) sustain 
high collision rates throughout training despite 
achieving high speeds, and ASAP-PPO (green) nearly 
stalls with negligible forward progress.}
  \label{[main]-expert}
\end{figure}

\textit{2) Comparison with summation-reward methods.}
As illustrated in Fig.~\ref{[main]-expert} and Table~\ref{tab:carla_train}, Chen-SAC employs a weighted reward combining speed incentives, collision penalties, lane centering, and steering smoothness. This design achieves the highest average speed among all baselines (AS~$= 25.06$~km/h), yet at a significant safety cost: a collision rate of 0.293 with a DCF of 2.71 collisions per kilometer. As shown in Figs.~\ref{[main]-expert}(a)--(c), Chen-SAC maintains consistently high collision rates throughout the entire training process, indicating that its reward function prioritizes driving efficiency over safety.

ASAP-PPO adopts a more conservative design with explicit penalties for boundary violations. It achieves an extremely low collision speed (CS~$= 0.04$~km/h), yet its collision rate remains high at 0.403, indicating that the agent frequently contacts surrounding objects at very low speed. This creeping behavior severely limits navigation performance, yielding only 0.75 route completions and 67.17~m traveled per episode.

ChatScene-PPO penalizes abrupt control actions and achieves competitive navigation (RC~$= 2.42$, TD~$= 672.3$~m). However, its collision rate of 0.817, DCF of 1.66, and TCF of 0.597 reveal a fundamentally unsafe driving profile. Fig.~\ref{[main]-expert} shows that ChatScene-PPO rapidly learns to navigate routes but sustains high collision rates throughout training, without developing any capacity for proactive risk anticipation.

DriveVLM-RL achieves a substantially more balanced profile. With 3.97 route completions and 1347.0~m traveled per episode, it leads all summation-reward baselines in navigation performance while simultaneously reducing collision rate by 77\% relative to ChatScene-PPO (from 0.817 to 0.190) and by 35\% relative to Chen-SAC (from 0.293 to 0.190). The collision speed of 0.004~km/h and ICT of 19043 steps further confirm that DriveVLM-RL encounters far fewer and less severe safety-critical events throughout training.

\textit{3) Comparison with LLM-designed methods.}
As shown in Fig.~\ref{[main]-llm} and Table~\ref{tab:carla_train}, both Revolve and Revolve-auto leverage LLMs to generate reward function code through evolutionary search with human feedback. These methods demonstrate strong navigation capability, with Revolve completing 2.78 routes at 18.46~km/h and Revolve-auto completing 3.65 routes at 17.92~km/h. However, both methods exhibit persistently high collision rates throughout training (0.767 and 0.930 respectively), with collision speeds of 0.53 and 0.50~km/h, indicating that LLM-generated reward code cannot reliably encode safe driving behavior in heterogeneous traffic. As visible in Fig.~\ref{[main]-llm}(a), collision rates for both methods plateau near 1.0 and show no meaningful reduction across the full training horizon.

DriveVLM-RL achieves a 75\% reduction in collision rate compared to Revolve (from 0.767 to 0.190) and a 80\% reduction compared to Revolve-auto (from 0.930 to 0.190), while maintaining competitive or superior route completion. This advantage stems from a fundamental difference in reward design: LLM-based approaches translate high-level safety specifications into static code functions that cannot respond to evolving visual context, whereas DriveVLM-RL directly grounds safety assessment in multi-frame visual observations through LVLM reasoning.

\begin{figure}[!t]
  \centerline{\includegraphics[width=0.993\textwidth]
  {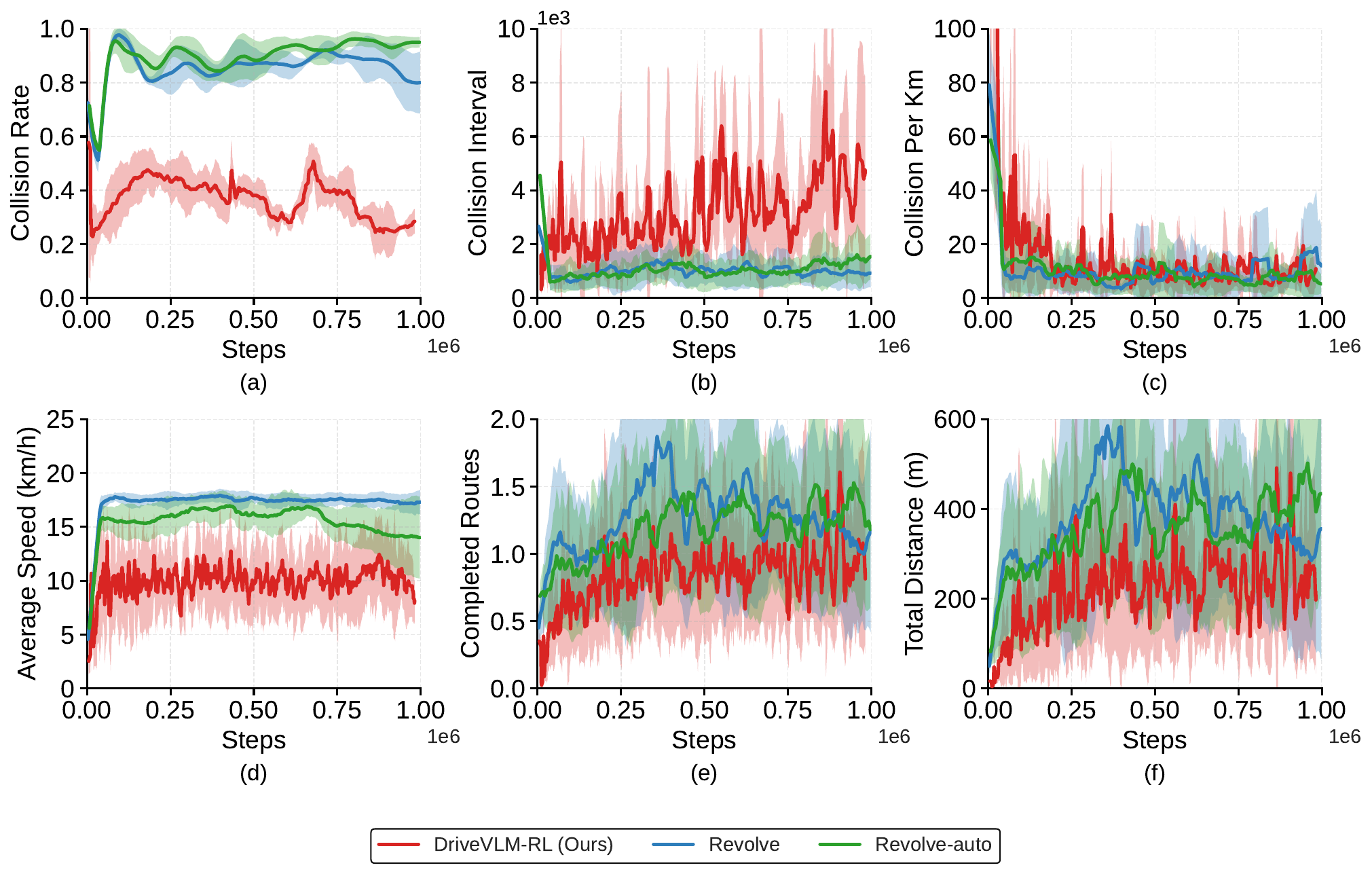}}
  \caption{Training curves for LLM-designed reward 
baselines. 
(a)~Collision rate; (b)~Collision interval; 
(c)~Collision per km; (d)~Average speed; 
(e)~Completed routes; (f)~Total distance. 
DriveVLM-RL (red) maintains a consistently lower 
collision rate (0.15--0.25) than Revolve (blue) 
and Revolve-auto (green), which both plateau near 
1.0 throughout training, while achieving 
comparable route completion and distance despite 
the substantially safer driving profile.}
  \label{[main]-llm}
\end{figure}

\textit{4) Comparison with VLM-Based Robotic methods.}
As shown in Fig.~\ref{[main]-vlm-rob} and Table~\ref{tab:carla_train}, VLM-SR, RoboCLIP, and VLM-RM all exhibit severe performance degradation when transferred to the autonomous driving setting. VLM-SR achieves an average speed of only 1.49~km/h, completing 0.51 routes and covering 53.44~m per episode. RoboCLIP and VLM-RM show similarly poor navigation with speeds of 11.05 and 10.86~km/h and route completions below 1.0. As illustrated in Figs.~\ref{[main]-vlm-rob}(d)--(f), all three methods fail to develop meaningful forward progress throughout training, with speed and distance metrics remaining near zero across most of the training horizon.

This failure reflects a fundamental domain gap between robotic manipulation and autonomous driving. In manipulation tasks, visual goals such as ``grasp the red cube'' can be precisely specified and measured through single-frame image similarity. In contrast, driving objectives are inherently ambiguous and temporally dependent: the risk posed by a pedestrian near a crosswalk cannot be captured by a fixed language goal or a static image comparison. Autonomous driving requires reward signals that encode contextual evolution, motion intent, and multi-agent dynamics, which single-frame CLIP similarity scores are unable to provide.

\begin{figure}[!t]
  \centerline{\includegraphics[width=0.993\textwidth]
  {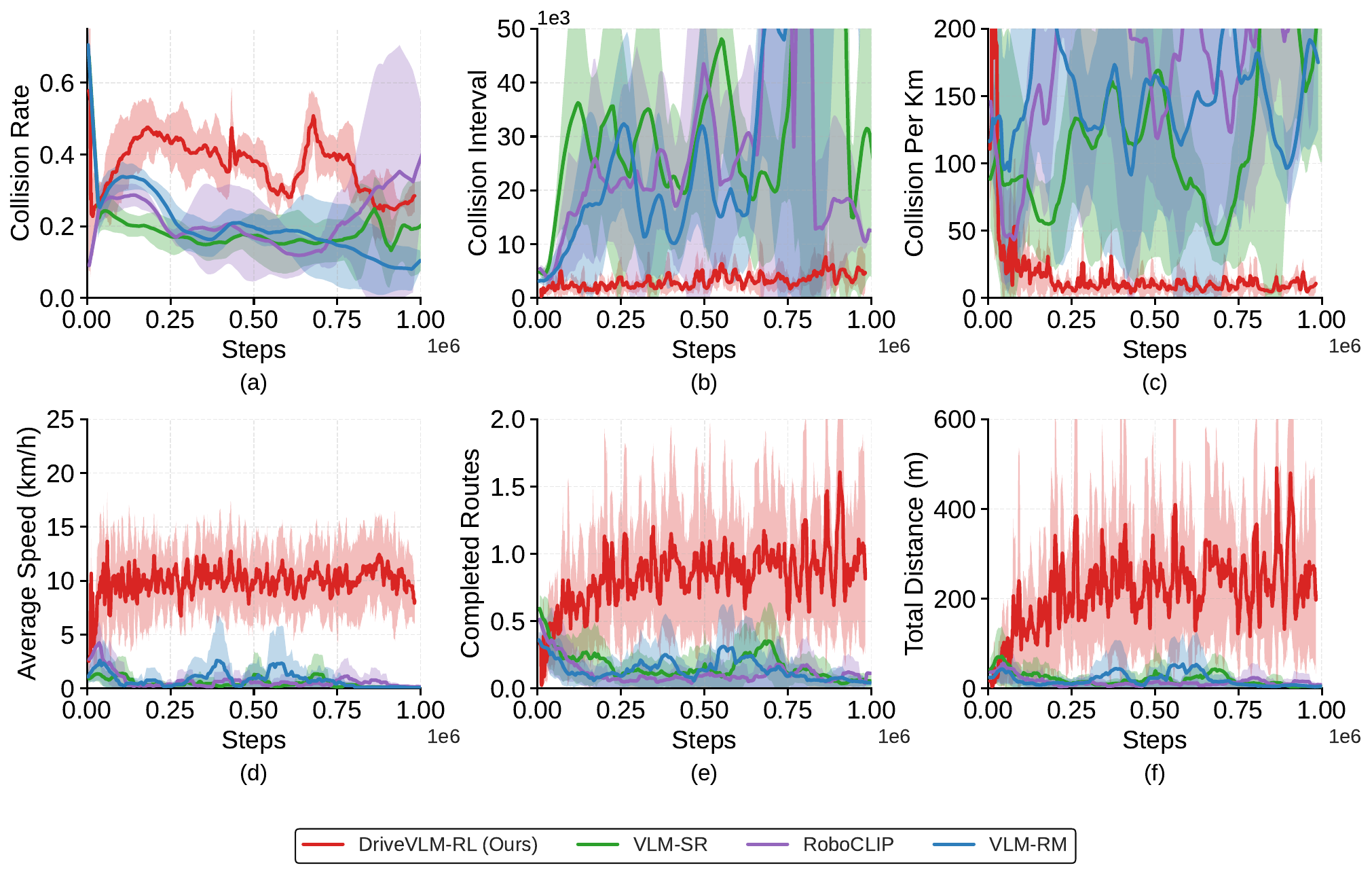}}
  \caption{Training curves for VLM-based robotic 
reward baselines. 
(a)~Collision rate; (b)~Collision interval; 
(c)~Collision per km; (d)~Average speed; 
(e)~Completed routes; (f)~Total distance. 
DriveVLM-RL (red) demonstrates substantially 
higher navigation capability across all metrics, 
while VLM-SR (green), RoboCLIP (purple), and 
VLM-RM (blue) fail to develop meaningful driving 
behaviors, remaining near-stationary throughout 
training despite low collision rates.}
  \label{[main]-vlm-rob}
\end{figure}

\begin{figure}[!t]
  \centerline{\includegraphics[width=0.993\textwidth]
  {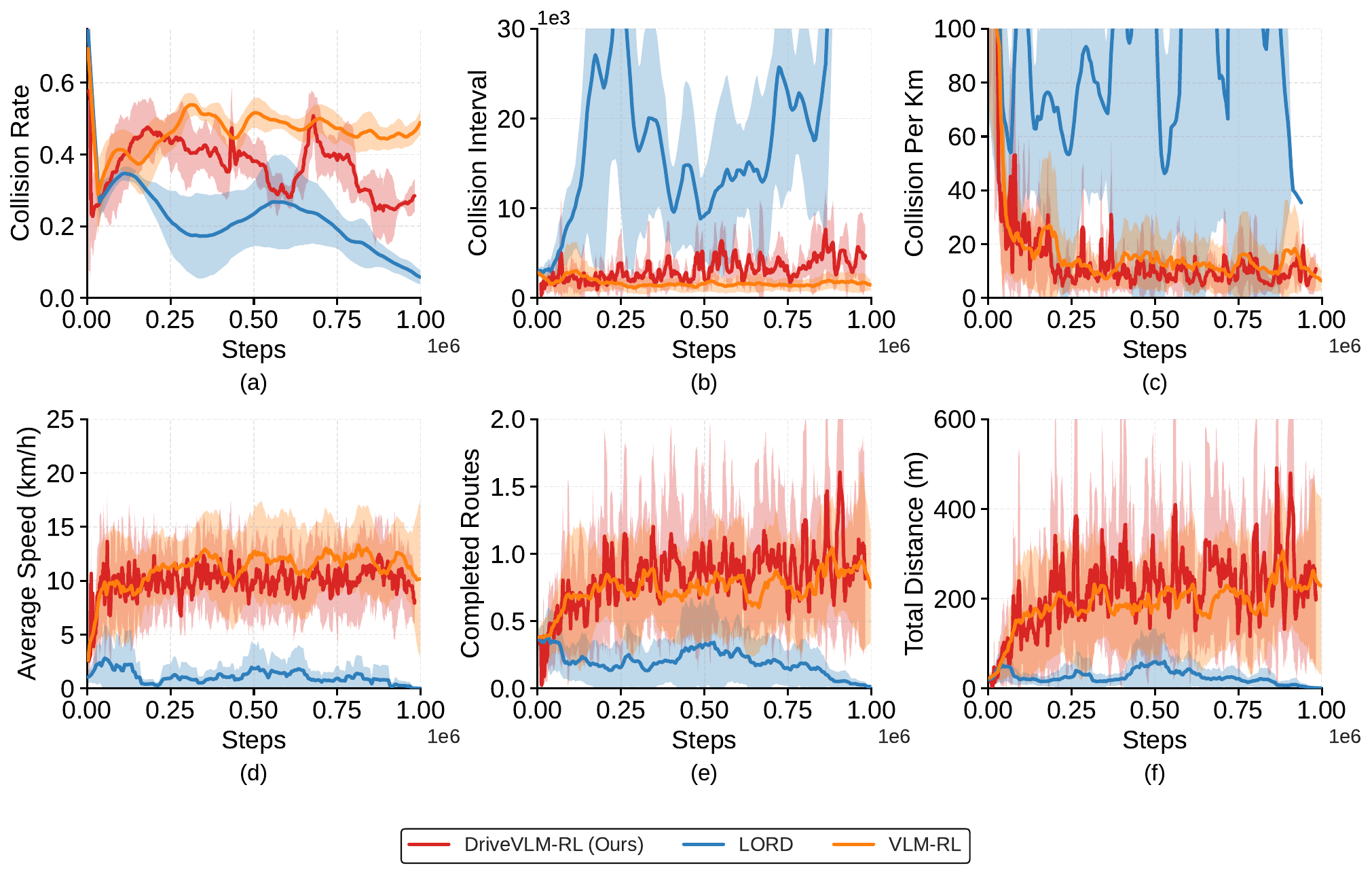}}
  \caption{Training curves for VLM-based autonomous 
driving reward baselines. 
(a)~Collision rate; (b)~Collision interval; 
(c)~Collision per km; (d)~Average speed; 
(e)~Completed routes; (f)~Total distance. 
DriveVLM-RL (red) achieves the lowest collision 
rate and highest navigation performance, while 
LORD (blue) nearly stalls with minimal forward 
progress despite low collision frequency.}
  \label{[main]-vlm-driving}
\end{figure}

\textit{5) Comparison with VLM-Based Driving Methods.}
As shown in Fig.~\ref{[main]-vlm-driving} and Table~\ref{tab:carla_train}, LORD and VLM-RL represent the current state of the art in VLM-as-Reward approaches for autonomous driving.

LORD uses negative language goals to describe dangerous states. Despite maintaining a low collision rate (CR~$= 0.063$), it achieves only 0.72 route completions and 111.10~m traveled per episode, indicating that the agent learns to avoid collisions by suppressing movement. This behavior mirrors the sparse-penalty failure mode observed in TIRL-SAC, and suggests that negative-only language goals are insufficient to provide the positive driving incentive needed for active navigation.

VLM-RL, our previous work using contrasting language goals with CLIP, achieves substantially better navigation performance with 2.77 route completions at 22.53~km/h and 806.97~m traveled. However, the collision rate of 0.407 reveals a key limitation of static CLG formulations: fixed language goals and single-frame BEV observations cannot capture the evolving, context-dependent safety cues present in heterogeneous traffic. Situations such as a pedestrian stepping off a curb or a motorcycle preparing to cut in require temporal reasoning that static CLIP similarity cannot provide.

DriveVLM-RL substantially improves upon both baselines. Compared to VLM-RL, our dual-pathway architecture achieves a 53\% reduction in collision rate (from 0.407 to 0.190), a 43\% increase in route completions (from 2.77 to 3.97), and a 67\% increase in distance traveled (from 806.97 to 1347.0~m). Compared to LORD, DriveVLM-RL achieves dramatically better navigation (3.97 versus 0.72 routes) while still maintaining a competitive safety profile, confirming that the Dynamic Pathway successfully decouples safety from mobility by grounding risk reasoning in multi-frame visual context rather than relying on conservative movement suppression.

\subsubsection{Performance Evaluation in Testing}

To assess the generalization of learned policies, we evaluate all methods on 10 predefined routes in Town~2 that were not encountered during training, as shown in Table~\ref{tab:carla_test}. Results are averaged over three random seeds, with SR and AC serving as the primary indicators of deployment readiness.

\textbf{Expert-designed methods.} TIRL-SAC, which already exhibited movement suppression during training, collapses entirely at test time: average speed drops to 0.45~km/h, route completion to 0.01, and SR to 0\%. Chen-SAC preserves its speed advantage (AS~$= 24.32$~km/h) and achieves a 50\% SR, but its collision speed of 16.04~km/h and AC of 0.50 indicate severe collision risk in unseen scenarios. ASAP-PPO achieves 0\% SR with the highest AC among all evaluated methods (0.67), revealing that its conservative training behavior does not transfer to meaningful driving performance at test time. ChatScene-PPO reaches a 40\% SR with TD of 127.85~m but maintains a high collision speed of 6.05~km/h and AC of 0.60, reflecting that smoothness-based reward shaping does not generalize to safety-critical generalization scenarios.

\textbf{LLM-based methods.} Both Revolve and Revolve-auto achieve a 40\% SR, demonstrating partial generalization of their LLM-generated reward functions. However, both methods exhibit very high collision speeds of 10.33 and 7.80~km/h respectively, confirming that reward functions generated from natural language code specifications do not provide sufficient visual grounding to produce safe behavior in previously unseen traffic conditions.

\textbf{VLM-based robotic methods.} VLM-SR, RoboCLIP, and VLM-RM all achieve 0\% SR with near-zero navigation capability, confirming that the domain gap identified during training analysis persists at test time. The extremely low movement of these methods, particularly VLM-RM (AS~$= 0.08$~km/h, AC~$= 0.00$), reflects passive safety achieved through complete immobility rather than genuine risk management.

\textbf{VLM-based driving methods.} LORD, despite its low training collision rate, collapses to 0\% SR at test time with only 4.10~m traveled per episode, indicating that its negative-goal formulation overfits to the training environment and fails to generalize to new routes. VLM-RL achieves a 40\% SR and 138.08~m total distance, but its collision speed of 10.09~km/h highlights the limitation of static CLIP-based rewards when confronting diverse unseen traffic interactions.

DriveVLM-RL achieves the highest SR of 57\% and the greatest total distance of 186.59~m among all evaluated methods. Compared to VLM-RL, this represents a 43\% improvement in SR (from 0.40 to 0.57), an 83\% reduction in collision speed (from 10.09 to 1.75~km/h), and a 35\% increase in total distance traveled (from 138.08 to 186.59~m). Compared to the expert-designed ChatScene-PPO baseline, DriveVLM-RL improves SR by 43\% (from 0.40 to 0.57) while reducing collision speed by 71\% (from 6.05 to 1.75~km/h). These results demonstrate that the dual-pathway architecture, by grounding safety assessment in temporally-aware visual semantics rather than fixed reward heuristics, generalizes more effectively to unseen driving conditions.

\begin{table}[!t]
\begin{small}
\caption{Performance comparison with baselines during testing. Mean and standard deviation over 3 seeds. The best results are marked in \textbf{bold}.}
\label{tab:carla_test}
\centering
\renewcommand{\arraystretch}{1.4}
\setlength{\tabcolsep}{4pt}
\begin{tabular*}{\columnwidth}{@{\extracolsep{\fill}}lccccccc@{}}
\toprule
Model & Reference & AS~$\uparrow$ & RC~$\uparrow$ & TD~$\uparrow$ & CS~$\downarrow$ & SR~$\uparrow$ & AC~$\downarrow$ \\
\midrule

\rowcolor{gray!15}
\multicolumn{8}{l}{\textit{\textbf{Expert-designed Reward Methods}}} \\
TIRL-SAC & TR-C'22 & 0.45 {\tiny $\pm$ 0.77} & 0.01 {\tiny $\pm$ 0.01} & 1.49 {\tiny $\pm$ 2.32} & 0.29 {\tiny $\pm$ 0.50} & 0.00 {\tiny $\pm$ 0.00} & 0.07 {\tiny $\pm$ 0.12} \\
Chen-SAC & T-ITS'22 & \textbf{24.32} {\tiny $\pm$ 0.46} & 0.49 {\tiny $\pm$ 0.08} & 162.01 {\tiny $\pm$ 17.67} & 16.04 {\tiny $\pm$ 2.51} & 0.50 {\tiny $\pm$ 0.10} & 0.50 {\tiny $\pm$ 0.10} \\
ASAP-PPO & RSS'23 & 11.53 {\tiny $\pm$ 10.22} & 0.12 {\tiny $\pm$ 0.11} & 25.00 {\tiny $\pm$ 24.92} & 7.07 {\tiny $\pm$ 5.96} & 0.00 {\tiny $\pm$ 0.00} & 0.67 {\tiny $\pm$ 0.32} \\
ChatScene-PPO & CVPR'24 & 14.78 {\tiny $\pm$ 0.30} & 0.44 {\tiny $\pm$ 0.14} & 127.85 {\tiny $\pm$ 10.39} & 6.05 {\tiny $\pm$ 1.28} & 0.40 {\tiny $\pm$ 0.10} & 0.60 {\tiny $\pm$ 0.10} \\
\midrule

\rowcolor{gray!15}
\multicolumn{8}{l}{\textit{\textbf{LLM-based Reward Methods}}} \\
Revolve & ICLR'25 & 17.42 {\tiny $\pm$ 0.80} & 0.40 {\tiny $\pm$ 0.12} & 134.37 {\tiny $\pm$ 15.26} & 10.33 {\tiny $\pm$ 2.25} & 0.40 {\tiny $\pm$ 0.20} & 0.60 {\tiny $\pm$ 0.20} \\
Revolve-auto & ICLR'25 & 14.12 {\tiny $\pm$ 3.07} & 0.33 {\tiny $\pm$ 0.12} & 129.14 {\tiny $\pm$ 33.22} & 7.80 {\tiny $\pm$ 1.06} & 0.40 {\tiny $\pm$ 0.20} & 0.60 {\tiny $\pm$ 0.20} \\
\midrule

\rowcolor{gray!15}
\multicolumn{8}{l}{\textit{\textbf{VLM-based Reward Methods (Robotic)}}} \\
VLM-SR & NeurIPS'23 & 0.06 {\tiny $\pm$ 0.05} & 0.01 {\tiny $\pm$ 0.00} & 2.26 {\tiny $\pm$ 1.26} & 0.66 {\tiny $\pm$ 1.14} & 0.00 {\tiny $\pm$ 0.00} & 0.07 {\tiny $\pm$ 0.12} \\
RoboCLIP & NeurIPS'23 & 0.13 {\tiny $\pm$ 0.09} & 0.02 {\tiny $\pm$ 0.01} & 3.46 {\tiny $\pm$ 2.32} & 0.01 {\tiny $\pm$ 0.02} & 0.00 {\tiny $\pm$ 0.00} & 0.03 {\tiny $\pm$ 0.06} \\
VLM-RM & ICLR'24 & 0.08 {\tiny $\pm$ 0.01} & 0.02 {\tiny $\pm$ 0.00} & 3.60 {\tiny $\pm$ 0.38} & \textbf{0.00} {\tiny $\pm$ 0.00} & 0.00 {\tiny $\pm$ 0.00} & \textbf{0.00} {\tiny $\pm$ 0.00} \\
\midrule

\rowcolor{gray!15}
\multicolumn{8}{l}{\textit{\textbf{VLM-based Reward Methods (Autonomous Driving)}}} \\
LORD & WACV'25 & 0.36 {\tiny $\pm$ 0.59} & 0.03 {\tiny $\pm$ 0.03} & 4.10 {\tiny $\pm$ 6.11} & 1.52 {\tiny $\pm$ 2.63} & 0.00 {\tiny $\pm$ 0.00} & 0.07 {\tiny $\pm$ 0.12} \\
VLM-RL & TR-C'25 & 14.38 {\tiny $\pm$ 1.53} & 0.51 {\tiny $\pm$ 0.08} & 138.08 {\tiny $\pm$ 16.68} & 10.09 {\tiny $\pm$ 5.93} & 0.40 {\tiny $\pm$ 0.00} & 0.10 {\tiny $\pm$ 0.10} \\

\rowcolor{green!10}
DriveVLM-RL & Ours & 14.54 {\tiny $\pm$ 1.81} & \textbf{0.57} {\tiny $\pm$ 0.03} & \textbf{186.59} {\tiny $\pm$ 14.00} & 1.75 {\tiny $\pm$ 3.02} & \textbf{0.57} {\tiny $\pm$ 0.15} & 0.20 {\tiny $\pm$ 0.26} \\
\bottomrule
\end{tabular*}
\end{small}
\end{table}

\subsection{No-Reward-After-Collision Experiment}

Traditional RL approaches depend on collision-based 
trial-and-error, creating an insurmountable barrier 
to real-world deployment where physical crashes are 
unacceptable. To investigate whether DriveVLM-RL 
can learn safe behaviors through semantic 
understanding alone, we conduct an extreme ablation 
study by completely removing the collision penalty 
term, modifying Eq.~(\ref{eq11}) to:
\begin{equation}
R_{\text{final}}(o_t) = R_{\text{shaping}}(o_t),
\quad R_{\text{penalty}} = 0
\label{eq:extreme}
\end{equation}

\subsubsection{Training Performance}

All three methods are retrained under the extreme 
behavior setting, where collisions with vulnerable 
road users (pedestrians, cyclists, and motorcycles) 
incur no penalty and do not terminate the episode, 
whereas vehicle collisions still terminate the 
episode but carry no penalty reward. To quantify 
behavior under this setting, we additionally define 
two extreme-collision metrics that are specific to 
this experiment. The \emph{Extreme Collision 
Number} (ExtCol) is the total number of 
collisions with vulnerable road users accumulated 
over the course of training, and the \emph{Extreme 
Collision Rate} (ExtCR) is the fraction of the most 
recent 100 episodes that contain at least one such 
collision, computed analogously to CR. As a result, 
CR reflects vehicle-collision 
avoidance, whereas ExtCol and ExtCR isolate the 
policy's ability to avoid vulnerable road users 
through learned semantics alone, since such 
collisions carry no penalty signal. We report the 
training dynamics in Fig.~\ref{[main]-extreme} and 
the final-checkpoint metrics in 
Table~\ref{tab:extreme_train}.

\begin{table*}[!t]
\begin{small}
\caption{Training performance under the 
no-reward-after-collision setting 
($R_{\text{penalty}} = 0$). Mean and standard 
deviation over three seeds. The best result in 
each column is in \textbf{bold}.}
\label{tab:extreme_train}
\centering
\renewcommand{\arraystretch}{1.4}
\setlength{\tabcolsep}{6pt}
\begin{adjustbox}{center}
\begin{tabular}{@{}lcccccccc@{}}
\toprule
Model & Reference & AS~$\uparrow$ & RC~$\uparrow$ & 
TD~$\uparrow$ & CR~$\downarrow$ & ICT~$\uparrow$ & 
ExtCR~$\downarrow$ & ExtCol~$\downarrow$ \\
\midrule
ChatScene-PPO & CVPR'24
  & 15.47 {\tiny $\pm$ 0.07}
  & \textbf{3.29} {\tiny $\pm$ 0.27}
  & \textbf{1390.1} {\tiny $\pm$ 429.3}
  & 0.813 {\tiny $\pm$ 0.100}
  & 4013 {\tiny $\pm$ 451}
  & 0.327 {\tiny $\pm$ 0.093}
  & 556.0 {\tiny $\pm$ 11.1} \\
VLM-RL & TR-C'25
  & \textbf{24.16} {\tiny $\pm$ 0.65}
  & 2.79 {\tiny $\pm$ 0.36}
  & 1024.4 {\tiny $\pm$ 218.4}
  & \textbf{0.170} {\tiny $\pm$ 0.056}
  & \textbf{12602} {\tiny $\pm$ 3196}
  & 0.337 {\tiny $\pm$ 0.047}
  & 607.7 {\tiny $\pm$ 34.5} \\
\rowcolor{green!10}
DriveVLM-RL & Ours
  & 22.89 {\tiny $\pm$ 1.69}
  & 2.72 {\tiny $\pm$ 0.17}
  & 865.0 {\tiny $\pm$ 162.3}
  & 0.230 {\tiny $\pm$ 0.053}
  & 9116 {\tiny $\pm$ 3812}
  & \textbf{0.273} {\tiny $\pm$ 0.032}
  & \textbf{476.3} {\tiny $\pm$ 175.5} \\
\bottomrule
\end{tabular}
\end{adjustbox}
\end{small}
\end{table*}

\begin{figure}[!t]
  \centerline{\includegraphics[width=0.993\textwidth]{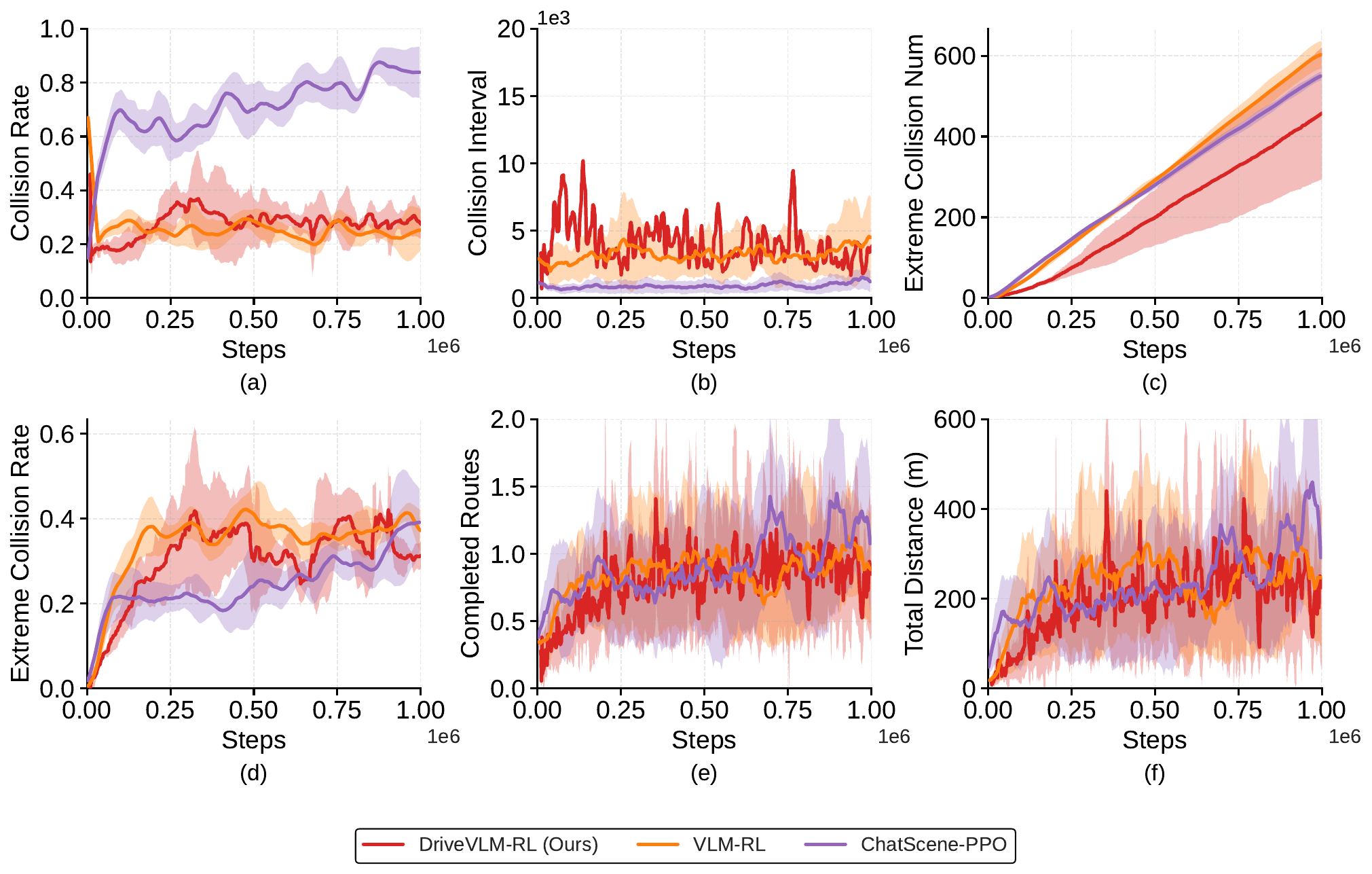}}
  \caption{Training curves under the 
  no-reward-after-collision setting 
  ($R_{\text{penalty}} = 0$). 
  (a)~Collision rate; 
  (b)~Collision interval; 
  (c)~Extreme collision number; 
  (d)~Extreme collision rate; 
  (e)~Completed routes; 
  (f)~Total distance. 
  Although VLM-RL and ChatScene-PPO achieve 
  comparable or higher navigation, DriveVLM-RL 
  (red) attains the lowest cumulative extreme 
  collision count (c) and the lowest extreme 
  collision rate (d), indicating the strongest 
  protection of vulnerable road users when no 
  explicit collision penalty is provided.}
  \label{[main]-extreme}
\end{figure}

\textit{ChatScene-PPO.} Its expert smoothness 
reward without any penalty produces aggressive 
driving. As shown in Fig.~\ref{[main]-extreme}(a), 
the vehicle collision rate rises above 0.8 within 
250k steps and remains persistently high 
(CR~$= 0.813$). This behavior yields the highest 
route completion (RC~$= 3.29$) and total distance 
(TD~$= 1390.1$~m), but at a catastrophic safety 
cost: it still incurs a large number of extreme 
collisions (ExtCol~$= 556.0$, 
Fig.~\ref{[main]-extreme}(c)), confirming that the 
policy maximizes forward progress while largely 
disregarding both vehicles and vulnerable road 
users once the penalty is removed.

\textit{VLM-RL.} The CLIP contrastive reward yields 
the strongest vehicle-collision avoidance 
(CR~$= 0.170$) and the longest collision interval 
(ICT~$= 12602$ steps), together with high average 
speed (AS~$= 24.16$~km/h). However, its fixed 
language goal is defined around car-accident 
semantics and does not generalize to vulnerable 
road users: VLM-RL accumulates the most extreme 
collisions (ExtCol~$= 607.7$, 
Fig.~\ref{[main]-extreme}(c)) and a higher extreme 
collision rate (ExtCR~$= 0.337$). This exposes the 
limitation of static single-frame language goals, 
which cannot capture context-dependent risks such 
as pedestrians stepping onto the road or 
motorcycles initiating cut-ins.

\textit{DriveVLM-RL.} DriveVLM-RL attains the 
lowest extreme collision rate (ExtCR~$= 0.273$) and 
the lowest extreme collision number 
(ExtCol~$= 476.3$), which is clearly the lowest 
trajectory in Fig.~\ref{[main]-extreme}(c). In 
other words, it provides the strongest protection 
of vulnerable road users when no explicit penalty 
is available, while remaining competitive in 
navigation (AS~$= 22.89$~km/h, RC~$= 2.72$) and in 
vehicle-collision safety (CR~$= 0.230$). Because the 
Dynamic Pathway evaluates temporal, 
context-dependent risk during training, the policy 
proactively anticipates approaching pedestrians and 
cut-in motorcycles even in the complete absence of 
penalty signals.

Because the extreme-collision metrics isolate 
penalty-free semantic avoidance, they are precisely 
where DriveVLM-RL holds a decisive advantage. This 
validates that $R_{\text{shaping}}$ alone provides 
sufficient proactive guidance for the agent to 
internalize a predictive safety criterion rather 
than merely reacting to collision outcomes.

\subsubsection{Testing Performance}

We further evaluate the policies trained under the 
no-reward-after-collision setting on the same test 
routes used in the main experiments. Following the 
main testing protocol, we compare against 
ChatScene-PPO and VLM-RL. Table~\ref{tab:test} 
reports the results.

\begin{table}[!t]
\caption{No-reward-after-collision testing 
performance. Mean and standard deviation over 
three random seeds. The best results in each 
column are highlighted in \textbf{bold}.}
\label{tab:test}
\centering
\renewcommand{\arraystretch}{1.3}
\setlength{\tabcolsep}{4pt}

\begin{tabular*}{\columnwidth}{@{\extracolsep{\fill}}lccccccc@{}}
\toprule
Model & Reference & AS$\uparrow$ & RC$\uparrow$ & TD$\uparrow$ & CS$\downarrow$ & SR$\uparrow$ & AC$\downarrow$ \\
\midrule

ChatScene-PPO & CVPR'24
& \textbf{15.20} {\tiny $\pm$ 0.39}
& 0.38 {\tiny $\pm$ 0.05}
& 137.51 {\tiny $\pm$ 12.42}
& 3.91 {\tiny $\pm$ 1.75}
& 0.47 {\tiny $\pm$ 0.12}
& 0.53 {\tiny $\pm$ 0.12} \\

VLM-RL & TR-C'25
& 14.52 {\tiny $\pm$ 0.56}
& \textbf{0.49} {\tiny $\pm$ 0.09}
& 136.33 {\tiny $\pm$ 31.86}
& 4.59 {\tiny $\pm$ 4.09}
& 0.40 {\tiny $\pm$ 0.10}
& \textbf{0.13} {\tiny $\pm$ 0.06} \\

\rowcolor{green!10}
DriveVLM-RL & Ours
& 15.17 {\tiny $\pm$ 1.85}
& 0.44 {\tiny $\pm$ 0.03}
& \textbf{149.69} {\tiny $\pm$ 34.85}
& \textbf{0.69} {\tiny $\pm$ 1.09}
& \textbf{0.50} {\tiny $\pm$ 0.10}
& 0.20 {\tiny $\pm$ 0.20} \\

\bottomrule
\end{tabular*}
\end{table}

At test time, DriveVLM-RL achieves the highest 
success rate (SR~$= 0.50$), the longest travel 
distance (TD~$= 149.69$~m), and by far the lowest 
collision severity (CS~$= 0.69$~km/h). The two 
baselines exhibit complementary failure modes. 
VLM-RL records the fewest collisions per episode 
(AC~$= 0.13$), yet when contact does occur it 
happens at nearly seven times higher speed 
(CS~$= 4.59$~km/h), so its collisions are rare but 
violent. ChatScene-PPO instead incurs both the most 
collisions per episode (AC~$= 0.53$) and high-speed 
impacts (CS~$= 3.91$~km/h). DriveVLM-RL collides 
slightly more often than VLM-RL (AC~$= 0.20$), but 
its impacts occur at near-stationary speed, 
indicating that the policy has already decelerated 
before contact. This proactive braking, rather than 
mere collision avoidance, is what yields the highest 
success rate among all methods, confirming that the 
semantic safety learned by DriveVLM-RL during 
training transfers to unseen test routes.

\subsection{Generalization to Unseen Environments}
\subsubsection{Cross-Town Generalization}
To evaluate generalization beyond the training 
environment, we test all methods across five 
distinct CARLA urban layouts (Towns~1--5), where 
Town~2 is the training environment and 
Towns~1, 3--5 are fully out-of-distribution. 
Results are summarized in 
Table~\ref{tab:crosstown}.
Under this severe distribution shift, the task
success rate of all methods drops substantially,
and no method dominates on SR: averaged over the
four out-of-distribution towns, ChatScene-PPO,
VLM-RL, and DriveVLM-RL reach mean SR of $14.3\%$,
$6.8\%$, and $5.8\%$, respectively. However,
DriveVLM-RL's distinctive safety advantage persists
out of distribution: it attains the \emph{lowest
collision speed} in every unseen town (CS~$=1.59$,
$10.97$, $3.57$, and $3.57$~km/h for Towns~1, 3, 4,
and 5), yielding a mean out-of-distribution CS of
$4.93$~km/h, roughly $55\%$ lower than VLM-RL
($11.04$~km/h) and ChatScene-PPO ($10.84$~km/h). In
other words, when the policy does fail in an
unfamiliar environment, DriveVLM-RL's collisions are
markedly less severe, consistent with the semantic
risk reasoning learned during training. DriveVLM-RL
also retains competitive task progress, achieving
the highest route completion in Towns~3 and~5
(RC~$=0.38$ and $0.30$) and the longest travel
distance in Town~5 ($76.19$~m). The large-scale
highway layout of Town~4 and the multi-level
structure of Town~5 remain the most challenging for
all methods, indicating that highway-specific and
multi-level driving are important directions for
future work.

\begin{table*}[!t]
\caption{Cross-town generalization performance.
Models are trained exclusively on Town~2 and
evaluated on Towns~1--5. Town~2 (marked
$^{\dagger}$) is the in-distribution training
environment; Towns~1, 3, 4, and 5 are
out-of-distribution. Mean and standard
deviation over 3 seeds. The best result in each
column within a town is in \textbf{bold}.}
\label{tab:crosstown}
\centering
\renewcommand{\arraystretch}{1.25}
\setlength{\tabcolsep}{5pt}
\begin{tabular*}{\textwidth}{@{\extracolsep{\fill}}cccccccc@{}}
\toprule
Town & Model & AS~$\uparrow$ & RC~$\uparrow$ 
& TD~$\uparrow$ & CS~$\downarrow$ 
& SR~$\uparrow$ & AC~$\downarrow$ \\
\midrule

\multirow{3}{*}{Town 1}
& ChatScene-PPO
  & 15.57 {\tiny $\pm$ 0.07}
  & \textbf{0.33} {\tiny $\pm$ 0.04}
  & \textbf{275.59} {\tiny $\pm$ 53.61}
  & 4.87 {\tiny $\pm$ 0.48}
  & \textbf{0.30} {\tiny $\pm$ 0.10}
  & 0.70 {\tiny $\pm$ 0.10} \\
& VLM-RL
  & \textbf{16.25} {\tiny $\pm$ 1.47}
  & 0.27 {\tiny $\pm$ 0.08}
  & 138.66 {\tiny $\pm$ 8.40}
  & 10.64 {\tiny $\pm$ 3.24}
  & 0.03 {\tiny $\pm$ 0.06}
  & 0.40 {\tiny $\pm$ 0.10} \\
\rowcolor{green!10}
\cellcolor{white}& DriveVLM-RL
  & 13.98 {\tiny $\pm$ 2.08}
  & 0.21 {\tiny $\pm$ 0.03}
  & 125.39 {\tiny $\pm$ 33.13}
  & \textbf{1.59} {\tiny $\pm$ 0.80}
  & 0.03 {\tiny $\pm$ 0.06}
  & \textbf{0.27} {\tiny $\pm$ 0.06} \\
\midrule

\multirow{3}{*}{Town 2$^{\dagger}$}
& ChatScene-PPO
  & \textbf{14.78} {\tiny $\pm$ 0.30}
  & 0.44 {\tiny $\pm$ 0.14}
  & 127.85 {\tiny $\pm$ 10.39}
  & 6.05 {\tiny $\pm$ 1.28}
  & 0.40 {\tiny $\pm$ 0.10}
  & 0.60 {\tiny $\pm$ 0.10} \\
& VLM-RL
  & 14.38 {\tiny $\pm$ 1.53}
  & 0.51 {\tiny $\pm$ 0.08}
  & 138.08 {\tiny $\pm$ 16.68}
  & 10.09 {\tiny $\pm$ 5.93}
  & 0.40 {\tiny $\pm$ 0.00}
  & \textbf{0.10} {\tiny $\pm$ 0.10} \\
\rowcolor{green!10}
\cellcolor{white}& DriveVLM-RL
  & 14.54 {\tiny $\pm$ 1.81}
  & \textbf{0.57} {\tiny $\pm$ 0.03}
  & \textbf{186.59} {\tiny $\pm$ 14.00}
  & \textbf{1.75} {\tiny $\pm$ 3.02}
  & \textbf{0.57} {\tiny $\pm$ 0.15}
  & 0.20 {\tiny $\pm$ 0.26} \\
\midrule

\multirow{3}{*}{Town 3}
& ChatScene-PPO
  & \textbf{16.83} {\tiny $\pm$ 0.68}
  & 0.34 {\tiny $\pm$ 0.06}
  & \textbf{111.06} {\tiny $\pm$ 53.46}
  & 15.27 {\tiny $\pm$ 4.49}
  & \textbf{0.10} {\tiny $\pm$ 0.00}
  & 0.30 {\tiny $\pm$ 0.00} \\
& VLM-RL
  & 15.42 {\tiny $\pm$ 0.22}
  & 0.28 {\tiny $\pm$ 0.04}
  & 69.80 {\tiny $\pm$ 27.60}
  & 18.20 {\tiny $\pm$ 8.46}
  & 0.07 {\tiny $\pm$ 0.06}
  & \textbf{0.23} {\tiny $\pm$ 0.12} \\
\rowcolor{green!10}
\cellcolor{white}& DriveVLM-RL
  & 13.35 {\tiny $\pm$ 1.19}
  & \textbf{0.38} {\tiny $\pm$ 0.04}
  & 96.17 {\tiny $\pm$ 15.80}
  & \textbf{10.97} {\tiny $\pm$ 3.05}
  & \textbf{0.10} {\tiny $\pm$ 0.00}
  & 0.43 {\tiny $\pm$ 0.06} \\
\midrule

\multirow{3}{*}{Town 4}
& ChatScene-PPO
  & 19.88 {\tiny $\pm$ 0.99}
  & \textbf{0.27} {\tiny $\pm$ 0.02}
  & \textbf{494.91} {\tiny $\pm$ 53.16}
  & 15.20 {\tiny $\pm$ 3.16}
  & 0.10 {\tiny $\pm$ 0.00}
  & \textbf{0.23} {\tiny $\pm$ 0.15} \\
& VLM-RL
  & \textbf{20.29} {\tiny $\pm$ 0.95}
  & 0.18 {\tiny $\pm$ 0.02}
  & 355.34 {\tiny $\pm$ 41.04}
  & 8.54 {\tiny $\pm$ 4.16}
  & \textbf{0.17} {\tiny $\pm$ 0.06}
  & \textbf{0.23} {\tiny $\pm$ 0.06} \\
\rowcolor{green!10}
\cellcolor{white}& DriveVLM-RL
  & 18.99 {\tiny $\pm$ 1.15}
  & 0.18 {\tiny $\pm$ 0.07}
  & 409.86 {\tiny $\pm$ 128.30}
  & \textbf{3.57} {\tiny $\pm$ 2.45}
  & 0.07 {\tiny $\pm$ 0.12}
  & 0.27 {\tiny $\pm$ 0.06} \\
\midrule

\multirow{3}{*}{Town 5}
& ChatScene-PPO
  & 16.49 {\tiny $\pm$ 0.20}
  & 0.29 {\tiny $\pm$ 0.09}
  & 66.25 {\tiny $\pm$ 25.38}
  & 8.02 {\tiny $\pm$ 7.06}
  & \textbf{0.07} {\tiny $\pm$ 0.06}
  & 0.10 {\tiny $\pm$ 0.10} \\
& VLM-RL
  & \textbf{16.91} {\tiny $\pm$ 3.16}
  & 0.22 {\tiny $\pm$ 0.06}
  & 53.85 {\tiny $\pm$ 19.98}
  & 6.77 {\tiny $\pm$ 11.73}
  & 0.00 {\tiny $\pm$ 0.00}
  & \textbf{0.03} {\tiny $\pm$ 0.06} \\
\rowcolor{green!10}
\cellcolor{white}& DriveVLM-RL
  & 16.10 {\tiny $\pm$ 0.20}
  & \textbf{0.30} {\tiny $\pm$ 0.02}
  & \textbf{76.19} {\tiny $\pm$ 18.54}
  & \textbf{3.57} {\tiny $\pm$ 1.50}
  & 0.03 {\tiny $\pm$ 0.06}
  & 0.17 {\tiny $\pm$ 0.06} \\
\bottomrule
\end{tabular*}
\begin{minipage}{\linewidth}
\vspace{2pt}
\footnotesize
$^{\dagger}$Town~2 is the training environment; 
all other towns are out-of-distribution.
\end{minipage}
\end{table*}

\subsubsection{Traffic Density Robustness}
We evaluate all methods under three 
distinct traffic densities: Empty (0 vehicles), 
Regular (20 vehicles, matching training), and 
Dense (40 vehicles). Results are summarized in 
Table~\ref{tab:density}.
Under Empty conditions, where no interactive
hazards are present, all methods follow routes
reliably and the semantic pathway offers little
additional benefit: VLM-RL and DriveVLM-RL incur no
collisions, and ChatScene-PPO attains the highest SR
($0.90$) while DriveVLM-RL remains slightly more
conservative (SR $0.70$, yet with the highest route
completion, RC~$=0.57$). The advantage of DriveVLM-RL
emerges as interaction complexity increases. Under
Regular density (the training condition), DriveVLM-RL
achieves the best safety profile (CS~$=1.75$~km/h
vs.\ $6.05$--$10.09$~km/h for baselines) together
with the highest SR ($0.57$) and route completion
($0.57$). Under Dense conditions, all methods
degrade, but DriveVLM-RL degrades most gracefully and
leads five of six metrics: it retains the highest SR
($0.33$ vs.\ $0.20$--$0.27$), the longest distance
($127.47$~m), the lowest collision count
(AC~$=0.30$), and the lowest collision severity
(CS~$=2.28$~km/h vs.\ $4.77$~km/h for ChatScene-PPO
and $6.93$~km/h for VLM-RL). These results show that
the Dynamic Pathway's attention-gated semantic
reasoning scales effectively to high-density
interaction scenarios, with its safety advantage
widening precisely as the driving environment
becomes more hazardous.

\begin{table}[!t]
\caption{Performance under different traffic 
densities. Mean and standard deviation over 
3 seeds. The best result in each column within 
a density is in \textbf{bold}.}
\label{tab:density}
\centering
\renewcommand{\arraystretch}{1.25}
\setlength{\tabcolsep}{4.5pt}
\begin{tabular*}{\columnwidth}{@{\extracolsep{\fill}}c c c c c c c c@{}}
\toprule
Traffic Density & Model & AS~$\uparrow$ 
& RC~$\uparrow$ & TD~$\uparrow$ & CS~$\downarrow$ 
& SR~$\uparrow$ & AC~$\downarrow$ \\
\midrule
\multirow{3}{*}{Empty}
& ChatScene-PPO
  & 15.72 {\tiny $\pm$ 0.05}
  & \textbf{0.57} {\tiny $\pm$ 0.00}
  & \textbf{195.00} {\tiny $\pm$ 0.29}
  & 8.92 {\tiny $\pm$ 4.04}
  & \textbf{0.90} {\tiny $\pm$ 0.00}
  & 0.10 {\tiny $\pm$ 0.00} \\
& VLM-RL
  & \textbf{17.88} {\tiny $\pm$ 1.39}
  & 0.53 {\tiny $\pm$ 0.12}
  & 190.63 {\tiny $\pm$ 21.56}
  & \textbf{0.00} {\tiny $\pm$ 0.00}
  & 0.77 {\tiny $\pm$ 0.32}
  & \textbf{0.00} {\tiny $\pm$ 0.00} \\
\rowcolor{green!10}
\cellcolor{white}& DriveVLM-RL
  & 16.38 {\tiny $\pm$ 1.70}
  & \textbf{0.57} {\tiny $\pm$ 0.05}
  & 181.67 {\tiny $\pm$ 10.20}
  & \textbf{0.00} {\tiny $\pm$ 0.00}
  & 0.70 {\tiny $\pm$ 0.10}
  & \textbf{0.00} {\tiny $\pm$ 0.00} \\
\midrule
\multirow{3}{*}{Regular$^{\dagger}$}
& ChatScene-PPO
  & \textbf{14.78} {\tiny $\pm$ 0.30}
  & 0.44 {\tiny $\pm$ 0.14}
  & 127.85 {\tiny $\pm$ 10.39}
  & 6.05 {\tiny $\pm$ 1.28}
  & 0.40 {\tiny $\pm$ 0.10}
  & 0.60 {\tiny $\pm$ 0.10} \\
& VLM-RL
  & 14.38 {\tiny $\pm$ 1.53}
  & 0.51 {\tiny $\pm$ 0.08}
  & 138.08 {\tiny $\pm$ 16.68}
  & 10.09 {\tiny $\pm$ 5.93}
  & 0.40 {\tiny $\pm$ 0.00}
  & \textbf{0.10} {\tiny $\pm$ 0.10} \\
\rowcolor{green!10}
\cellcolor{white}& DriveVLM-RL
  & 14.54 {\tiny $\pm$ 1.81}
  & \textbf{0.57} {\tiny $\pm$ 0.03}
  & \textbf{186.59} {\tiny $\pm$ 14.00}
  & \textbf{1.75} {\tiny $\pm$ 3.02}
  & \textbf{0.57} {\tiny $\pm$ 0.15}
  & 0.20 {\tiny $\pm$ 0.26} \\
\midrule
\multirow{3}{*}{Dense}
& ChatScene-PPO
  & \textbf{14.70} {\tiny $\pm$ 0.28}
  & 0.41 {\tiny $\pm$ 0.09}
  & 90.71 {\tiny $\pm$ 12.73}
  & 4.77 {\tiny $\pm$ 1.10}
  & 0.20 {\tiny $\pm$ 0.17}
  & 0.80 {\tiny $\pm$ 0.17} \\
& VLM-RL
  & 11.52 {\tiny $\pm$ 2.73}
  & 0.37 {\tiny $\pm$ 0.07}
  & 99.11 {\tiny $\pm$ 12.74}
  & 6.93 {\tiny $\pm$ 1.63}
  & 0.27 {\tiny $\pm$ 0.06}
  & 0.50 {\tiny $\pm$ 0.00} \\
\rowcolor{green!10}
\cellcolor{white}& DriveVLM-RL
  & 10.69 {\tiny $\pm$ 0.11}
  & \textbf{0.46} {\tiny $\pm$ 0.08}
  & \textbf{127.47} {\tiny $\pm$ 20.02}
  & \textbf{2.28} {\tiny $\pm$ 1.83}
  & \textbf{0.33} {\tiny $\pm$ 0.15}
  & \textbf{0.30} {\tiny $\pm$ 0.17} \\
\bottomrule
\end{tabular*}
\begin{minipage}{\linewidth}
\vspace{2pt}
\footnotesize
$^{\dagger}$Regular is the training condition;
all other traffic densities are out-of-distribution.
\end{minipage}
\end{table}

\subsection{Ablation Study}

To assess the contribution of each core component,
we ablate the front-view camera input, the
attentional gating mechanism, and the hierarchical
reward synthesis, evaluating each variant on the
test routes. The results are summarized in
Table~\ref{tab:ablation}.

\begin{table}[!t]
\caption{Ablation study on framework components
during testing. All values are reported as means;
the full DriveVLM-RL model is averaged over 3 seeds
(consistent with Table~\ref{tab:carla_test}),
whereas each ablation variant is evaluated with a
single training run due to computational resource
constraints. The best results are highlighted in
\textbf{bold}.}
\label{tab:ablation}
\centering
\renewcommand{\arraystretch}{1.3}
\setlength{\tabcolsep}{5pt}
\begin{tabular*}{\columnwidth}{@{\extracolsep{\fill}}
lcccccc@{}}
\toprule
Model & AS~$\uparrow$ & RC~$\uparrow$ & TD~$\uparrow$ 
& CS~$\downarrow$ & SR~$\uparrow$ & AC~$\downarrow$ \\
\midrule
w/o First-View
  & 13.16
  & 0.42
  & 115.36
  & 3.87
  & 0.43
  & 0.31 \\
w/o Attentional Gating
  & 10.31
  & 0.27
  & 69.74
  & 7.38
  & 0.30
  & 0.50 \\
w/o Reward Synthesis
  & 14.02
  & 0.48
  & 132.17
  & 9.46
  & 0.40
  & \textbf{0.10} \\
\rowcolor{green!10}
DriveVLM-RL (Full)
  & \textbf{14.54}
  & \textbf{0.57}
  & \textbf{186.59}
  & \textbf{1.75}
  & \textbf{0.57}
  & 0.20 \\
\bottomrule
\end{tabular*}
\end{table}

\subsubsection{The Impact of First-View Camera Input}
In this variant, the Dynamic Pathway computes
dynamic rewards from BEV observations rather than
front-view camera images (Eq.~(\ref{eq6})).
Performance degrades moderately, with RC decreasing
from $0.57$ to $0.42$, SR declining from $57\%$ to
$43\%$, and AC rising from $0.20$ to $0.31$.
Although BEV representations capture spatial
relationships effectively, they lack the depth
cues and perspective context required to interpret
pedestrian intent and vehicle motion dynamics,
which leads the LVLM to select less informative
items from the reference vocabulary (\ref{appendices6})
(e.g., the static ``A pedestrian is directly ahead
on the road'' rather than the motion-aware ``A
pedestrian is crossing the road ahead''). Notably, this variant still
outperforms the w/o Reward Synthesis baseline
(SR: $43\%$ vs.\ $40\%$), indicating that the
remaining components preserve robustness even when
a single modality is degraded.

\subsubsection{The Impact of Attentional Gating}
Removing the YOLO-based attentional gate and
invoking LVLM inference on every frame severely
degrades the learned policy. This variant reaches
the lowest performance among all ablation variants,
with SR dropping from $57\%$ to $30\%$, RC falling
from $0.57$ to $0.27$, and AC rising from $0.20$ to
$0.50$.
This counterintuitive result, in which more
frequent semantic evaluation yields worse
performance, can be attributed to two factors.
First, invoking LVLM on every frame floods the 
reward signal with low-quality annotations from 
routine, hazard-free transitions, where LVLM 
outputs are inherently noisy and unstable. These 
spurious rewards (e.g., ``A vehicle might suddenly 
appear'' on a clear road) corrupt the policy 
gradient and obscure the informative signal from 
genuine safety-critical events. Second, the 
increased annotation workload strains the 
asynchronous pipeline, causing reward timestamps 
to lag behind policy updates and introducing 
temporal inconsistency in training. This validates 
our design principle: semantic reasoning should 
activate selectively on safety-critical stimuli, 
ensuring that LVLM rewards are both informative 
and temporally aligned.

\subsubsection{The Impact of Hierarchical Reward Synthesis}
This ablation removes the multiplicative 
integration with vehicle state functions 
(Eqs.~(\ref{eq10})--(\ref{eq11})), retaining 
only the combined semantic reward 
$R_{\text{combined}}$ from the Static and Dynamic
Pathways as the sole reward signal. The resulting
policy achieves the \textit{lowest} AC ($= 0.10$)
yet attains only a low SR ($= 40\%$), together with
a high CS when impacts do occur ($= 9.46$~km/h),
revealing a ``defensive stagnation'' failure mode. Without
multiplicative coupling to speed tracking, lane
centering, and heading alignment, the agent can
maximize semantic safety rewards by remaining
nearly stationary, a behavior that satisfies the
semantic notion of ``the road is clear'' while
failing to make navigational progress. When the
agent does attempt forward motion, the absence of
dynamic state guidance produces poorly timed
maneuvers and high-speed impacts, as reflected in
the elevated CS of $9.46$~km/h. The
full framework's multiplicative composition ensures
that semantic safety and vehicle dynamics objectives
are satisfied simultaneously, grounding abstract
risk reasoning in concrete control requirements.

\subsection{Reward Visualization}
\label{sec:reward-viz}

\begin{figure*}[!t]
\centering
\includegraphics[width=0.99999\textwidth]{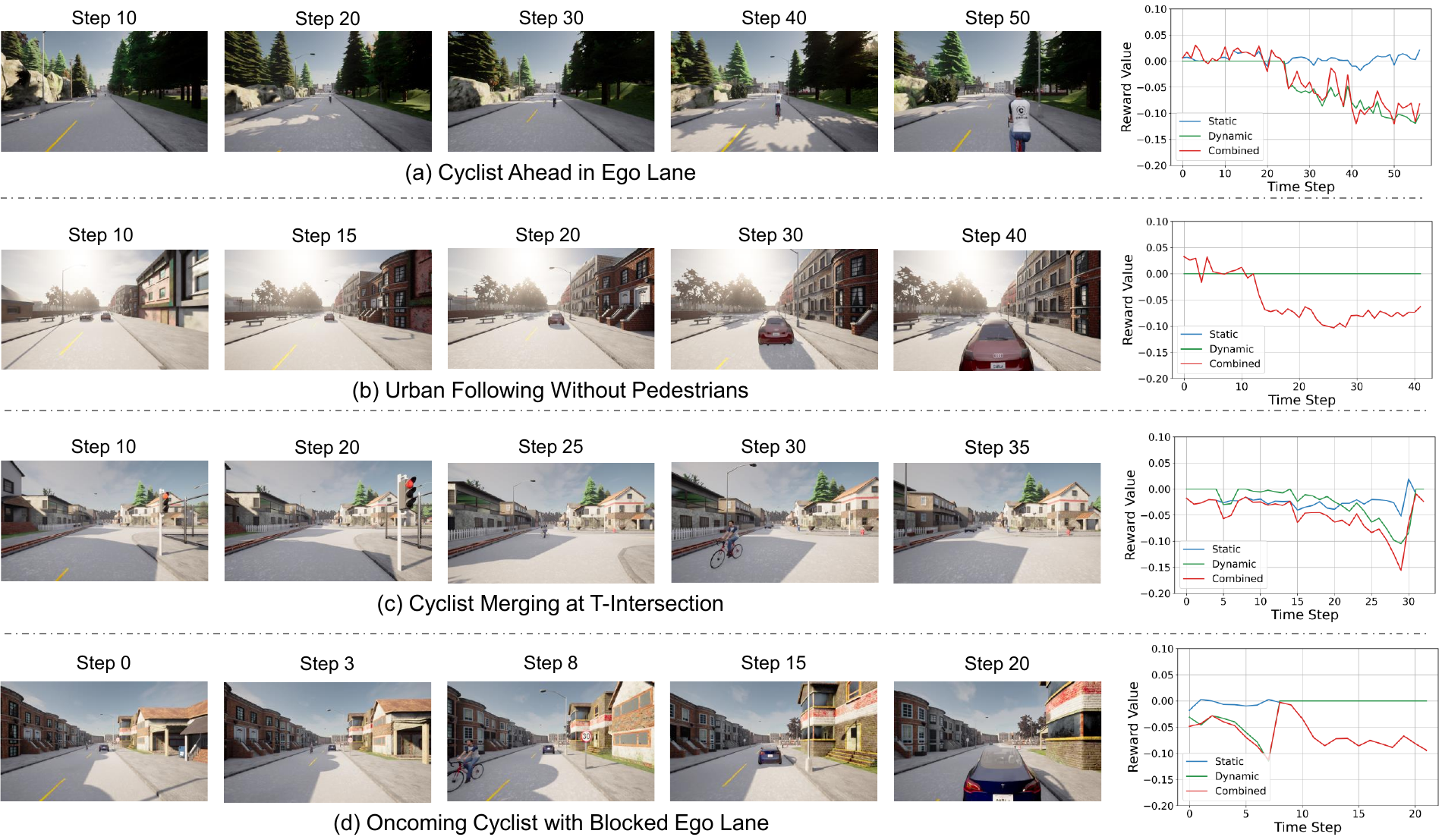}
\caption{Comparative reward curves and visual observations across diverse traffic scenarios. (a) A cyclist appears in the same lane ahead of the ego vehicle, requiring early deceleration and cautious following; (b) The ego vehicle follows a car in a structured urban environment with no pedestrians or dynamic obstacles; (c) At a T-junction, a cyclist enters the ego vehicle's path from the right, demanding quick perception and safe negotiation; (d) The ego lane is blocked by a parked or slow vehicle, while a cyclist is approaching from the opposite direction in the oncoming lane.}
\label{fig:reward_analysis}
\end{figure*}

Fig.~\ref{fig:reward_analysis} presents four representative 
driving scenarios that illustrate the complementary behavior 
of the static and dynamic reward components across varying 
levels of semantic complexity.

\textbf{(a) Cyclist Ahead in Ego Lane.} 
A cyclist gradually approaches the ego vehicle's lane over 
50 steps. The static reward ($R_{\text{static}}$, blue) 
remains relatively stable at a slightly negative level, 
reflecting persistent spatial proximity risk captured via 
BEV assessment. Once the cyclist enters the critical 
detection zone, the attentional gate activates ($g_t = 1$), 
triggering LVLM inference and producing a sharply negative 
dynamic reward ($R_{\text{dynamic}}$, green). The combined 
reward ($R_{\text{combined}}$, red) consequently drops, 
providing a strong penalty signal that encourages the agent 
to decelerate and yield.

\textbf{(b) Urban Following Without Pedestrians.} 
In a structured urban scenario with no vulnerable road users 
present, the attentional gate remains inactive throughout 
($g_t = 0$ for all $t$), and $R_{\text{dynamic}} = 0$ 
(green line flat at zero). The combined reward tracks 
the static reward exclusively, reflecting the designed 
fallback behavior: in the absence of safety-critical 
objects, the framework relies entirely on the Static 
Pathway for spatial safety assessment. This demonstrates
the computational efficiency of the gating mechanism, as no
LVLM inference is incurred in routine scenarios.

\textbf{(c) Cyclist Merging at T-Intersection.} 
A cyclist abruptly enters the ego vehicle's path from the 
right at a T-intersection around step 20--25. The static 
reward declines gradually as the cyclist approaches, while 
the dynamic reward exhibits a sharp negative spike upon
gate activation, capturing the semantic risk of the
crossing maneuver (e.g., ``A cyclist is crossing the
road ahead''). The combined reward reflects
both the spatial hazard and the semantic context, providing 
richer guidance than either signal alone.

\textbf{(d) Oncoming Cyclist with Blocked Ego Lane.} 
From step 0, the ego lane is partially blocked by a parked 
vehicle while a cyclist approaches from the opposite 
direction. The static reward immediately registers a 
negative value due to the spatial obstruction visible in 
BEV. As the oncoming cyclist is detected, the dynamic 
reward drops sharply, producing a strongly negative 
combined signal that discourages the agent from proceeding 
and encourages a cautious lane-change or stopping 
maneuver.

Across all four scenarios, the combined reward 
$R_{\text{combined}}$ consistently provides more 
discriminative and semantically grounded signals than 
either pathway alone, validating the design rationale 
of the Hierarchical Reward Synthesis module described 
in Section~\ref{Framework: DriveVLM-RL}.

\begin{figure*}[!t]
\centering
\includegraphics[width=\linewidth]{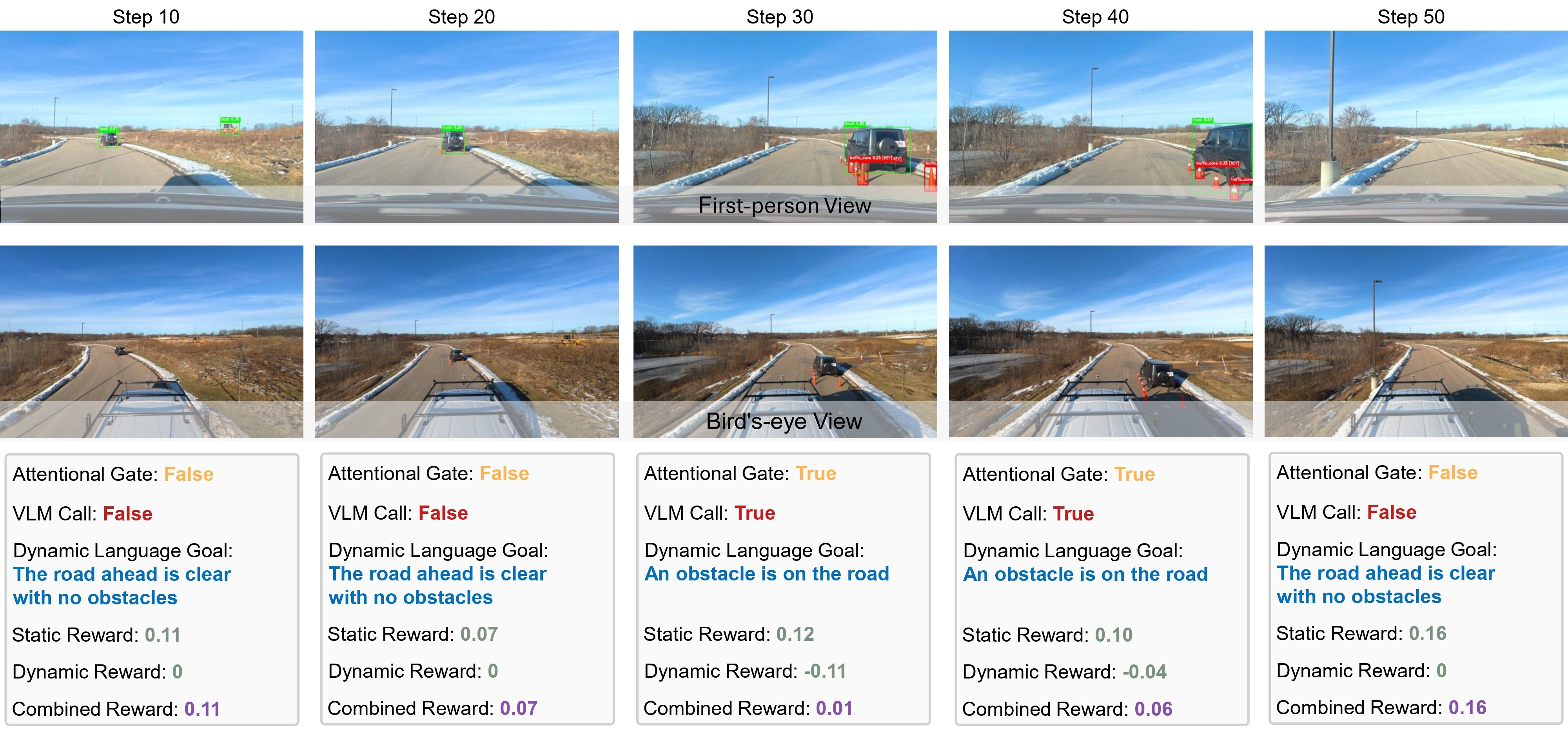}
\caption{Reward signal decomposition on a \emph{real-world} static-obstacle
bypass (DriveVLM-RL). Top: first-person view; middle: bird's-eye view; bottom:
per-step attentional gate, VLM call, dynamic language goal, and the static,
dynamic, and combined rewards. The attentional gate fires when the obstacle
enters the critical forward region, driving the dynamic reward negative and
flipping the combined-reward polarity. These signals are computed offline for
analysis and do not drive vehicle control.}
\label{fig:realworld_reward}
\end{figure*}

\textbf{Real-World Generalization.}
To verify that the reward design is not specific to the CARLA simulator,
Fig.~\ref{fig:realworld_reward} visualizes the same static/dynamic/combined
decomposition on a real-world recording from a full-scale vehicle during a
static-obstacle bypass. While the road ahead is clear (Steps~10--20), the
attentional gate stays inactive ($g_t = 0$, dynamic language goal ``The road
ahead is clear with no obstacles''), so the dynamic reward is zero and the
combined reward tracks the positive static reward. When the obstacle enters the
critical forward region (Steps~30--40), the gate fires ($g_t = 1$), the LVLM
updates the language goal to ``An obstacle is on the road,'' and the dynamic
reward turns sharply negative, flipping the combined reward and signaling the
hazard that motivates the bypass. After the obstacle is cleared (Step~50), the
gate deactivates and the combined reward recovers. The reward thus exhibits the
same gated, semantically grounded behavior on real perception inputs as in
simulation, indicating that the Hierarchical Reward Synthesis generalizes beyond
the training domain.

\subsection{Sensitivity Analysis}

\subsubsection{Attentional Gating Efficiency}
We evaluated three
YOLOv8~\citep{yolov8} variants of
increasing model capacity: \texttt{YOLOv8n},
\texttt{YOLOv8s}, and \texttt{YOLOv8x}.
As shown in Table~\ref{tab:yolo_comparison},
\texttt{YOLOv8s} attains the lowest steady-state
inference time (approximately 0.018--0.019~s per
frame) while substantially improving detection
coverage over \texttt{YOLOv8n}. For example, in
Episode~4 it detects 1744 objects versus 1388 for
\texttt{YOLOv8n} and triggers LVLM inference on
65.9\% of frames compared to 54.8\%, a meaningful
increase in safety-critical coverage at no
additional runtime cost. \texttt{YOLOv8x}
achieves marginally higher coverage (70.5\%) but
requires roughly 80\% more inference time
(0.033~s vs.\ 0.018~s). We therefore use
\texttt{YOLOv8s} as the detector in all main
experiments.

Fig.~\ref{fig:yolo_analysis} illustrates 
qualitative detection results across four 
representative cases. \texttt{YOLOv8n} shows 
notable limitations: in Case~(ii) it fails to 
detect a distant pedestrian (a false negative that 
suppresses LVLM inference), while in Cases~(iii) 
and~(iv) it misclassifies a fire hydrant as a 
pedestrian (a false positive that unnecessarily 
triggers LVLM reasoning). In Case~(i), it
mistakes a rock for a car, whereas \texttt{YOLOv8s}
suppresses this error and \texttt{YOLOv8x}
produces overly detailed detections that
introduce redundant KEY triggers. Based on
these observations, we adopt \texttt{YOLOv8s}
as the default detector, as it offers the best
balance between detection accuracy, robustness
to false positives, and runtime efficiency.

\begin{table}[!t]
\caption{YOLOv8 variant comparison for attentional 
gating. Infer: average inference time per frame~(s); 
Objects: total detections across all frames; 
Key: detections of safety-critical classes that 
trigger LVLM inference; 
VLM\%: percentage of frames invoking LVLM.}
\label{tab:yolo_comparison}
\centering
\renewcommand{\arraystretch}{1.2}
\setlength{\tabcolsep}{0pt}
\begin{tabular*}{\columnwidth}{@{\extracolsep{\fill}}
cccccc@{}}
\toprule
Episode (Frames) & Model & Infer~(s) 
& Objects & Key & VLM\% \\
\midrule
\multirow{3}{*}{Episode 1 (104)}
  & YOLOv8n & 0.020$^{*}$ & 107  & 13  & 10.6 \\
  & YOLOv8s & 0.027       & 181  & 13  & 10.6 \\
  & YOLOv8x & 0.044       & 280  & 23  & 18.3 \\
\midrule
\multirow{3}{*}{Episode 2 (142)}
  & YOLOv8n & 0.020 & 171  & 73  & 33.8 \\
  & YOLOv8s & 0.019 & 274  & 100 & 45.8 \\
  & YOLOv8x & 0.032 & 396  & 123 & 49.3 \\
\midrule
\multirow{3}{*}{Episode 3 (202)}
  & YOLOv8n & 0.020 & 181  & 24  & 8.4  \\
  & YOLOv8s & 0.019 & 289  & 26  & 12.4 \\
  & YOLOv8x & 0.034 & 501  & 27  & 11.9 \\
\midrule
\multirow{3}{*}{Episode 4 (522)}
  & YOLOv8n & 0.021 & 1388 & 634 & 54.8 \\
  & YOLOv8s & 0.018 & 1744 & 711 & 65.9 \\
  & YOLOv8x & 0.033 & 2099 & 798 & 70.5 \\
\bottomrule
\end{tabular*}
\vspace{3pt}
\begin{minipage}{\columnwidth}
\footnotesize\raggedright
$^{*}$Episode~1 inference time for YOLOv8n 
reflects cold-start model loading; steady-state 
inference time is 0.020~s, consistent with
subsequent episodes. Model load times:
YOLOv8n~=~0.047~s, YOLOv8s~=~0.040~s,
YOLOv8x~=~0.128~s.
\end{minipage}
\end{table}

\begin{figure*}[!t]
\centering
\includegraphics[width=0.993\textwidth]{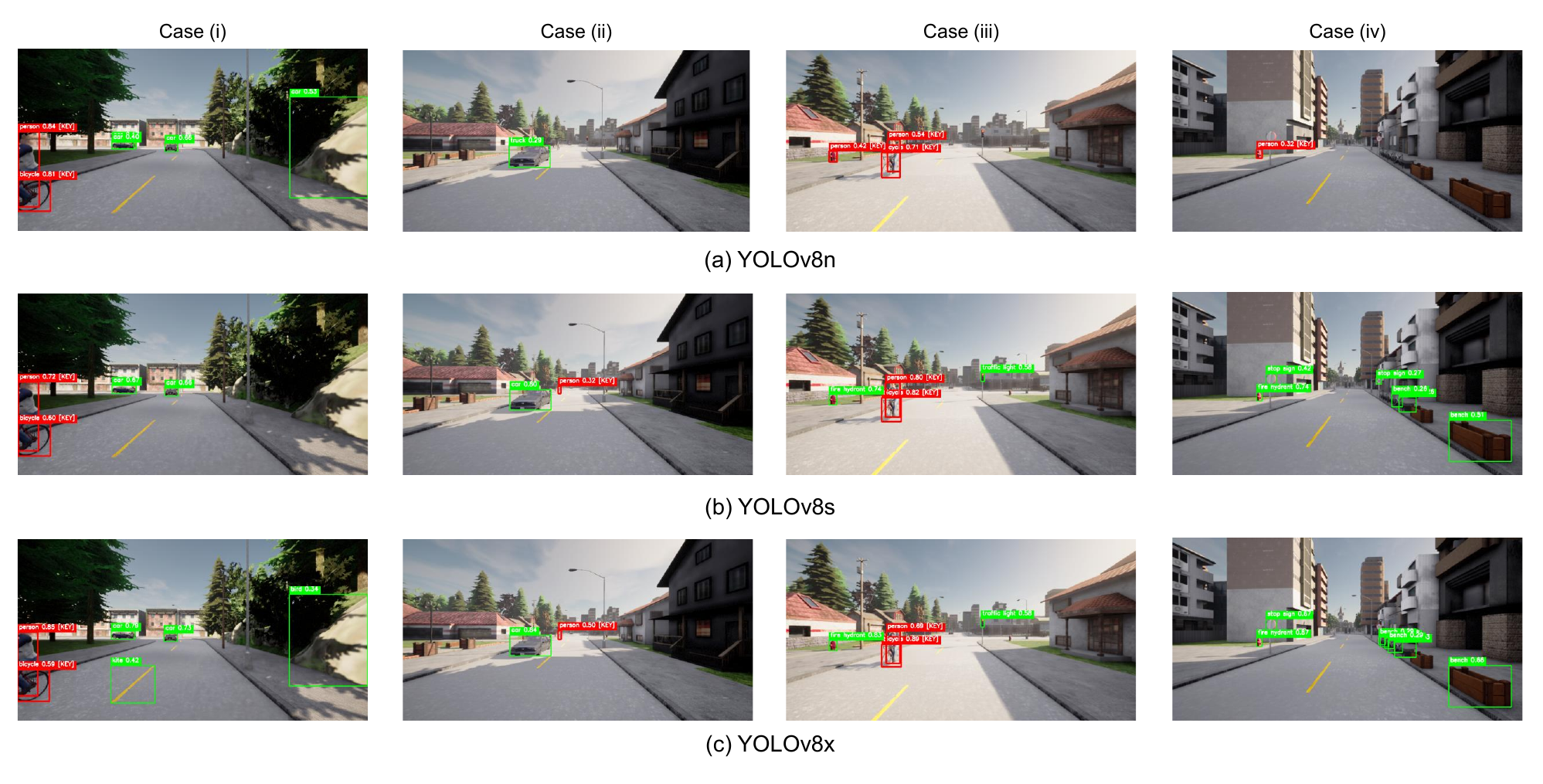}
\caption{Visualization of YOLOv8 Detection Results in Four Driving Cases. Red bounding boxes labeled with \texttt{[KEY]} denote critical objects  (e.g., pedestrians, bikes) that would activate VLM reasoning, while green boxes indicate non-critical detections. }
\label{fig:yolo_analysis}
\end{figure*}

\subsubsection{Impact of LVLM Backbone Capacity}
Table~\ref{tab:vlm} compares DriveVLM-RL
performance across three Qwen3-VL backbones of
increasing capacity. Owing to computational resource
constraints, the 2B and 8B variants are each
evaluated with a single training run, whereas the
default 4B configuration follows the three-seed
protocol used throughout the paper. The results
exhibit a consistent but rapidly saturating scaling
trend: enlarging the backbone from 4B to 8B yields
only marginal gains in success rate
(SR: $0.57$ to $0.60$), collision severity
(CS: $1.75$ to $1.42$~km/h), and average collisions
(AC: $0.20$ to $0.18$), despite doubling the
parameter count and inference cost.
This saturation stems from three properties of
our framework. First, the LVLM is tasked only
with scene description rather than complex
multi-step reasoning, a capability that is
well handled even by compact models. Second,
the reference vocabulary of canonical scene
descriptions (Section~\ref{sec:language_goal})
constrains the output space, reducing the task
to structured selection rather than open-ended
generation and minimizing sensitivity to model
capacity. Third, and most importantly, the LVLM
is invoked only during training to annotate
reward signals; at inference time, the deployed
policy operates entirely without LVLM calls.
As a result, the quality of the learned reward
signal matters more than the raw capacity of
the model generating it.
In contrast, the 2B model exhibits a pronounced
performance gap (SR: $0.43$, CS: $3.58$~km/h,
AC: $0.30$), indicating that a minimum level of
semantic understanding is required to produce
coherent and reliable scene descriptions; below
this threshold, noisier risk annotations translate
into less safe driving policies. We therefore adopt
Qwen3-VL-4B as the default backbone, as it surpasses
this semantic threshold while avoiding the
diminishing returns and elevated inference cost of
larger models, offering the most practical trade-off
for asynchronous reward annotation during training.
\begin{table}[!t]
\caption{Performance comparison using different
LVLM backbones. All values are reported as means;
the default 4B configuration is averaged over 3 seeds
(consistent with Table~\ref{tab:carla_test}), whereas
the 2B and 8B variants are each evaluated with a
single run due to computational resource constraints.
The baseline configuration is highlighted.}
\label{tab:vlm}
\centering
\renewcommand{\arraystretch}{1.3}
\setlength{\tabcolsep}{5pt}
\begin{tabular*}{\columnwidth}{@{\extracolsep{\fill}}cccccccc@{}}
\toprule
LVLM Model & Params & AS~$\uparrow$ & RC~$\uparrow$ 
& TD~$\uparrow$ & CS~$\downarrow$ 
& SR~$\uparrow$ & AC~$\downarrow$ \\
\midrule
Qwen3-VL-2B & 2B
  & 13.12
  & 0.41
  & 151.30
  & 3.58
  & 0.43
  & 0.30 \\
\rowcolor{green!10}
Qwen3-VL-4B (ours) & 4B
  & 14.54
  & 0.57
  & 186.59
  & 1.75
  & 0.57
  & 0.20 \\
Qwen3-VL-8B & 8B
  & 14.71
  & 0.58
  & 191.40
  & 1.42
  & 0.60
  & 0.18 \\
\bottomrule
\end{tabular*}
\end{table}

\section{Conclusion}
\label{Conclusion}
This paper presented DriveVLM-RL, a 
neuroscience-inspired framework that integrates 
VLM into RL 
for safe and deployable autonomous driving. 
Motivated by the brain's dual-pathway cognitive 
architecture, DriveVLM-RL decomposes semantic 
reward learning into a Static Pathway for 
continuous spatial safety assessment and a Dynamic 
Pathway for attention-gated, multi-frame semantic 
risk reasoning. A hierarchical reward synthesis 
mechanism fuses these signals with vehicle state 
information, while an asynchronous pipeline 
decouples expensive LVLM inference from 
environment interaction. Critically, all VLM 
components are used exclusively during training 
and completely removed at deployment, eliminating 
the latency constraints that plague existing 
VLM-as-Control approaches.

Extensive experiments in CARLA demonstrate that
DriveVLM-RL consistently outperforms 11 baseline
methods across expert-designed, LLM-based, and
VLM-based reward paradigms, attaining the highest
test success rate (57\%) and route completion
together with the longest travel distance, while
reducing collision severity from 10.09~km/h for the
strongest VLM-based baseline to 1.75~km/h. Under the
extreme no-reward-after-collision setting,
DriveVLM-RL maintains low collision rates
throughout training, demonstrating that the policy
internalizes predictive safety through semantic
reasoning rather than penalty-driven avoidance.
Its advantage is most pronounced in safety under
distribution shift: DriveVLM-RL attains the lowest
collision severity in every out-of-distribution
town (mean $4.93$~km/h vs.\ roughly $11$~km/h for
baselines) and leads the high-density regime. By
construction, the semantic reward synthesis is
decoupled from policy optimization, so the framework
integrates with standard RL algorithms without
modification.

Several directions remain open for future work. 
First, performance in structurally dissimilar 
environments such as highway-style and multi-level 
road layouts remains limited, highlighting the 
need for richer semantic vocabularies beyond 
pedestrian-rich urban scenarios. Second, the 
current framework is validated in simulation; 
bridging the sim-to-real gap for deployment on 
physical vehicles (e.g., Sky-Drive \citep{huang2025sky}) will require addressing 
sensor noise, domain shift in visual observations, 
and real-time safety constraints. Third, investigating online reward 
adaptation where the language goal vocabulary 
evolves during training could further improve 
long-tail robustness. Finally, extending the 
dual-pathway architecture to multi-agent settings 
represents a promising direction toward human-level 
generalization in autonomous driving.

\section*{Acknowledgment}
This work was supported by the University of Wisconsin-Madison's Center for Connected and Automated Transportation (CCAT), a part of the larger CCAT consortium, a USDOT Region 5 University Transportation Center funded by the U.S. Department of Transportation, Award \#69A3552348305. The contents of this paper reflect the views of the authors, who are responsible for the facts and the accuracy of the data presented herein, and do not necessarily reflect the official views or policies of the sponsoring organization.

\clearpage
\appendix
\setcounter{lemma}{0}
\setcounter{theorem}{0}
\setcounter{corollary}{0}
\setcounter{proposition}{0}
\setcounter{remark}{0}

\section{Static Pathway Proofs}
\label{appendices1}

The theoretical properties of the CLG-based static reward 
formulation follow from the structure of cosine similarity 
and real-valued arithmetic. Lemmas~1--2 extend the analysis
in~\citep{huang2025vlm} to the specific CLG formulation
defined in Definition~2.

\begin{lemma}[Boundedness]
For any observation $o_t$ and CLG pair 
$(l_{\text{pos}}, l_{\text{neg}})$, the static reward 
is bounded: $R_{\text{static}}(o_t) \in [-1, 1]$.
\end{lemma}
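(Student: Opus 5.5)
The plan is to bound each cosine-similarity term separately and then combine them linearly using the constraint $\alpha+\beta=1$ from Definition~\ref{definition2}.

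\emph{Step 1: bound each similarity.} Write $v=f_I(o_t^{\text{BEV}})$, $u_+=f_L(l_{\text{pos}})$ and $u_-=f_L(l_{\text{neg}})$. Assume, as is implicit in Eq.~(\ref{eq3}), that these embeddings are nonzero; CLIP embeddings are normalized in practice, so this is harmless. By the Cauchy--Schwarz inequality, $|v^\top u_\pm|\le \|v\|\,\|u_\pm\|$. Dividing by the positive quantity $\|v\|\,\|u_\pm\|$ gives $s_+:=\mathrm{sim}(v,u_+)\in[-1,1]$ and $s_-:=\mathrm{sim}(v,u_-)\in[-1,1]$.

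\emph{Step 2: combine the terms.} By Eq.~(\ref{eq2}), $R_{\text{static}}(o_t)=\alpha s_+-\beta s_-$ with $\alpha,\beta>0$. The expression is increasing in $s_+$ and decreasing in $s_-$. Its maximum over the box $[-1,1]^2$ is therefore attained at $s_+=1$, $s_-=-1$, which gives $\alpha+\beta=1$. Its minimum is attained at $s_+=-1$, $s_-=1$, which gives $-(\alpha+\beta)=-1$. Equivalently, the triangle inequality gives $|R_{\text{static}}|\le \alpha|s_+|+\beta|s_-|\le \alpha+\beta=1$. This shows $R_{\text{static}}(o_t)\in[-1,1]$. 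In particular, the specific choice $\alpha=\beta=0.5$ is covered.

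\emph{Main obstacle.} There is no real difficulty here. The only delicate point is well-definedness: cosine similarity is undefined when an embedding vanishes. I would state the nonzero-embedding assumption explicitly, or adopt the convention $\mathrm{sim}=0$ in that degenerate case, which also lies in $[-1,1]$. I would also remark that the bound is tight only when $u_+$ and $u_-$ are antipodal relative to $v$, so in practice the attained range is narrower. That is why the clipping thresholds $\theta_{\min},\theta_{\max}$ in Eq.~(\ref{eq9}) are much tighter than $[-1,1]$.
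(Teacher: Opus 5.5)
Your proof is correct and follows essentially the same route as the paper: Cauchy--Schwarz places each cosine similarity in $[-1,1]$, and then $\alpha s_+ - \beta s_-$ is bounded by its values at the extreme corners, using $\alpha+\beta=1$. Your explicit nonzero-embedding condition matches what the paper obtains by assuming $\ell_2$-normalized CLIP embeddings, and stating it makes well-definedness a little more transparent.
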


\begin{proof}
For any two unit-normalized vectors $v_1, v_2 \in 
\mathbb{R}^d$, the Cauchy--Schwarz inequality gives:
\begin{equation}
|v_1^\top v_2| \leq \|v_1\|\,\|v_2\|
\end{equation}
Since CLIP encoders produce $\ell_2$-normalized embeddings, 
$\|f_I(\cdot)\| = \|f_L(\cdot)\| = 1$, and thus:
\begin{equation}
\mathrm{sim}(f_I(o_t), f_L(l)) 
= \frac{f_I(o_t)^\top f_L(l)}
       {\|f_I(o_t)\|\,\|f_L(l)\|}
\in [-1, 1]
\end{equation}
Let $s^+ = \mathrm{sim}(f_I(o_t^{\text{BEV}}), 
f_L(l_{\text{pos}})) \in [-1,1]$ and 
$s^- = \mathrm{sim}(f_I(o_t^{\text{BEV}}), 
f_L(l_{\text{neg}})) \in [-1,1]$.
Then:
\begin{equation}
R_{\text{static}}(o_t) = \alpha \cdot s^+ - \beta \cdot s^-
\end{equation}
The upper bound is achieved when $s^+ = 1$ and $s^- = -1$:
\begin{equation}
R_{\text{static}}(o_t) 
\leq \alpha \cdot 1 - \beta \cdot (-1) 
= \alpha + \beta = 1
\end{equation}
The lower bound is achieved when $s^+ = -1$ and $s^- = 1$:
\begin{equation}
R_{\text{static}}(o_t) 
\geq \alpha \cdot (-1) - \beta \cdot 1 
= -(\alpha + \beta) = -1
\end{equation}
Therefore $R_{\text{static}}(o_t) \in [-1, 1]$.
\end{proof}

\begin{lemma}[Discriminability]
The CLG formulation provides strictly greater reward 
discrimination than single-goal formulations. Specifically, 
for observations $o_1, o_2$ where 
$\mathrm{sim}(f_I(o_1), f_L(l_{\text{pos}})) 
= \mathrm{sim}(f_I(o_2), f_L(l_{\text{pos}}))$
but $\mathrm{sim}(f_I(o_1), f_L(l_{\text{neg}})) 
\neq \mathrm{sim}(f_I(o_2), f_L(l_{\text{neg}}))$, 
we have $R_{\text{static}}(o_1) \neq R_{\text{static}}(o_2)$, 
even when single-goal similarity fails to distinguish 
the two states.
\end{lemma}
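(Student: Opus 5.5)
The plan is a direct algebraic computation from Definition~\ref{definition2}. Following the notation in the proof of Lemma~1, I will write $s_i^+ = \mathrm{sim}(f_I(o_i^{\text{BEV}}), f_L(l_{\text{pos}}))$ and $s_i^- = \mathrm{sim}(f_I(o_i^{\text{BEV}}), f_L(l_{\text{neg}}))$ for $i=1,2$. The static reward is then $R_{\text{static}}(o_i) = \alpha s_i^+ - \beta s_i^-$.

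First I form the difference
\begin{equation*}
R_{\text{static}}(o_1) - R_{\text{static}}(o_2) = \alpha\,(s_1^+ - s_2^+) - \beta\,(s_1^- - s_2^-).
\end{equation*}
The hypothesis $s_1^+ = s_2^+$ kills the first term. This leaves $-\beta(s_1^- - s_2^-)$.

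Next I use $\beta > 0$, which Definition~\ref{definition2} guarantees (in the paper $\beta = 0.5$), together with the hypothesis $s_1^- \neq s_2^-$. Their product is nonzero, so $R_{\text{static}}(o_1) \neq R_{\text{static}}(o_2)$.

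For the comparative clause, I note that the positive-only single-goal reward $r^{\text{VLM}} = \mathrm{sim}(f_I(o), f_L(l_{\text{pos}}))$ from the Preliminaries gives equal values on $o_1$ and $o_2$ by hypothesis. It therefore fails to separate these states, while the CLG reward separates them. By symmetry, the same argument with roles swapped works whenever $\alpha>0$ and the negative similarities coincide but the positive ones differ.

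The only delicate point is interpretive, not computational. The phrase ``strictly greater discrimination'' should be read as: every pair of states separated by the single positive goal is also separated by the CLG, except on a degenerate set, and some pairs are separated only by the CLG. I will not attempt the first half in full generality, since it can fail when $\alpha(s_1^+-s_2^+) = \beta(s_1^- - s_2^-)$. Instead I will restrict the claim to the exhibited situation, which is exactly what the lemma states, and remark that the degenerate cancellation set has measure zero in the $(s^+,s^-)$ plane.
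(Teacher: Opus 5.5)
Your proposal is correct and follows the paper's proof step for step. You define $s_i^{\pm}$, form the difference $\alpha(s_1^+-s_2^+)-\beta(s_1^--s_2^-)$, use $s_1^+=s_2^+$ to reduce it to $-\beta(s_1^--s_2^-)\neq 0$ via $\beta>0$, and note that the positive-only single-goal reward cannot separate $o_1$ and $o_2$. Your further observation is also valid and goes beyond the paper's proof: the broader ``strictly greater discrimination'' wording can fail for pairs with $\alpha(s_1^+-s_2^+)=\beta(s_1^--s_2^-)$, so restricting the claim to the exhibited case is the right move.
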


\begin{proof}
Let $s_i^+ = \mathrm{sim}(f_I(o_i), f_L(l_{\text{pos}}))$ 
and $s_i^- = \mathrm{sim}(f_I(o_i), f_L(l_{\text{neg}}))$ 
for $i \in \{1, 2\}$.

By hypothesis, $s_1^+ = s_2^+$ and $s_1^- \neq s_2^-$.

A single-goal reward $r_i = \mathrm{sim}(f_I(o_i), 
f_L(l_{\text{pos}})) = s_i^+$ satisfies $r_1 = r_2$, 
so it \emph{cannot} distinguish $o_1$ from $o_2$.

For the CLG reward:
\begin{align}
R_{\text{static}}(o_1) - R_{\text{static}}(o_2) 
&= (\alpha s_1^+ - \beta s_1^-) 
 - (\alpha s_2^+ - \beta s_2^-) \\
&= \alpha(s_1^+ - s_2^+) - \beta(s_1^- - s_2^-) \\
&= \alpha \cdot 0 - \beta(s_1^- - s_2^-) \\
&= -\beta(s_1^- - s_2^-)
\end{align}
Since $\beta > 0$ and $s_1^- \neq s_2^-$ by hypothesis, 
we conclude $R_{\text{static}}(o_1) \neq 
R_{\text{static}}(o_2)$.
\end{proof}

\begin{theorem}[Reward-Induced State Ordering]
Let $\mathcal{S}$ be the state space and define the 
binary relation $\succeq$ on $\mathcal{S}$ such that 
$s_1 \succeq s_2$ if and only if 
$R_{\text{static}}(s_1) \geq R_{\text{static}}(s_2)$. 
Then $\succeq$ is a total preorder (reflexive, transitive, 
and total), inducing a consistent preference ranking 
over states aligned with the semantic safety 
specification $(l_{\text{pos}}, l_{\text{neg}})$.
\end{theorem}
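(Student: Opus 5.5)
The plan is to reduce the statement to the order properties of $\geq$ on $\mathbb{R}$, pulled back along the map $s \mapsto R_{\text{static}}(s)$. The only prerequisite is that $R_{\text{static}}$ is a well-defined real-valued function on $\mathcal{S}$.

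First I would establish that prerequisite. For a given state $s$, the observation $o^{\text{BEV}}$ is fixed, and the CLIP encoders $f_I, f_L$ are frozen, deterministic maps. Hence $R_{\text{static}}(s)$ is a single number, and Lemma~\ref{lemma:Boundedness} places it in $[-1,1]\subset\mathbb{R}$. Implicitly I identify a state with its rendered BEV observation, as the paper does when it writes $R_{\text{static}}(s)$. I expect this identification to be the only real subtlety. If several observations could arise from one state, $R_{\text{static}}$ would not be a function of $s$, and I would state the theorem on the observation space instead.

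With that in hand, I would check the three properties directly.
\begin{itemize}
\item \emph{Reflexivity:} $R_{\text{static}}(s)\geq R_{\text{static}}(s)$, so $s\succeq s$.
\item \emph{Transitivity:} if $s_1\succeq s_2$ and $s_2\succeq s_3$, then $R_{\text{static}}(s_1)\geq R_{\text{static}}(s_2)\geq R_{\text{static}}(s_3)$, and transitivity of $\geq$ on $\mathbb{R}$ gives $s_1\succeq s_3$.
\item \emph{Totality:} the real numbers are totally ordered, so for any $s_1,s_2$ at least one of $R_{\text{static}}(s_1)\geq R_{\text{static}}(s_2)$ or the reverse holds.
\end{itemize}
I would also remark that $\succeq$ is generally not antisymmetric. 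Distinct states with equal reward are indifferent, which is why the relation is only a preorder and not a total order.

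Finally I would address the phrase ``aligned with the semantic safety specification.'' Write $s^{+}$ and $s^{-}$ for the similarities of the state's embedding to $l_{\text{pos}}$ and $l_{\text{neg}}$, respectively. Then $R_{\text{static}}=\alpha s^{+}-\beta s^{-}$ is strictly increasing in $s^{+}$ and strictly decreasing in $s^{-}$, since $\alpha,\beta>0$. So a state that is more similar to $l_{\text{pos}}$ and less similar to $l_{\text{neg}}$ (weakly in both, strictly in one) is ranked strictly higher. Lemma~\ref{lemma:Discriminability} supplies the strictness when only the negative similarity differs. This makes precise that the ordering respects the CLG semantics.
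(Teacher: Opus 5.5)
Your proposal is correct and follows essentially the same route as the paper: reflexivity, transitivity, and totality are inherited from $\geq$ on $\mathbb{R}$ through the map $s \mapsto R_{\text{static}}(s)$, and alignment with $(l_{\text{pos}}, l_{\text{neg}})$ is argued from the monotonicity of $\alpha s^{+} - \beta s^{-}$ in each similarity. Your extra remarks are sound refinements rather than a different argument: the well-definedness of $R_{\text{static}}$ as a function of the state, the failure of antisymmetry, and stating alignment only as the one-directional Pareto-dominance implication (the paper's ``precisely when'' overstates this, since the converse fails when the two similarities trade off).
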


\begin{proof}
We verify the three defining properties of a total preorder.

\textbf{(1) Reflexivity.} For any $s \in \mathcal{S}$:
\begin{equation}
R_{\text{static}}(s) \geq R_{\text{static}}(s)
\end{equation}
holds trivially, so $s \succeq s$.

\textbf{(2) Transitivity.} Suppose $s_1 \succeq s_2$ 
and $s_2 \succeq s_3$ for some $s_1, s_2, s_3 \in 
\mathcal{S}$. Then:
\begin{equation}
R_{\text{static}}(s_1) \geq R_{\text{static}}(s_2) 
\geq R_{\text{static}}(s_3)
\end{equation}
By transitivity of $\geq$ on $\mathbb{R}$, 
$R_{\text{static}}(s_1) \geq R_{\text{static}}(s_3)$, 
thus $s_1 \succeq s_3$.

\textbf{(3) Totality.} For any $s_1, s_2 \in \mathcal{S}$, 
since $R_{\text{static}}(s_1), R_{\text{static}}(s_2) 
\in \mathbb{R}$ and $\geq$ is a total order on $\mathbb{R}$:
\begin{equation}
R_{\text{static}}(s_1) \geq R_{\text{static}}(s_2) 
\quad \text{or} \quad 
R_{\text{static}}(s_2) \geq R_{\text{static}}(s_1)
\end{equation}
Thus either $s_1 \succeq s_2$ or $s_2 \succeq s_1$ 
(or both when equality holds).

\textbf{Semantic Alignment.} 
The ordering $\succeq$ reflects the safety specification 
$(l_{\text{pos}}, l_{\text{neg}})$ because 
$R_{\text{static}}$ is monotonically increasing in 
$\mathrm{sim}(f_I(o), f_L(l_{\text{pos}}))$ and 
monotonically decreasing in 
$\mathrm{sim}(f_I(o), f_L(l_{\text{neg}}))$. 
Formally, for any $s_1 \succeq s_2$:
\begin{equation}
\alpha \cdot \mathrm{sim}(f_I(o(s_1)), 
f_L(l_{\text{pos}}))
- \beta \cdot \mathrm{sim}(f_I(o(s_1)), 
f_L(l_{\text{neg}}))
\geq
\alpha \cdot \mathrm{sim}(f_I(o(s_2)), 
f_L(l_{\text{pos}}))
- \beta \cdot \mathrm{sim}(f_I(o(s_2)), 
f_L(l_{\text{neg}}))
\end{equation}
This guarantees that $s_1$ is ranked no lower than $s_2$ 
precisely when $s_1$ is jointly more similar to the 
desired state $l_{\text{pos}}$ and less similar to 
the undesired state $l_{\text{neg}}$, consistent with 
the semantic safety specification. 

Therefore, $\succeq$ is a total preorder inducing a 
semantically grounded preference ranking over 
$\mathcal{S}$. 
\end{proof}

\section{Dynamic Pathway Proofs}
\label{appendices3}

\subsection{Proof of Lemma 3 (Computational Efficiency)}

\begin{lemma}[Computational Efficiency]
Let $p = P(g_t = 1)$ be the probability of gate 
activation, and let $T_{\text{LVLM}}$, $T_{\text{det}}$ 
denote the inference time of the LVLM and detection 
model respectively. The expected per-frame computation 
time of the Dynamic Pathway is 
$T_{\text{det}} + p \cdot T_{\text{LVLM}}$, compared 
to $T_{\text{LVLM}}$ for ungated approaches. When 
$p \ll 1$ and $T_{\text{det}} \ll T_{\text{LVLM}}$, 
this yields relative computational savings of 
approximately $(1-p)\times 100\%$ compared to 
ungated LVLM inference.
\end{lemma}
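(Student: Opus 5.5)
The plan is to model the per-frame cost of the Dynamic Pathway as a random variable and compute its expectation by linearity. For a frame at time $t$, let $C_t$ be its Dynamic Pathway computation time. By Definition~\ref{definition3} the detector $D(\cdot)$ runs on every frame, while the LVLM $F_{\text{LVLM}}$ of Definition~\ref{definition4} runs only when $g_t = 1$. This gives
\begin{equation}
C_t = T_{\text{det}} + g_t \cdot T_{\text{LVLM}}.
\end{equation}
We treat $T_{\text{det}}$ and $T_{\text{LVLM}}$ as fixed per-call costs, or equivalently as their means, assumed independent of $g_t$. We ignore the negligible CLIP evaluation cost, which is common to both the gated and ungated settings.

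The first step is to take expectations. Since $g_t$ is Bernoulli with $\mathbb{E}[g_t] = P(g_t=1) = p$, linearity gives
\begin{equation}
\mathbb{E}[C_t] = T_{\text{det}} + p\, T_{\text{LVLM}}.
\end{equation}
The ungated baseline invokes the LVLM on every frame and needs no detector, so its cost is $T_{\text{LVLM}}$.

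The second step is to compute the relative savings:
\begin{equation}
\frac{T_{\text{LVLM}} - T_{\text{det}} - p\,T_{\text{LVLM}}}{T_{\text{LVLM}}} = (1-p) - \frac{T_{\text{det}}}{T_{\text{LVLM}}}.
\end{equation}
Under $T_{\text{det}} \ll T_{\text{LVLM}}$ the ratio term is negligible, so the savings are approximately $(1-p)\times 100\%$. This approximation holds for any $p$. The condition $p \ll 1$ only makes the savings large; it is not needed for the approximation itself.

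The main subtlety is modelling rather than algebra. If the latencies vary per frame, or if the LVLM cost depends on whether the gate fires (for example through batching in the asynchronous pipeline), then the second term should be $\mathbb{E}[g_t T_{\text{LVLM}}]$. This reduces to $p\,T_{\text{LVLM}}$ under independence, or if $T_{\text{LVLM}}$ is defined as the conditional mean cost given $g_t=1$. I would state that assumption explicitly so that the expectation step is justified.
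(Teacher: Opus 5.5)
Your proposal is correct and follows the paper's proof essentially verbatim: the paper also charges $T_{\text{det}}$ on every frame and $T_{\text{LVLM}}$ only when $g_t = 1$, applies linearity of expectation to get $T_{\text{det}} + p\,T_{\text{LVLM}}$, and computes the savings as $1 - p - T_{\text{det}}/T_{\text{LVLM}} \approx 1 - p$ using only $T_{\text{det}} \ll T_{\text{LVLM}}$. Your two extra remarks are sound refinements, not departures: $p \ll 1$ is not needed for the approximation (the paper implicitly agrees by applying it for $p$ between $0.11$ and $0.66$), and the step $\mathbb{E}[g_t T_{\text{LVLM}}] = p\,T_{\text{LVLM}}$ does rest on an independence assumption that the paper leaves unstated.
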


\begin{proof}
For each frame, the Dynamic Pathway executes:
(1) detection model $D(\cdot)$, always at cost 
$T_{\text{det}}$; and (2) LVLM $F_{\text{LVLM}}(\cdot)$, 
only when $g_t = 1$, at cost $T_{\text{LVLM}}$.

By linearity of expectation:
\begin{equation}
\mathbb{E}[T_{\text{gated}}]
= T_{\text{det}} + p \cdot T_{\text{LVLM}}
\label{eq:gated_time}
\end{equation}

The ungated baseline always runs the LVLM:
\begin{equation}
T_{\text{ungated}} = T_{\text{LVLM}}
\label{eq:ungated_time}
\end{equation}

The relative savings are:
\begin{equation}
\text{Savings}
= \frac{T_{\text{ungated}} - 
        \mathbb{E}[T_{\text{gated}}]}
       {T_{\text{ungated}}}
= 1 - p - \frac{T_{\text{det}}}{T_{\text{LVLM}}}
\;\approx\; 1 - p
\label{eq:savings}
\end{equation}
where the approximation holds when
$T_{\text{det}} \ll T_{\text{LVLM}}$.
In our experiments, $p$ is scene-dependent
(Table~\ref{tab:yolo_comparison}), ranging from
about $0.11$ in sparse scenes to $0.66$ in dense
traffic; since $T_{\text{det}} \ll T_{\text{LVLM}}$,
the resulting savings $1 - p -
T_{\text{det}}/T_{\text{LVLM}} \approx 1-p$ remain
substantial across this range.
\end{proof}

\subsection{Proof of Theorem 2 (Information Preservation 
under Gating)}

\begin{theorem}[Information Preservation under Gating]
Let $\mathcal{S}_{\text{critical}} \subseteq \mathcal{S}$ 
denote the set of safety-critical states, and let 
$\mu$ be a distribution over 
$\mathcal{S}_{\text{critical}}$. Assume:
\begin{enumerate}[label=(\roman*)]
  \item The detection model $D(\cdot)$ achieves recall 
        $\rho = P(g(s)=1 \mid s \in 
        \mathcal{S}_{\text{critical}})$ on 
        $\mathcal{S}_{\text{critical}}$;
  \item $R_{\text{LVLM}}(s) \geq 0$ for all 
        $s \in \mathcal{S}_{\text{critical}}$;
  \item Detection misses are not systematically 
        correlated with reward magnitude, i.e.,
        $\mathbb{E}_\mu[R_{\text{LVLM}} \mid g=1] 
        \geq 
        \mathbb{E}_\mu[R_{\text{LVLM}}]$.
\end{enumerate}
Let $g(s) \in \{0,1\}$ denote the gating indicator. 
Then:
\begin{equation}
\mathbb{E}_{s\sim\mu}
\!\left[g(s)\cdot R_{\text{LVLM}}(s)\right]
\;\geq\;
\rho \cdot
\mathbb{E}_{s\sim\mu}
\!\left[R_{\text{LVLM}}(s)\right]
\label{eq:info_preservation}
\end{equation}
\end{theorem}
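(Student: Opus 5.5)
The plan is to rewrite the left-hand side of Eq.~(\ref{eq:info_preservation}) by conditioning on the gate, then invoke assumptions (i) and (iii) in turn. Since $g(s)\in\{0,1\}$, the product $g(s)\cdot R_{\text{LVLM}}(s)$ vanishes whenever $g(s)=0$ and equals $R_{\text{LVLM}}(s)$ whenever $g(s)=1$. The law of total expectation, applied to the partition $\{g=1\}\cup\{g=0\}$ of $\mathcal{S}_{\text{critical}}$ under $\mu$, therefore gives
\begin{equation*}
\mathbb{E}_{s\sim\mu}\!\left[g(s)\, R_{\text{LVLM}}(s)\right]
= P_\mu(g=1)\,\mathbb{E}_\mu\!\left[R_{\text{LVLM}} \mid g=1\right]
+ P_\mu(g=0)\cdot 0 .
\end{equation*}
If $P_\mu(g=1)=0$, then $\rho=0$ and the claim reduces to $0\ge 0$. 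So I only need the case where the conditional expectation is well defined.

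Next I identify $P_\mu(g=1)$ with the recall $\rho$. By assumption (i), $\rho = P(g(s)=1\mid s\in\mathcal{S}_{\text{critical}})$. Because $\mu$ is supported on $\mathcal{S}_{\text{critical}}$, I read this probability as $P_\mu(g=1)$, with the recall measured with respect to the same distribution $\mu$. This identification is an interpretive step and I will state it explicitly. If one insisted that recall be defined under a different distribution, the bound would need a change-of-measure argument. Within the theorem's own framing, however, the natural reading is that both quantities are taken under $\mu$.

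Substituting $P_\mu(g=1)=\rho$ and then applying assumption (iii) gives
\begin{equation*}
\mathbb{E}_\mu\!\left[g\, R_{\text{LVLM}}\right] = \rho\,\mathbb{E}_\mu\!\left[R_{\text{LVLM}}\mid g=1\right] \ge \rho\,\mathbb{E}_\mu\!\left[R_{\text{LVLM}}\right].
\end{equation*}
The multiplication preserves the inequality because $\rho\ge 0$, which completes the bound. Assumption (ii), nonnegativity, is not needed for the inequality itself. It does two things: it guarantees the expectations are well defined (possibly $+\infty$, which is harmless under nonnegativity), and it gives the bound its intended meaning as a \emph{fraction} of the total reward mass preserved under gating. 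I will point this out in a closing remark.

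The main obstacle is not computational. It is making sure the identification $P_\mu(g=1)=\rho$ is legitimate, and noting that assumption (iii) carries essentially all of the substantive content. Without (iii), an adversarial detector that misses exactly the high-reward states could make the left side arbitrarily smaller than $\rho\,\mathbb{E}_\mu[R_{\text{LVLM}}]$. I will flag this to explain why (iii) is necessary rather than merely convenient.
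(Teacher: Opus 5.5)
Your proposal is correct and follows essentially the same route as the paper. The paper also conditions on $g$ by the law of total expectation, sets $P(g=1)=\rho$, and substitutes assumption (iii). Your separate handling of the degenerate case $P_\mu(g=1)=0$ and your explicit statement that recall must be measured under $\mu$ fill in two points the paper passes over without comment.
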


\begin{proof}
By the law of total expectation, conditioning on 
$g(s)$:
\begin{align}
\mathbb{E}_{s\sim\mu}
\!\left[g(s)\cdot R_{\text{LVLM}}(s)\right]
&= P(g=1)\cdot
   \mathbb{E}_\mu[R_{\text{LVLM}}\mid g=1]
+  P(g=0)\cdot 0
\notag\\
&= \rho \cdot
   \mathbb{E}_\mu[R_{\text{LVLM}}\mid g=1]
\label{eq:lte}
\end{align}
where the second term vanishes because $g(s)=0$ 
implies $g(s)\cdot R_{\text{LVLM}}(s) = 0$.

By assumption (iii):
\begin{equation}
\mathbb{E}_\mu[R_{\text{LVLM}}\mid g=1]
\;\geq\;
\mathbb{E}_{s\sim\mu}[R_{\text{LVLM}}(s)]
\label{eq:assumption3}
\end{equation}

Substituting Eq.~(\ref{eq:assumption3}) into 
Eq.~(\ref{eq:lte}):
\begin{equation}
\mathbb{E}_{s\sim\mu}
\!\left[g(s)\cdot R_{\text{LVLM}}(s)\right]
= \rho \cdot
  \mathbb{E}_\mu[R_{\text{LVLM}}\mid g=1]
\;\geq\;
\rho \cdot
\mathbb{E}_{s\sim\mu}
\!\left[R_{\text{LVLM}}(s)\right]
\label{eq:final_bound}
\end{equation}
which establishes the claimed inequality. 
\end{proof}

\begin{remark}
Assumption (iii) is mild in practice: it states that 
the detection model does not systematically fail on 
the highest-risk frames. In our implementation,
YOLOv8~\citep{yolov8} attains high recall on the
safety-critical classes, so missed detections are
primarily low-confidence borderline cases rather than
high-severity scenarios, supporting the validity of
this assumption. Assumptions (i)--(ii) are standard; 
(ii) holds because $R_{\text{LVLM}}$ is defined via 
cosine similarity against a positive goal 
$l_{\text{pos}}$, which is non-negative in the 
normalized CLIP embedding space when safety-critical 
states are present.
\end{remark}

\section{Hierarchical Reward Synthesis Proofs}
\label{appendices4}

\begin{theorem}[Policy Improvement Guarantee]
\label{thm:policy_improvement_appendix}
Let $\pi_k$ denote the policy at iteration $k$, and 
$\pi_{k+1}$ the updated policy obtained under the 
hierarchical reward $R_{\text{final}}$. Under standard 
assumptions of soft actor--critic learning, including 
bounded rewards, sufficient exploration, and stable 
function approximation, the policy update satisfies:
\begin{equation}
J(\pi_{k+1}) \geq J(\pi_k) - \epsilon_k
\label{eq:C1}
\end{equation}
where $J(\pi) = \mathbb{E}_{\pi}\!\left[\sum_{t=0}^{T} 
\gamma^t R_{\text{final}}(o_t)\right]$, and $\epsilon_k$ 
denotes a bounded approximation error that diminishes 
as training progresses.
\end{theorem}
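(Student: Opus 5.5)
The plan is to reduce the statement to the soft policy improvement lemma of SAC~\citep{haarnoja2018soft}, applied to the MDP with reward $R_{\text{final}}$, and then to account for the gap between the idealized (tabular, exact) update and the practical one through the error term $\epsilon_k$.

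\textbf{Step 1: bounded rewards.} First fix the reward. By Corollary~\ref{cor:bounded_final}, $R_{\text{final}}(o_t)\in[R_{\text{penalty}},1]$. Hence $r_{\max}:=\max(|R_{\text{penalty}}|,1)<\infty$. Since the asynchronous pipeline only uses annotated transitions (\texttt{ready}$=1$), the learner sees a fixed, deterministic, state-dependent reward.

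\textbf{Step 2: the soft Bellman operator is well behaved.} Under the standard SAC assumptions (bounded entropy term, e.g.\ a bounded action space $[-1,1]^2$), the soft Bellman backup $\mathcal{T}^\pi Q = R_{\text{final}} + \gamma\,\mathbb{E}[Q - \lambda\log\pi]$ is a $\gamma$-contraction in sup-norm. Soft policy evaluation therefore converges to a unique $Q^{\pi_k}$ with $\|Q^{\pi_k}\|_\infty \le (r_{\max}+\lambda H_{\max})/(1-\gamma)$. This is the only place the structure of $R_{\text{final}}$ enters, and boundedness suffices. The multiplicative and clipped form of $R_{\text{shaping}}$ is irrelevant beyond that.

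\textbf{Step 3: exact improvement and its perturbation.} Next, invoke the soft policy improvement lemma. Take $\pi_{\text{new}}=\arg\min_{\pi'} D_{\mathrm{KL}}\bigl(\pi'(\cdot|o)\,\|\,\exp(Q^{\pi_k}(o,\cdot)/\lambda)/Z\bigr)$. This gives $Q^{\pi_{\text{new}}}\ge Q^{\pi_k}$ pointwise, and hence $J_{\text{soft}}(\pi_{\text{new}})\ge J_{\text{soft}}(\pi_k)$.

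The practical update $\pi_{k+1}$ differs from this ideal in two ways: $Q_\theta$ approximates $Q^{\pi_k}$ with error $\delta_k=\|Q_\theta-Q^{\pi_k}\|_\infty$, and the KL projection onto the parametric family is inexact with residual $\eta_k$. A standard performance-difference argument then bounds the loss by $O\bigl((\delta_k+\eta_k)/(1-\gamma)\bigr)$.

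\textbf{Step 4: from the soft to the unregularized objective.} Finally, convert $J_{\text{soft}}$ to the unregularized $J$ in the statement. The entropy contribution differs between the two policies by at most $2\lambda H_{\max}/(1-\gamma)$, and this amount is absorbed into $\epsilon_k$. Because $\lambda$ is auto-tuned downward during training, this contribution also shrinks. Together these give
\[
\epsilon_k = \frac{C(\delta_k+\eta_k)+2\lambda_k H_{\max}}{1-\gamma}.
\]
This is bounded by Step 2 and diminishes under the stated stable-approximation assumption.

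\textbf{Main obstacle.} The hard part is making ``$\epsilon_k$ diminishes'' rigorous rather than assumed. It depends on function-approximation convergence, which is not provable in general. A further issue is reward staleness from the asynchronous pipeline, which makes the reward effectively nonstationary early in training. I would handle the latter by noting that, after $N_{\text{warmup}}$ annotated transitions, the learner samples only fully annotated transitions, so the reward is stationary thereafter. The residual approximation error would be treated explicitly as the assumption $\delta_k,\eta_k\to0$, as in~\citep{haarnoja2018soft}.
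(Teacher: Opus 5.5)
Your proposal is correct at the level of rigor the statement permits, and it uses the same overall reduction as the paper: bounded $R_{\text{final}}$ $\Rightarrow$ SAC policy improvement $\Rightarrow$ slack absorbed into $\epsilon_k$. The route through the SAC step is genuinely different, though.

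The paper re-derives boundedness component by component and then asserts that the SAC improvement theorem directly gives $J(\pi_{k+1})\ge J(\pi_k)-\epsilon_k$ once the reward is bounded and the function class is expressive. It characterizes $\epsilon_k$ statistically as $\mathcal{O}(\sqrt{\mathrm{Pdim}(\mathcal{F})/N_k})+\epsilon_{\text{approx}}$, so ``diminishes'' means the estimation term vanishes as $N_k\to\infty$ and leaves an irreducible bias.

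You instead open the SAC theorem up: soft evaluation by contraction, the exact soft improvement lemma, and a performance-difference perturbation that charges the critic error $\delta_k$ and the actor residual $\eta_k$ at order $(\delta_k+\eta_k)/(1-\gamma)$. Your Step 4, which converts the entropy-regularized objective into the unregularized $J$ of the statement, has no counterpart in the paper. The paper works with the soft SAC objective but draws its conclusion for $J$ without entropy, so your route closes a step the paper skips.

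The paper's version offers a concrete mechanism (sample size) for the decay of $\epsilon_k$, although its bound controls an $L_2(\mu)$ error while the improvement step needs control under the new policy's state distribution. Yours gives a transparent accounting of where each error enters, at the price of assuming $\delta_k,\eta_k\to 0$ outright.

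Three details need repair.

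\textbf{The entropy term.} The claim that the entropy term shrinks ``because $\lambda$ is auto-tuned downward'' is unfounded. Temperature adjustment drives the policy entropy toward a fixed target, and $\lambda_k$ settles at whatever multiplier enforces it, generally not zero. So $2\lambda_k H_{\max}/(1-\gamma)$ need not vanish. Bound the gap instead by $\lambda_k$ times the difference of the discounted entropies of $\pi_{k+1}$ and $\pi_k$, which vanishes as the iterates stabilize. Alternatively, fold it into an irreducible floor, as the paper does with $\epsilon_{\text{approx}}$. Note also that a bounded action box bounds differential entropy only from above, so $H_{\max}$ needs an extra assumption such as a clipped log-standard-deviation.

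\textbf{The reward is not a fixed function of $o_t$.} Through $l_t^{\text{dyn}}=F_{\text{LVLM}}(\mathcal{W}_t,\mathcal{O}_t)$, $R_{\text{final}}$ depends on the previous $K$ camera frames, and possibly on stochastic LVLM decoding. Your Step 2 only needs $r(s,a)=\mathbb{E}[R_{\text{final}}\mid s,a]$ to be well defined and bounded for a Markov state. State that assumption explicitly, for example by augmenting the state with the frame window. The paper's claim that rewards depend only on $o_t$ has the same blind spot.

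\textbf{Asynchrony.} It does not make the labels nonstationary: each label is computed once by frozen models, and the learner trains on annotated transitions. Asynchrony only skews the sampling distribution, and that effect belongs in $\delta_k$.
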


\begin{proof}
We verify that the hierarchical reward $R_{\text{final}}$ 
satisfies all conditions required for the SAC policy 
improvement theorem~\citep{haarnoja2018soft} to apply.

\paragraph{Step 1: Boundedness of $R_{\text{final}}$.}
We establish the reward bound by tracing through the 
hierarchical construction.

First, by Lemma~\ref{lemma:Boundedness}, the static 
reward satisfies $R_{\text{static}}(o_t) \in [-1, 1]$.

Second, for the dynamic reward (Definition~5), since 
$g_t \in \{0,1\}$ and the bracketed term 
$\alpha \cdot \mathrm{sim}(f_I(o_t^{\text{cam}}), 
f_L(l_{\text{pos}})) - \beta \cdot 
\mathrm{sim}(f_I(o_t^{\text{cam}}), f_L(l_t^{\text{dyn}}))
\in [-1, 1]$ by the same cosine similarity argument 
as Lemma~\ref{lemma:Boundedness}, we have:
\begin{equation}
R_{\text{dynamic}}(o_t) 
= g_t \cdot \bigl[\alpha\cdot\mathrm{sim}(
  f_I(o_t^{\text{cam}}), f_L(l_{\text{pos}}))
- \beta\cdot\mathrm{sim}(
  f_I(o_t^{\text{cam}}), f_L(l_t^{\text{dyn}}))
\bigr] \in [-1, 1]
\label{eq:C_dyn_bound}
\end{equation}
with $R_{\text{dynamic}} = 0$ when $g_t = 0$.

Third, since $R_{\text{combined}} = R_{\text{static}} + 
R_{\text{dynamic}} \in [-2, 2]$, the clipping and 
normalization in Eq.~(\ref{eq9}) map this to 
$R_{\text{norm}}(o_t) \in [0, 1]$ by construction of 
the $\mathrm{clip}$ operator.

Fourth, by Corollary~\ref{cor:bounded_final}, each 
factor $f_{\text{speed}}, f_{\text{center}}, 
f_{\text{angle}}, f_{\text{stability}} \in [0,1]$, 
so their product satisfies 
$R_{\text{shaping}}(o_t) \in [0, 1]$.

Therefore, the final reward is bounded:
\begin{equation}
R_{\text{final}}(o_t) \in [R_{\text{penalty}},\ 1],
\quad
|R_{\text{final}}(o_t)| \leq 
R_{\max} \triangleq \max(|R_{\text{penalty}}|,\ 1)
\label{eq:C2}
\end{equation}

\paragraph{Step 2: Compatibility with SAC Policy 
Improvement.}
The SAC algorithm~\citep{haarnoja2018soft} optimizes 
the maximum-entropy objective:
\begin{equation}
J(\pi_\phi) = \mathbb{E}_{\pi_\phi}\!\left[
\sum_{t=0}^{T} \gamma^t 
\left(R(o_t, a_t) + \lambda\,
\mathcal{H}(\pi_\phi(\cdot \mid o_t))\right)
\right]
\label{eq:C3}
\end{equation}
where $\lambda > 0$ is the entropy regularization 
coefficient. The SAC policy improvement theorem 
guarantees that each policy update satisfies 
$J(\pi_{k+1}) \geq J(\pi_k) - \epsilon_k$ provided 
that: (a) the reward function is bounded, and (b) the 
policy and Q-function lie within a sufficiently 
expressive function approximation class.

Condition (a) is satisfied by Eq.~(\ref{eq:C2}). 
Condition (b) is a standard assumption on the neural 
network architecture, which we adopt here.

\paragraph{Step 3: Approximation Error 
Characterization.}
In practice, neural function approximation introduces 
estimation error. Let $\mathcal{F}$ denote the 
function class of the critic network with 
pseudo-dimension $\mathrm{Pdim}(\mathcal{F})$. 
Following standard analyses in approximate dynamic 
programming~\citep{farahmand2010error}, the 
per-iteration approximation error can be bounded as:
\begin{equation}
\epsilon_k = 
\mathcal{O}\!\left(
\sqrt{\frac{\mathrm{Pdim}(\mathcal{F})}{N_k}}
\right) + \epsilon_{\text{approx}}
\label{eq:C4}
\end{equation}
where $N_k$ is the number of transitions sampled at 
iteration $k$ and $\epsilon_{\text{approx}}$ is the 
irreducible approximation error of the critic class. 
As training proceeds and $N_k \to \infty$, the first 
term vanishes, leaving only the approximation bias 
$\epsilon_{\text{approx}}$, which is bounded by the 
expressiveness of the chosen network architecture.

\paragraph{Step 4: Conclusion.}
Since $R_{\text{final}}$ is bounded 
(Eq.~\ref{eq:C2}), preserves the POMDP structure 
(rewards depend only on observations $o_t$), and 
the SAC conditions are satisfied, the policy 
improvement bound in Eq.~(\ref{eq:C1}) holds for 
all $k$. The sequence $\{\pi_k\}$ therefore converges 
to a stable fixed point with bounded suboptimality 
$\epsilon_k$ under the hierarchical reward 
$R_{\text{final}}$.
\end{proof}

\begin{remark}
The hierarchical structure of $R_{\text{final}}$ 
does not interfere with convergence for three reasons:
(i) all component rewards remain bounded 
(Corollary~\ref{cor:bounded_final}), satisfying 
the prerequisite of the SAC improvement theorem;
(ii) the shaping reward $R_{\text{shaping}}$ is 
observation-dependent only, preserving the underlying 
POMDP structure and ensuring that the Bellman 
operator remains a contraction; and
(iii) the asynchronous reward computation 
(Section~\ref{Framework: DriveVLM-RL}) introduces 
bounded reward staleness controlled by 
$N_{\text{warmup}}$, which affects convergence 
speed but not the validity of the improvement bound, 
since the Learner Thread preferentially samples 
reward-annotated transitions as described in 
Section~3.5.2.
\end{remark}

\clearpage
\section{Training Procedure with Asynchronous 
Batch-Processing}
\label{appendices5}

\begin{algorithm}[H]
\caption{DriveVLM-RL Training with Asynchronous 
Reward Synthesis}
\label{alg:drivevlm-rl}
\begin{algorithmic}[1]

\Require
Policy parameters $\phi$,
Q-function parameters $\theta$,
target parameters $\theta^-$,
replay buffer $\mathcal{D}$,
batch size $B$,
CLIP encoders $f_I,\, f_L$,
generative LVLM $F_{\text{LVLM}}$,
detection model $D$,
language goals $(l_{\text{pos}}, l_{\text{neg}})$,
safety-critical classes $\mathcal{C}_{\text{critical}}$,
CLG weighting factors $\alpha, \beta$ with 
$\alpha + \beta = 1$,
reward bounds $\theta_{\min}, \theta_{\max}$,
maximum speed $v_{\max}$,
update interval $\Delta$,
warmup threshold $N_{\text{warmup}}$,
entropy coefficient $\lambda$,
target network smoothing factor $\tau$

\State \textbf{Precompute language embeddings:}
\State $\mathbf{v}_{\text{pos}} \leftarrow 
f_L(l_{\text{pos}})$,\quad
$\mathbf{v}_{\text{neg}} \leftarrow 
f_L(l_{\text{neg}})$

\State $N_{\text{ready}} \leftarrow 0$
\Comment{Counter for reward-annotated transitions}

\For{$t = 1, 2, \ldots, T$}

    \State \textbf{// Interaction Thread}
    \State Observe $o_t = (o_t^{\text{BEV}},\, 
    o_t^{\text{cam}})$ from environment
    \State Select action $a_t \sim 
    \pi_\phi(\cdot \mid o_t)$, execute in environment,
    observe $o_{t+1}$
    \State Store transition 
    $(o_t,\, a_t,\, o_{t+1},\, 
    r_t \leftarrow \texttt{NaN},\, 
    \texttt{ready} \leftarrow 0)$ 
    in $\mathcal{D}$

    \If{$t \bmod \Delta = 0$}
        \State \textbf{// Reward Worker Thread}
        \State Sample mini-batch 
        $\{(o_i, a_i, o_{i+1})\}_{i=1}^{B}$ 
        from $\mathcal{D}$ where $\texttt{ready}=0$

        \For{each transition $i$ in mini-batch}

            \State \textbf{--- Static Pathway ---}
            \State $\mathbf{v}_i^{\text{BEV}} 
            \leftarrow f_I(o_i^{\text{BEV}})$
            \State $R_{\text{static}} \leftarrow
            \alpha \cdot \mathrm{sim}(
            \mathbf{v}_i^{\text{BEV}},\, 
            \mathbf{v}_{\text{pos}})
            - \beta \cdot \mathrm{sim}(
            \mathbf{v}_i^{\text{BEV}},\, 
            \mathbf{v}_{\text{neg}})$

            \State \textbf{--- Dynamic Pathway ---}
            \State $\mathcal{O}_i \leftarrow 
            D(o_i^{\text{cam}})$
            \If{$\exists\, o \in \mathcal{O}_i$ 
            s.t. $\mathrm{cls}(o) \in 
            \mathcal{C}_{\text{critical}}$}
                \State Construct temporal window 
                $\mathcal{W}_i = \{o_{i-K}^{\text{cam}},
                \ldots, o_i^{\text{cam}}\}$
                \State $l_i^{\text{dyn}} \leftarrow 
                F_{\text{LVLM}}(\mathcal{W}_i,\, 
                \mathcal{O}_i)$
                \Comment{Generate risk description}
                \State $\mathbf{v}_i^{\text{cam}} 
                \leftarrow f_I(o_i^{\text{cam}})$
                \State $R_{\text{dynamic}} \leftarrow
                \alpha \cdot \mathrm{sim}(
                \mathbf{v}_i^{\text{cam}},\, 
                \mathbf{v}_{\text{pos}})
                - \beta \cdot \mathrm{sim}(
                \mathbf{v}_i^{\text{cam}},\, 
                f_L(l_i^{\text{dyn}}))$
            \Else
                \State $R_{\text{dynamic}} \leftarrow 0$
            \EndIf

            \State \textbf{--- Hierarchical Reward 
            Synthesis ---}
            \State $R_{\text{combined}} \leftarrow 
            R_{\text{static}} + R_{\text{dynamic}}$
            \State $R_{\text{norm}} \leftarrow
            \dfrac{\mathrm{clip}(R_{\text{combined}},\, 
            \theta_{\min},\, \theta_{\max}) 
            - \theta_{\min}}
            {\theta_{\max} - \theta_{\min}}$

            \State $v_{\text{actual}} \leftarrow 
            \mathrm{speed}(o_i)$,\quad
            $v_{\text{desired}} \leftarrow 
            R_{\text{norm}} \cdot v_{\max}$
            \State $f_{\text{speed}} \leftarrow 
            \max\!\left(0,\; 1 - 
            \dfrac{|v_{\text{actual}} - 
            v_{\text{desired}}|}{v_{\max}}\right)$
            \Comment{Clipped to $[0,1]$}

            \State $R_{\text{shaping}} \leftarrow
            f_{\text{speed}} \cdot 
            f_{\text{center}}(o_i) \cdot
            f_{\text{angle}}(o_i) \cdot
            f_{\text{stability}}(o_i)$

            \State $R_{\text{final}} \leftarrow
            \begin{cases}
                R_{\text{penalty}}, 
                & \text{if collision at step } i \\
                R_{\text{shaping}}, 
                & \text{otherwise}
            \end{cases}$

            \State Update $r_i \leftarrow 
            R_{\text{final}}$,\;
            $\texttt{ready} \leftarrow 1$ in 
            $\mathcal{D}$
            \State $N_{\text{ready}} \leftarrow 
            N_{\text{ready}} + 1$

        \EndFor
    \EndIf

    \State \textbf{// Learner Thread}
    \If{$N_{\text{ready}} \geq N_{\text{warmup}}$}
        \Comment{Wait until sufficient annotated data}
        \State Sample mini-batch from $\mathcal{D}$ 
        with $\texttt{ready}=1$
        \State Update critic: minimize 
        $J'_Q(\theta)$ from Eq.~(\ref{eq15})
        \State Update actor: maximize SAC objective 
        $J(\pi_\phi)$ from Eq.~(\ref{eq13})
        \State Soft update target networks: 
        $\theta^- \leftarrow (1-\tau)\,\theta^- 
        + \tau\,\theta$
    \EndIf

\EndFor

\State \Return learned policy $\pi_\phi$
\end{algorithmic}
\end{algorithm}

\clearpage
\section{Reference Vocabulary for Dynamic Language Goal Generation}
\label{appendices6}

To constrain the output space of Qwen3-VL and ensure CLIP-compatible semantic embeddings, we provide the LVLM with the following 10 canonical scene descriptions, covering the primary safety-relevant actor types encountered in urban driving:

\begin{enumerate}[leftmargin=2.4em,itemsep=1.5pt,topsep=2pt,parsep=0pt]
    \item ``A cyclist is directly ahead on the road''
    \item ``A cyclist is crossing the road ahead''
    \item ``A pedestrian is directly ahead on the road''
    \item ``A pedestrian is crossing the road ahead''
    \item ``A motorcycle is directly ahead on the road''
    \item ``A motorcycle is crossing the road ahead''
    \item ``A construction zone is ahead''
    \item ``An obstacle is on the road''
    \item ``An animal is on the road''
    \item ``The road ahead is clear with no obstacles''
\end{enumerate}

\noindent\textbf{Matching Procedure.}
At each inference step, the LVLM receives the vocabulary embedded in a structured prompt and is instructed to output \texttt{REASONING: [brief explanation]} followed by \texttt{SELECTION: [copy exactly one description from the list]}. The system locates the \texttt{SELECTION:} field and performs exact substring matching against the vocabulary; if no match is found there, the full response is scanned for any vocabulary item. If the LVLM indicates that all detected objects are beyond 50~meters or that the scene is clear, description~10 is returned as a fallback. The matched description is then passed to the CLIP text encoder to compute the dynamic reward $r_t^{\text{dyn}}$ via cosine similarity with the corresponding image embedding. Fig.~\ref{fig:vocab_reasoning} illustrates this procedure on a real-world stop-sign interaction, where the LVLM reasons over the selected keyframes and outputs ``A pedestrian is crossing the road ahead'' as the dynamic language goal.

\begin{figure}[!ht]
\centering
\includegraphics[width=\linewidth]{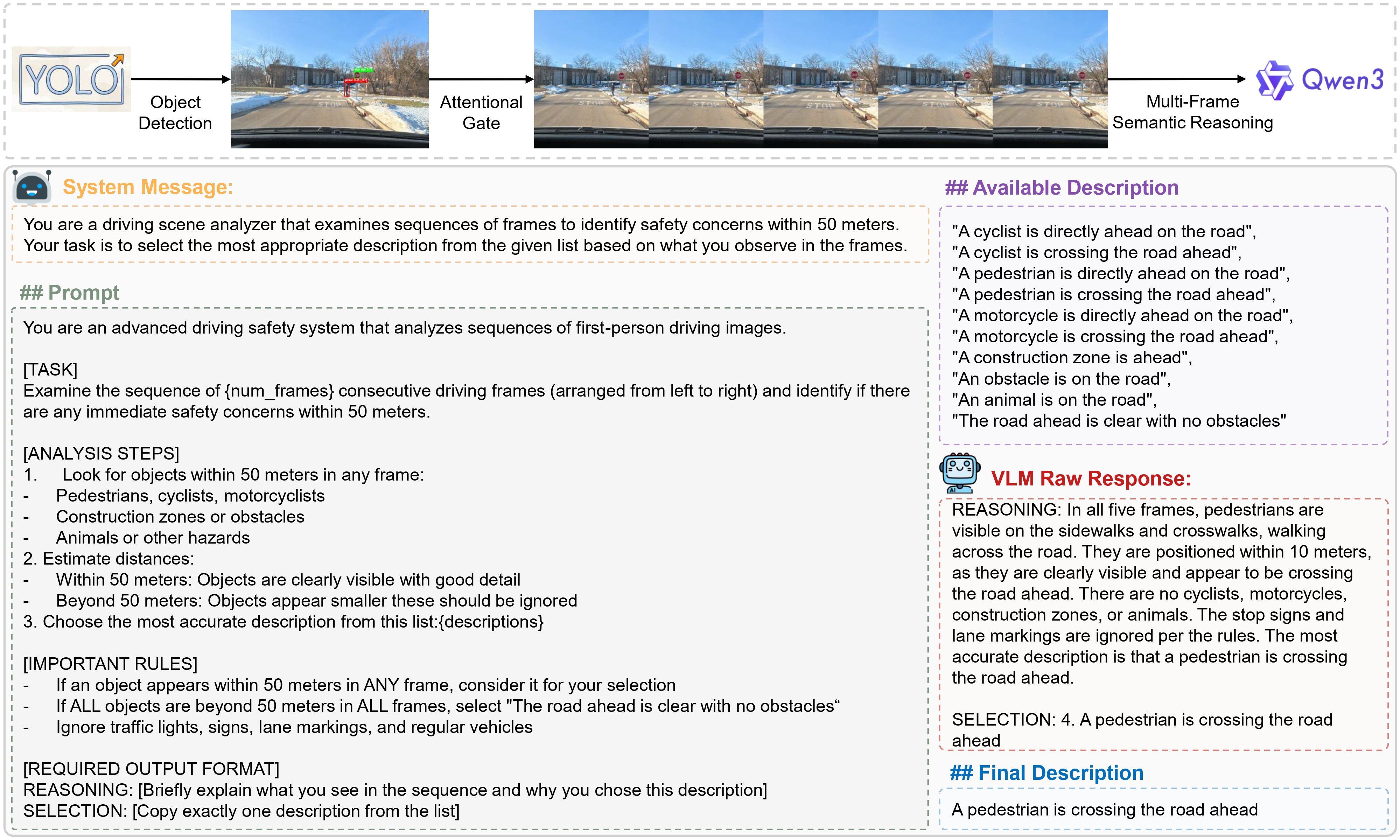}
\caption{LVLM semantic reasoning with the reference vocabulary on a \emph{real-world} stop-sign interaction (DriveVLM-RL). YOLO object detection and the attentional gate select keyframes that are passed to the Qwen3-VL LVLM for multi-frame semantic reasoning. The structured prompt embeds the ten canonical descriptions (``Available Description''); the LVLM returns step-by-step reasoning over the sequence and selects a single item (``A pedestrian is crossing the road ahead''), which becomes the dynamic language goal and is embedded by CLIP to compute the dynamic reward.}
\label{fig:vocab_reasoning}
\end{figure}

\clearpage
\bibliography{mybibfile}
	
\end{document}